\pdfoutput=1
\RequirePackage{fix-cm}
\documentclass[letterpaper]{article}
\usepackage{iclr2027_conference,times}
\usepackage[T1]{fontenc}
\usepackage[utf8]{inputenc}
\usepackage{amsmath,amssymb,amsthm}
\usepackage{graphicx,booktabs,multirow,array,longtable}
\usepackage{microtype}
\usepackage[table]{xcolor}
\usepackage{enumitem}
\usepackage{capt-of}
\usepackage{wrapfig}
\usepackage{needspace}
\usepackage{algorithm}
\usepackage{algpseudocode}
\usepackage{xurl}
\usepackage{hyperref}
\usepackage{etoc}
\makeatletter
\g@addto@macro\normalsize{%
  \setlength{\abovedisplayskip}{5pt plus 1pt minus 2pt}%
  \setlength{\belowdisplayskip}{5pt plus 1pt minus 2pt}%
  \setlength{\abovedisplayshortskip}{2pt plus 1pt}%
  \setlength{\belowdisplayshortskip}{3pt plus 1pt minus 1pt}}
\makeatother

\hypersetup{pdftitle={CoViST: Visual Token Compression via Composable States},
  pdfauthor={Qi Zhang, Xiandong Meng, Ronggang Wang, Siwei Ma},pdfsubject={Preprint},
  colorlinks=true,linkcolor=blue,citecolor=blue,urlcolor=blue}
\newcommand{\method}{CoViST}
\definecolor{covistrowcolor}{HTML}{F8E6DA}
\newcommand{\covistrow}{\rowcolor{covistrowcolor}}
\makeatletter
\newcommand{\covistrowfull}{%
  \rlap{\color{covistrowcolor}%
    \rule[-\dp\@arstrutbox]{\linewidth}%
         {\dimexpr\ht\@arstrutbox+\dp\@arstrutbox\relax}}%
}
\makeatother
\newlength{\budgetlabw}
\newcommand{\tsup}[1]{\ensuremath{^{\scalebox{0.8}{$\scriptstyle\mathrm{#1}$}}}}
\newcommand{\budgetlabel}[3][0pt]{\multirow{-#2}{*}[#1]{\rotatebox[origin=c]{90}{\itshape #3}}}
\newcommand{\budgetlabelii}[4][0pt]{\multirow{-#2}{*}[#1]{\rotatebox[origin=c]{90}{\itshape\thickmuskip=1mu\shortstack{#3\\#4}}}}
\makeatletter
\newcommand{\covistrowskip}{%
  \hspace*{\dimexpr\budgetlabw+3pt\relax}%
  \rlap{\color{covistrowcolor}%
    \rule[-\dp\@arstrutbox]{\dimexpr\linewidth-\budgetlabw-3pt\relax}%
         {\dimexpr\ht\@arstrutbox+\dp\@arstrutbox\relax}}%
  \hspace*{-\dimexpr\budgetlabw+3pt\relax}}
\makeatother
\newcommand{\fixed}{\method-Fixed}
\newcommand{\pro}{\method-Pro}
\newcommand{\methodvenue}[2]{#1~{\normalfont\fontsize{7}{7.5}\selectfont(#2)}}
\newcommand{\longmethodvenue}[2]{\begin{tabular}[c]{@{}l@{}}#1\\[-1pt]{\normalfont\fontsize{7}{7.5}\selectfont(#2)}\end{tabular}}
\newcommand{\R}{\mathbb{R}}
\newcommand{\E}{\mathbb{E}}
\newcommand{\norm}[1]{\left\lVert #1\right\rVert}

\newcommand{\clip}{\operatorname{clip}}
\newcommand{\softmax}{\operatorname{softmax}}

\newcommand{\argmax}{\operatorname*{arg\,max}}

\newcommand{\MassNorm}{\operatorname{MassNorm}}
\newtheorem{proposition}{Proposition}
\newtheorem{lemma}{Lemma}

\title{CoViST: Visual Token Compression\\via Composable States}
\iclrfinalcopy
\author{\begin{tabular}[t]{@{}l@{\hspace{0.9in}}l@{}}
Qi Zhang & Xiandong Meng\\
Peng Cheng Laboratory & Peng Cheng Laboratory\\
Shenzhen, China & Shenzhen, China\\
\texttt{qizhang@alumni.pku.edu.cn} & \texttt{mengxd@pcl.ac.cn}\\[8pt]
Ronggang Wang\footnotemark[1] & Siwei Ma\thanks{Corresponding authors: Siwei Ma and Ronggang Wang.}\\
Peking University Shenzhen Graduate School & Peking University\\
Peng Cheng Laboratory & Beijing, China\\
Shenzhen, China & \texttt{swma@pku.edu.cn}\\
\texttt{rgwang@pkusz.edu.cn} &
\end{tabular}}
\begin{document}
\maketitle
\lhead{Preprint. Under review.}
\etocdepthtag.toc{mainmatter}
\begin{abstract}
Visual token compression lowers the inference cost of vision--language models by representing images with fewer tokens. However, most existing methods compress visual tokens to a reduced set, leaving the amount of visual evidence represented by each token and its original spatial context implicit. Therefore, the compressed representation does not explicitly encode how much visual information each representative carries or where it lies in the original image. This limitation arises even after a single reduction and becomes more pronounced when compression is repeated across decoder layers. To address this issue, we propose CoViST, a training-free framework that represents a compressed image as a composable visual state. Specifically, the state combines representative features with original positions, effective contribution weights, and reusable selection metadata. CoViST constructs this state through coverage-guided selection and conservation-based contribution composition, and explicitly incorporates its contribution and positional information into decoder attention. Each component of the state retains its interpretation under successive reductions, enabling the same formulation to support both fixed compression before prefill and progressive compression within the decoder. Experimental results on seven LLaVA-1.5-7B benchmarks show that CoViST-Fixed retains 99.9\%, 99.5\%, and 98.1\% of uncompressed performance at 192, 128, and 64 tokens, respectively, and CoViST-Pro retains 99.8\%, 99.9\%, and 99.1\% at the corresponding layer-average budgets, outperforming state-of-the-art methods under their respective budget settings. Code will be released publicly.
\end{abstract}

\section{Introduction}
\label{sec:intro}

Vision--language models (VLMs) encode an image into many visual tokens, which every decoder layer processes and the key--value (KV) cache retains throughout generation \citep{llava2024,llavanext2024,qwen2025vl}, so the visual sequence accounts for a large share of the inference cost. Visual token compression, which represents an image with fewer tokens, is therefore widely studied. Existing training-free methods select or aggregate visual tokens according to encoder attention, feature diversity, instruction relevance, or coverage \citep{yang2025visionzip,alvar2025divprune,zheng2025cdpruner,mmtok2026}, and apply the reduction either once, before or within the decoder, or repeatedly at several decoder layers \citep{chen2024fastv,xing2025pyramiddrop,visiontrim2026}.

However, despite the diversity of their selection criteria, most existing methods only compress the visual tokens to a reduced set. Each retained token is read by the decoder as one patch at one position, and methods that merge or reweight tokens fix the resulting count or weight at the moment of compression \citep{bolya2023tome,cho2026restore}. Therefore, the amount of visual evidence represented by each retained token and its original spatial context are left implicit, and the decoder processes the compressed representation without accounting for how the visual information has been redistributed. This limitation is present after a single compression and becomes more pronounced under progressive compression, where a later stage operates on representations that already aggregate evidence from multiple original tokens without any record of what the earlier stages have established.

\begin{figure}[t]
  \centering
  \includegraphics[width=\linewidth]{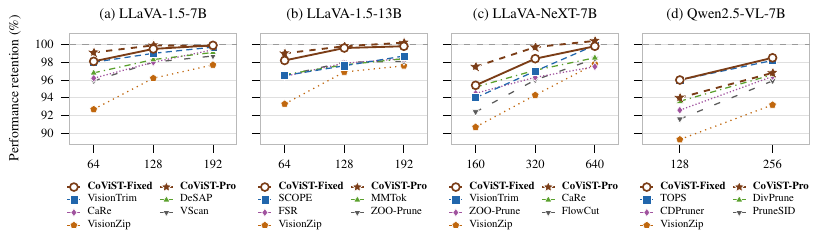}
  \caption{\method\ achieves state-of-the-art visual token compression performance on four VLMs.}
  \label{fig:retention}
\end{figure}

In this paper, we make this redistribution explicit by representing an image as a \emph{composable visual state}. Besides the representative features, the state retains their original positions, effective contribution weights, and reusable selection metadata. A key property of this representation is \emph{closure under reduction}: applying a further reduction to an existing state produces another state of the same form, with each component preserving its interpretation. Therefore, reductions can be applied repeatedly at different decoder depths. Moreover, the state is explicitly integrated into decoder attention, where the contribution weights and original positions modulate the attention logits through a bias derived from the frozen model. The same composable-state formulation can therefore support fixed compression before prefill and progressive compression within the decoder.

Based on this formulation, we propose \method\ (\textbf{Co}mposable \textbf{Vi}sual \textbf{S}tates for \textbf{T}oken compression), a training-free visual token compression framework consisting of three modules, as illustrated in Figure~\ref{fig:overview}. Specifically, \emph{Coverage-Guided State Initialization} (CSI) first selects representative tokens to cover visual and instruction-relevant evidence. Then, \emph{Coupled State Composition} (CSC) transfers the contribution of removed tokens to their representatives under a per-image conservation constraint, inherits the resulting weights across reductions, and applies a confidence-gated update to their features. Finally, \emph{State-Conditioned Attention} (SCA) incorporates the contribution weights and original positions into decoder attention. The same framework supports two execution strategies: \fixed\ compresses the visual state to exactly $K$ tokens before prefill, while \pro\ progressively compresses the state within the decoder. Our contributions can be summarized as follows:

\begin{itemize}[leftmargin=*,itemsep=1pt,topsep=2pt,parsep=0pt]
  \item We formulate visual token compression as the construction of \emph{composable visual states}, which carry representative content, original positions, effective contribution weights, and selection metadata across reductions and provide a unified basis for fixed and progressive compression.
  \item We propose \method, a training-free framework that constructs and maintains such states through coverage-guided initialization, conservation-based contribution transfer with cross-stage inheritance, and state-conditioned attention.
  \item Extensive experiments show that \fixed\ and \pro\ attain the highest retention among the compared methods on LLaVA-1.5-7B under their respective budget conventions, accelerate VLM inference, and generalize to LLaVA-1.5-13B, LLaVA-NeXT-7B, and Qwen2.5-VL-7B.
\end{itemize}

\begin{figure}[t]
  \centering
  \includegraphics[width=\linewidth]{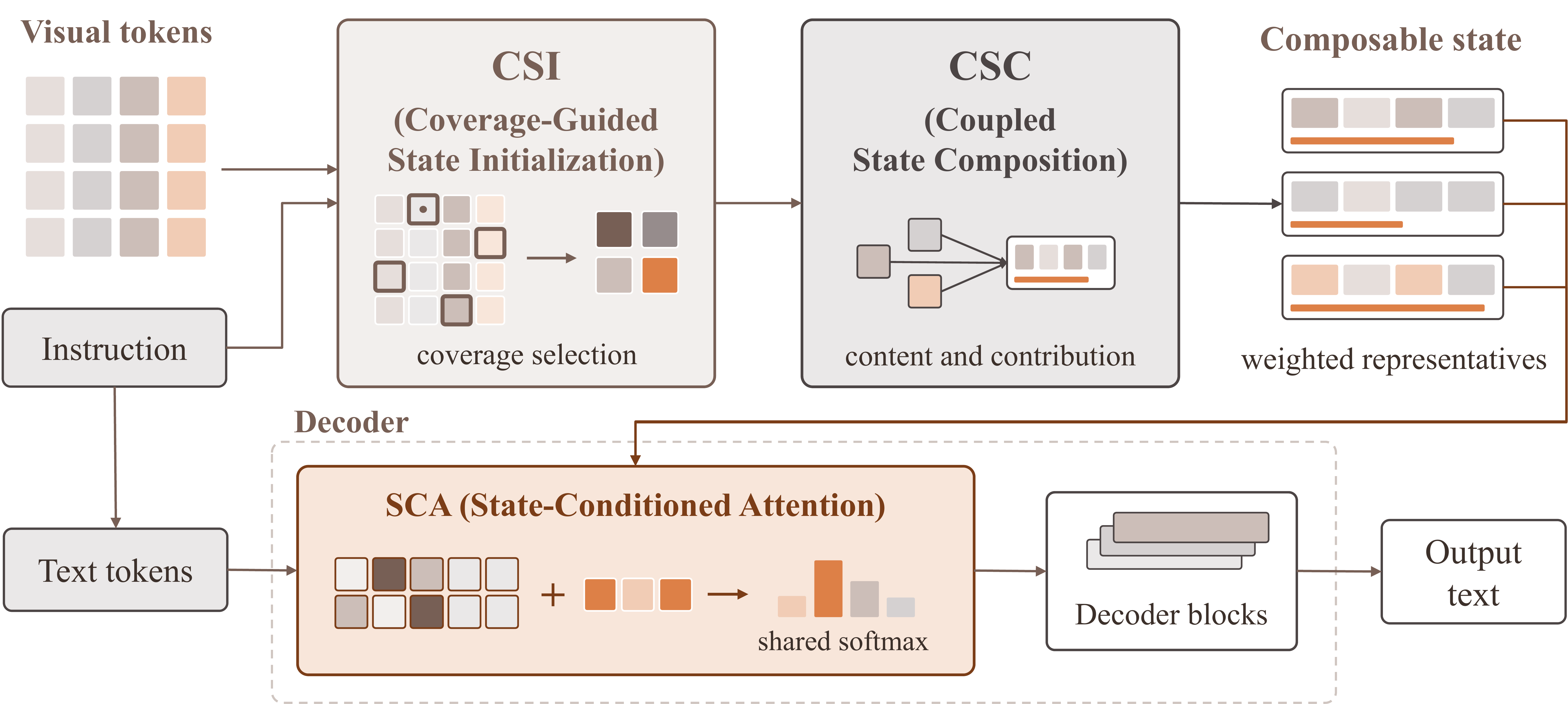}
  \caption{Overview of \method. The composable visual state is initialized by coverage-guided selection, reduced by a form-preserving composition rule, and incorporated into decoder attention through a state-conditioned bias. Fixed and progressive compression share this rule.}
  \label{fig:overview}
\end{figure}

\section{Related Work}
\label{sec:related}

\textbf{Visual token compression.} Existing training-free methods can be categorized by where the reduction is applied. Vision-side methods compress the tokens before the language model and measure the importance of a token by the attention of the [CLS] token \citep{yang2025visionzip,zhang2025vispruner}, by its similarity to the instruction \citep{wang2026eadp,oh2026anchorprune}, or by its contribution to the diversity or coverage of the retained set \citep{alvar2025divprune,zheng2025cdpruner,scope2025,mmtok2026}. LLM-side methods rely on the text-to-visual attention inside the decoder and discard tokens at one layer \citep{chen2024fastv} or progressively across several layers \citep{zhang2025sparsevlm,xing2025pyramiddrop}. Hybrid methods reduce the tokens at both locations \citep{visiontrim2026,zhang2026vscan}, and training-based methods learn compact visual representations \citep{cai2025matryoshka,zhang2025llavamini,hidrop2026}. However, these methods pass a reduced set of tokens to the decoder without recording how many original tokens each retained token represents, and each later stage reselects from the current features. \method\ also initializes its state with a coverage criterion of this family. Different from these methods, it records what each retained token represents and carries this record across reductions.

\textbf{Compensation for removed tokens.} A second line of work restores the information of the removed tokens in three forms. The first is the content: token merging folds the features of removed tokens into their representatives \citep{bolya2023tome,shang2025prumerge,yang2025visionzip}, refined by confidence-controlled correction \citep{li2026care} or statistical calibration of the averaged features \citep{zhou2026calibrated}. The second is the attention contribution: ToMe scales the attention logits by the number of merged tokens \citep{bolya2023tome}, ERA replaces this count by a saliency-weighted one \citep{wang2026era}, and Ada-KV formulates the loss of evicting a token as the change of the attention output \citep{feng2025adakv}. The third is the position: RESTORE retains the original positions and calibrates the attention with a distance factor \citep{cho2026restore}, and HiDrop keeps the original position identifiers \citep{hidrop2026}. However, the three quantities are treated separately, the weight attached to a representative counts members or saliency and is shared across all heads, and no weight is carried into a later reduction within the decoder. In contrast, \method\ treats the three quantities as components of one state, whose evidence-conserved contribution weights enter a per-head attention bias and are inherited across reductions (Appendix~\ref{app:design}).

\section{Method}
\label{sec:method}

\subsection{Composable visual states}
\label{sec:formulation}

Let $X\in\R^{N\times d}$ denote the projected visual embeddings of an image, which are processed by an $L$-layer decoder together with the instruction embeddings. At depth $\ell$, we represent the compressed visual component as a state
\begin{equation}
  S_\ell=(H_\ell,P_\ell,m_\ell,G_\ell),
  \qquad H_\ell\in\R^{K_\ell\times d},\quad m_\ell\in\R_{+}^{K_\ell},
  \label{eq:state}
\end{equation}
where $H_\ell$ contains the current representative features, $P_\ell$ their original positions, $m_\ell$ their effective contribution weights, and $G_\ell$ the selection metadata (the pairwise similarities and the surviving anchors). A reduction to $K<K_\ell$ representatives consists of a selection and a composition. The selection produces a representative set $A\subset\{1,\ldots,K_\ell\}$ with $|A|=K$ and a non-negative evidence vector $e\in\R_{\geq0}^{K_\ell}$ over the current representatives. The selection may consult signals outside the state, such as the encoder signals, the instruction, and the instruction states at the current depth. The composition then maps the state to $S'=\mathcal C_K(S_\ell;A,e)$. We call such a representation \emph{composable} if it satisfies two conditions. (i) \emph{Closure under reduction}: for every such $A$ and $e$, $\mathcal C_K(S_\ell;A,e)$ is a state with $K$ representatives of the same form, with each component preserving its interpretation, and, except for $A$ and $e$, the composition consults $S_\ell$ alone, so that no discarded token is retrieved and a further reduction applies to the result for any $K'<K$. (ii) \emph{Explicit incorporation into attention}: the decoder attention incorporates $m_\ell$ and $P_\ell$ through an explicit rule derived from the frozen model. 

\method\ constructs and uses such a representation in three steps. First, CSI initializes the state from uncompressed embeddings (Section~\ref{sec:selection}). Then, CSC performs the composition at every reduction (Section~\ref{sec:update}). Finally, SCA incorporates the contribution weights and original positions into every decoder layer (Section~\ref{sec:attention}). Under the progressive strategy, internal reductions at intermediate depths select fewer representatives from the current state, and CSC composes the state again (Section~\ref{sec:policies}).

\subsection{Coverage-Guided State Initialization}
\label{sec:selection}

\textbf{Affinity and evidence.} Let $c_i$ be a normalized key of the CLIP encoder \citep{radford2021clip} and $x_i$ a normalized projected embedding. We measure the affinity between two tokens in both spaces as
\begin{equation}
  s_{ij}=\tfrac12\clip(c_i^\top c_j,0,1)+\tfrac12\clip(x_i^\top x_j,0,1),
  \qquad \kappa_{ij}=s_{ij}^{\,2},
  \label{eq:similarity}
\end{equation}
so that a representative is shared only by tokens that are similar in both the encoder space and the decoder input space; the square $\kappa_{ij}$ serves as the coverage kernel. A normalized visual evidence distribution $b$ combines the denoised encoder attention with local contrast and global feature distinctiveness (Appendix~\ref{app:implementation}). Meanwhile, the frozen CLIP text encoder yields a normalized instruction distribution $\pi$ from entropy-filtered text features. The instruction enters the evidence solely through its positive residual over the visual evidence:
\begin{equation}
  r=\MassNorm([\pi-b]_+),\qquad
  \lambda_q=\lambda_{\max}\min(\sigma/\sigma_0,1),\qquad
  e=\MassNorm\big((1-\lambda_q)b+\lambda_qr\big),
  \label{eq:query-evidence}
\end{equation}
where $\MassNorm$ normalizes a non-negative vector to unit sum and $\sigma=\tfrac12\norm{\pi-\mathbf1/N}_1$ measures the selectivity of the instruction. The residual thus contributes only the evidence that the image alone does not supply.

\textbf{Selecting the representatives.} To allocate representatives across both concentrated and diffuse evidence, we mix the sharpened evidence with a uniform component,
\begin{equation}
  w_j=(1-u)\,\frac{e_j^\beta}{\sum_t e_t^\beta}+\frac{u}{N},
  \label{eq:coverage-weights}
\end{equation}
where $u$ decreases with the budget and adapts to the concentration of the encoder attention. We then balance the weights over $N_{\mathrm r}$ regions formed from features and coordinates, which yields the coverage weights $\omega_j$. Starting from the protected attention peaks and non-redundant instruction anchors, CSI greedily increases the covered evidence
\begin{equation}
  F(A)=\sum_{j=1}^{N}\omega_j\max_{i\in A}\kappa_{ij},
  \qquad |A|=K.
  \label{eq:coverage}
\end{equation}
Finally, the last fraction $\rho_{\mathrm r}$ of the budget is reserved for the tokens with the largest remaining coverage gap $1-\max_{i\in A}\kappa_{ij}$.

\subsection{Coupled State Composition}
\label{sec:update}

CSC composes the representative content and the effective contribution for a representative set $A$, the removed set $D$, and the evidence $e$ of the stage, with the same rule at the initial reduction and at every later one. The evidence is the distribution of Equation~\ref{eq:query-evidence} at the initial stage, the internal selection score at the first internal stage, and the uniform vector at the final stage, which subsamples the state spatially (Appendix~\ref{app:implementation}). Every removed token is assigned to its most similar representative,
\begin{equation}
  a(j)=\argmax_{i\in A}s_{ij},\qquad j\in D.
  \label{eq:correspondence}
\end{equation}

\textbf{Composing contributions.} Each representative accumulates the similarity-weighted contributions of its assigned tokens, and the accumulation is scaled such that the evidence-weighted contribution is conserved before the upper bound is applied:
\begin{equation}
  b_i=\sum_{\substack{j\in D\\a(j)=i}}m_j s_{ij}^{\,\gamma},
  \qquad
  \lambda=\frac{\sum_{j\in D}m_je_j}{\sum_{i\in A}e_i b_i},
  \qquad
  m'_i=\clip(m_i+\lambda b_i,1,m_{\max}).
  \label{eq:mass}
\end{equation}
The scale $\lambda$ enforces the conservation requirement $\sum_{i\in A}e_i\lambda b_i=\sum_{j\in D}m_je_j$ for each image. The inherited weights $m_j$ enter both the accumulation and the conserved quantity: the first reduction starts from unit weights, and a later reduction transfers the contribution that a removed representative had itself accumulated. The upper bound $m_{\max}$ prevents a large homogeneous region from concentrating attention on one key, and Appendix~\ref{app:proofs} gives an exact identity for the contribution withheld by this bound.\looseness=-1

\textbf{Composing content.} Following the confidence-controlled correction of CaRe \citep{li2026care}, we compute a confidence $q_j$ for each removed token, which combines the semantic and spatial affinity with the concentration of its affinities over candidate representatives (Appendix~\ref{app:implementation}). Only sufficiently confident tokens contribute:
\begin{equation}
  \bar h_i=
  \frac{\sum_{j\in D_i}\tilde e_jq_jh_j}
       {\sum_{j\in D_i}\tilde e_jq_j},
  \qquad
  h'_i=\frac{\norm{h_i}}{\norm{h_i+\alpha\bar h_i}}
          (h_i+\alpha\bar h_i),
  \qquad
  D_i=\{j\in D:a(j)=i,\ q_j\geq\tau\},
  \label{eq:content}
\end{equation}
where $\tilde e_j$ is the min--max-normalized evidence. The update rotates the representative toward a weighted centroid while preserving its norm. Since $h_i$ enters the query, key, and value projections and the residual stream, the content composition is deliberately conservative. In progressive execution, it acts on the current decoder-transformed features. The composed state $S'=(H'_A,P_A,m'_A,G_A)$ keeps the original positions and restricts the similarities and anchors to $A$ (Algorithm~\ref{alg:state} in Appendix~\ref{app:implementation}).

\subsection{State-Conditioned Attention}
\label{sec:attention}

\begin{figure}[t]
  \centering
  \includegraphics[width=0.9\linewidth]{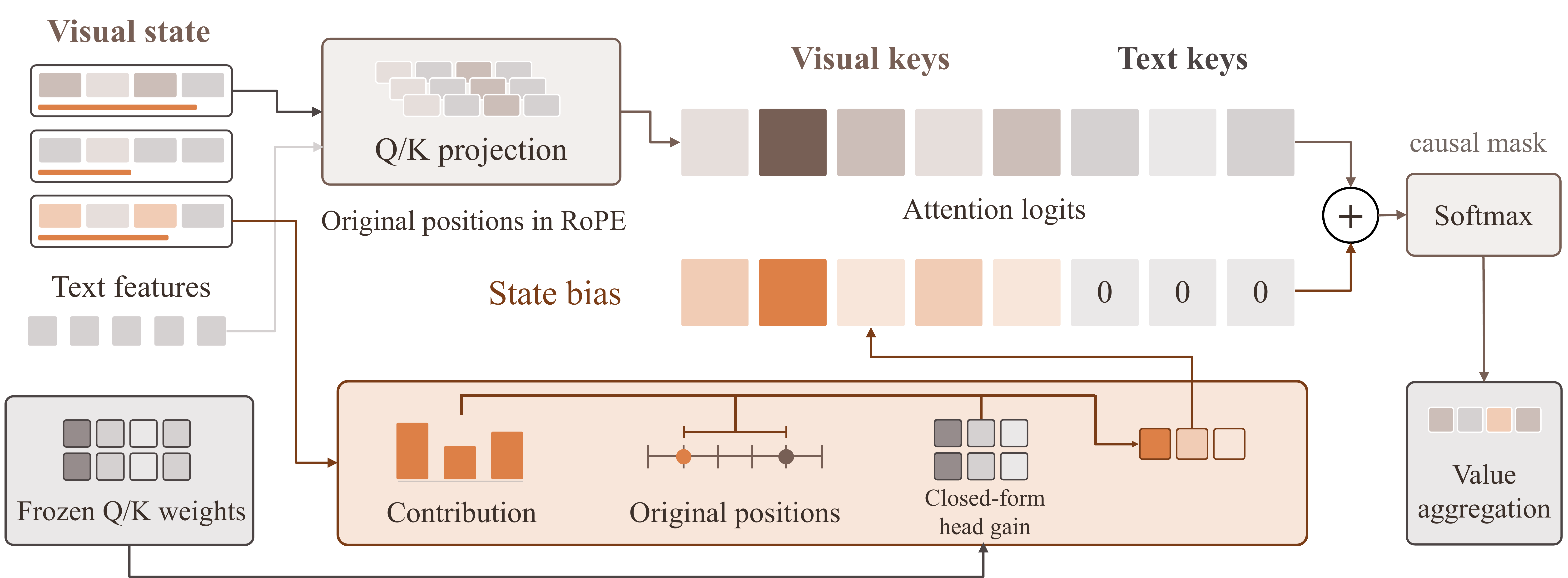}
  \caption{State-Conditioned Attention. The representatives enter the frozen query and key projections at their original rotary positions, while the effective contribution weights and the original positions form a separate bias with closed-form per-head gains, added to the visual logits before the softmax.}
  \label{fig:sca}
\end{figure}

SCA specifies how the decoder attention incorporates the state, which is illustrated in Figure~\ref{fig:sca}. The representatives keep their original indices in the rotary position embedding (RoPE) \citep{su2024roformer}. For a query $q$ and a visual representative $i$, the logit of head $h$ in layer $\ell$ receives the bias
\begin{equation}
  B_{\ell h}(q,i)
  =u_\ell\, g_{\ell h}\log\!\big[
       m_i\big(2-\mathcal D(|p_q-p_i|)\big)\big],
  \qquad
  \mathcal D(\Delta)=\frac{2}{d_h}
      \sum_{r=0}^{d_h/2-1}\cos(\Delta\theta_r),
  \label{eq:attention}
\end{equation}
where $\theta_r=10000^{-2r/d_h}$, $d_h$ is the head dimension, and $u_\ell$ is a strategy-dependent gain. The distance factor follows RESTORE \citep{cho2026restore} and the logarithmic weight follows proportional attention \citep{bolya2023tome}, with the weight being the evidence-conserved contribution produced by CSC, so the decoder attends to a representative in proportion to the evidence it currently represents.

\textbf{Head-dependent modulation.} The bias should be strongest for the heads that rely most on positional geometry. We measure this reliance directly on the frozen weights. For the RoPE frequency pair $r$, let $E_{\ell hr}$ be the product of the paired query- and key-row norms of head $h$ in layer $\ell$ (Appendix~\ref{app:implementation}). The gain is computed as
\begin{equation}
  \nu_{\ell h}=\frac{\sum_rE_{\ell hr}\theta_r}{\sum_rE_{\ell hr}},
  \qquad
  g_{\ell h}=g_{\min}+(g_{\max}-g_{\min})\frac{\nu_{\ell h}-\nu_{\min}}
                              {\nu_{\max}-\nu_{\min}},
  \label{eq:head-gain}
\end{equation}
with the extrema over all decoder layers and heads. The gains are a closed-form function of the frozen projections. Therefore, they require no annotation, calibration, or training, and transfer to another model by recomputation. Since the bias enters the common softmax over all causally visible keys, it modulates each representative relative to the other representatives and the text tokens (Appendix~\ref{app:implementation}).

\begin{table}[t]\centering
\caption{Performance comparison on LLaVA-1.5-7B. Bold and underline mark the best and second-best values in each benchmark.}\label{tab:external}
\fontsize{7.6}{8.5}\selectfont
\setlength{\tabcolsep}{2.6pt}\renewcommand{\arraystretch}{0.93}
\begin{tabular*}{\linewidth}{@{}w{l}{\budgetlabw}@{\hspace{3pt}}l@{\extracolsep{\fill}\hspace{2\tabcolsep}}ccccccccc@{}}
\toprule
 & Method & Budget & GQA & MMB & MME & POPE & SQA & VQA\tsup{T} & VQA\tsup{v2} & $R$ (\%)\\
\midrule
 & Vanilla, $K=576$ & -- & 62.0 & 64.0 & 1867 & 85.8 & 69.5 & 58.2 & 76.7 & 100.0\\
\midrule
 & \methodvenue{DivPrune}{CVPR 2025} & fixed & 58.9 & 63.1 & 1723 & 86.5 & 69.0 & 55.7 & 76.1 & 96.9\\
 & \methodvenue{PyramidDrop}{CVPR 2025} & avg. & 57.3 & 63.3 & 1797 & 84.8 & 69.2 & 56.5 & 76.4 & 97.1\\
 & \methodvenue{VisionZip}{CVPR 2025} & fixed & 59.3 & 63.0 & 1783 & 85.3 & 68.9 & 57.3 & 76.8 & 97.7\\
 & \methodvenue{MMTok}{ICLR 2026} & fixed & 60.1 & 63.4 & 1774 & 86.4 & 68.8 & 57.7 & 77.1 & 98.2\\
 & \methodvenue{ZOO-Prune}{CVPR 2026} & fixed & 60.0 & 62.9 & 1782 & \underline{87.2} & 69.2 & 57.3 & 77.3 & 98.3\\
 & \methodvenue{SCOPE}{NeurIPS 2025} & fixed & 60.1 & 63.6 & 1804 & 86.4 & 68.8 & 57.7 & -- & 98.5\\
 & \methodvenue{VScan}{TMLR 2026} & avg. & 60.6 & 63.9 & 1806 & 86.2 & 68.6 & 57.7 & \underline{77.8} & 98.7\\
 & \methodvenue{SpecFlow}{ICML 2026} & fixed & 58.3 & \textbf{65.8} & 1827 & 85.8 & 69.7 & 57.9 & 76.4 & 98.7\\
 & \methodvenue{RESTORE+HoloV}{ICML 2026} & fixed & 61.0 & 63.7 & 1793 & 86.6 & 69.6 & 57.2 & 77.6 & 98.8\\
 & \methodvenue{DeSAP}{ACM MM 2026} & fixed & 60.3 & 63.5 & 1848 & 87.1 & 69.9 & 57.2 & 77.5 & 99.1\\
 & \methodvenue{CaRe}{arXiv 2026} & fixed & 60.6 & 63.4 & 1809 & \textbf{88.9} & \underline{70.2} & \underline{58.0} & 77.4 & 99.4\\
 & \methodvenue{VisionTrim}{ICLR 2026} & avg.\,198 & 61.0 & 64.4 & 1798 & 86.8 & \textbf{70.8} & \textbf{58.4} & \textbf{78.4} & 99.7\\
\covistrowskip & \textbf{\pro} & avg. & \textbf{61.6} & \underline{64.5} & \underline{1855} & 86.2 & 69.3 & 57.9 & 76.5 & \underline{99.8}\\
\covistrowskip\budgetlabel{14}{$K{=}192$ ($\downarrow 66.7\%$)} & \textbf{\fixed} & fixed & \underline{61.3} & 64.3 & \textbf{1901} & 86.2 & 69.1 & 57.6 & 76.4 & \textbf{99.9}\\
\midrule
 & \methodvenue{PyramidDrop}{CVPR 2025} & avg. & 57.1 & 61.6 & 1761 & 82.6 & 68.4 & 56.6 & 76.0 & 95.8\\
 & \methodvenue{VisionZip}{CVPR 2025} & fixed & 57.6 & 62.0 & 1762 & 83.2 & 68.9 & 56.8 & 75.6 & 96.2\\
 & \methodvenue{DivPrune}{CVPR 2025} & fixed & 58.6 & 63.7 & 1702 & 86.5 & 68.9 & 55.2 & 75.2 & 96.5\\
 & \methodvenue{SpecFlow}{ICML 2026} & fixed & 57.6 & \underline{64.3} & 1794 & 84.9 & \textbf{69.9} & 56.8 & 75.3 & 97.4\\
 & \methodvenue{MMTok}{ICLR 2026} & fixed & 59.3 & 62.3 & 1779 & 86.3 & 68.8 & 57.0 & 76.4 & 97.5\\
 & \methodvenue{SCOPE}{NeurIPS 2025} & fixed & 59.7 & 62.5 & 1776 & 86.1 & 68.4 & 57.2 & -- & 97.6\\
 & \methodvenue{ZOO-Prune}{CVPR 2026} & fixed & 59.5 & 61.9 & 1752 & \underline{87.1} & 68.9 & 57.9 & 76.6 & 97.6\\
 & \methodvenue{CaRe}{arXiv 2026} & fixed & 60.2 & 62.1 & 1731 & \textbf{87.8} & \textbf{69.9} & \underline{58.0} & 76.7 & 98.0\\
 & \methodvenue{VScan}{TMLR 2026} & avg. & 59.8 & 63.0 & 1792 & 86.1 & 68.9 & 57.3 & \underline{77.1} & 98.0\\
 & \methodvenue{RESTORE+HoloV}{ICML 2026} & fixed & \underline{60.8} & 63.0 & 1807 & 86.0 & 68.7 & 56.6 & \underline{77.1} & 98.2\\
 & \methodvenue{DeSAP}{ACM MM 2026} & fixed & 59.5 & 62.8 & 1815 & 87.0 & \textbf{69.9} & 57.0 & 76.9 & 98.3\\
 & \methodvenue{VisionTrim}{ICLR 2026} & avg.\,138 & 60.3 & 64.0 & 1788 & 86.6 & \underline{69.7} & \textbf{58.2} & \textbf{78.2} & 99.0\\
\covistrowskip & \textbf{\fixed} & fixed & \textbf{61.3} & 64.1 & \underline{1867} & 86.3 & 69.0 & 57.1 & 76.2 & \underline{99.5}\\
\covistrowskip\budgetlabel{14}{$K{=}128$ ($\downarrow 77.8\%$)} & \textbf{\pro} & avg. & \textbf{61.3} & \textbf{64.7} & \textbf{1890} & 86.1 & 69.1 & 57.7 & 76.3 & \textbf{99.9}\\
\midrule
 & \methodvenue{PyramidDrop}{CVPR 2025} & avg. & 47.5 & 58.8 & 1561 & 76.2 & 69.0 & 50.6 & 73.3 & 88.5\\
 & \methodvenue{VisionZip}{CVPR 2025} & fixed & 55.1 & 60.1 & 1690 & 77.0 & 69.0 & 55.5 & 72.4 & 92.7\\
 & \methodvenue{DivPrune}{CVPR 2025} & fixed & 57.1 & 60.2 & 1653 & 85.3 & 68.3 & 54.5 & 73.3 & 94.1\\
 & \methodvenue{SpecFlow}{ICML 2026} & fixed & 55.3 & \underline{63.7} & 1713 & 80.5 & 69.7 & 54.9 & 73.7 & 94.6\\
 & \methodvenue{ZOO-Prune}{CVPR 2026} & fixed & 58.5 & 60.2 & 1676 & 85.9 & 68.3 & 55.4 & 75.0 & 95.2\\
 & \methodvenue{SCOPE}{NeurIPS 2025} & fixed & 58.3 & 61.7 & 1698 & 83.9 & 68.6 & 56.6 & -- & 95.7\\
 & \methodvenue{VScan}{TMLR 2026} & avg. & 58.3 & 62.1 & 1698 & 85.0 & 69.1 & 55.6 & 75.4 & 95.9\\
 & \methodvenue{MMTok}{ICLR 2026} & fixed & 58.3 & 61.2 & 1715 & 85.8 & 69.2 & 56.0 & 75.2 & 96.0\\
 & \methodvenue{CaRe}{arXiv 2026} & fixed & 59.1 & 60.6 & 1665 & \textbf{87.0} & \underline{69.8} & 56.3 & 75.8 & 96.2\\
 & \methodvenue{RESTORE+HoloV}{ICML 2026} & fixed & 59.0 & 61.9 & 1787 & 84.9 & 68.0 & 55.4 & 75.6 & 96.5\\
 & \methodvenue{DeSAP}{ACM MM 2026} & fixed & 58.3 & 62.3 & 1748 & 85.8 & 69.6 & 55.7 & \underline{76.7} & 96.8\\
 & \methodvenue{VisionTrim}{ICLR 2026} & avg.\,78 & 58.8 & 63.0 & 1780 & \underline{86.2} & \textbf{71.0} & \underline{56.8} & \textbf{76.8} & 98.0\\
\covistrowskip & \textbf{\fixed} & fixed & \underline{60.2} & 63.1 & \underline{1819} & 85.8 & 68.5 & 56.4 & 75.0 & \underline{98.1}\\
\covistrowskip\budgetlabel{14}{$K{=}64$ ($\downarrow 88.9\%$)} & \textbf{\pro} & avg. & \textbf{60.9} & \textbf{64.4} & \textbf{1863} & 86.0 & 68.4 & \textbf{57.0} & 75.6 & \textbf{99.1}\\
\bottomrule\end{tabular*}\end{table}

\subsection{Fixed and progressive execution}
\label{sec:policies}

\fixed\ applies CSI and CSC once before the first decoder layer and keeps $K$ representatives throughout. \pro\ initializes a larger state and reduces it again at two intermediate layers during prefill. Specifically, at the first of them, an internal selection scores the current representatives by the visual attention of the last instruction token and by their similarity to the instruction in the current value space, with protected anchors and spatial quotas (Appendix~\ref{app:implementation}). Then, CSC composes the state with this score as the evidence. Note that the selection uses unit weights. Therefore, a representative competes for retention on its relevance alone and, once retained, accounts for its tokens through the recorded weights. Both strategies apply SCA at every layer, with $u_\ell=1$ for \fixed\ and, for \pro, $u_\ell=0.8$ before the first internal reduction and $1$ thereafter.

\begin{table}[!t]
\begin{minipage}[t]{0.465\linewidth}\vspace{0pt}\centering
\caption{Comparison on LLaVA-1.5-13B.}\label{tab:cross-13b}
\fontsize{6.5}{7.5}\selectfont\setlength{\tabcolsep}{0.6pt}\renewcommand{\arraystretch}{0.9}\setlength{\aboverulesep}{1pt}\setlength{\belowrulesep}{1.6pt}
\begin{tabular*}{\linewidth}{@{}l@{\extracolsep{\fill}}cccccccc@{}}
\toprule
Method & GQA & MMB & MME & POPE & SQA & VQA\tsup{T} & VQA\tsup{v2} & $R$ (\%)\\\midrule
Vanilla & 63.3 & 68.7 & 1824 & 86.0 & 72.8 & 61.3 & 78.3 & 100.0\\\midrule
\multicolumn{9}{c}{\textit{$K{=}192$ ($\downarrow 66.7\%$)}}\\\midrule
VisionZip & 59.1 & 66.9 & 1754 & 85.1 & 73.5 & 59.5 & 78.1 & 97.6\\
DivPrune & 59.4 & 66.6 & 1782 & \underline{86.8} & 72.9 & 58.5 & 78.0 & 97.8\\
CDPruner & 60.4 & 67.2 & 1776 & 86.6 & 72.4 & 58.7 & 78.4 & 97.8\\
ZOO-Prune & 60.0 & 66.7 & 1762 & 86.7 & 73.1 & 59.1 & \textbf{78.7} & 98.1\\
FSR & 60.2 & 67.3 & 1805 & 86.4 & 73.3 & 59.5 & \underline{78.6} & 98.4\\
MMTok & 59.7 & 67.7 & 1784 & 86.2 & \underline{73.6} & 59.6 & 78.3 & 98.4\\
SCOPE & 59.7 & 67.6 & 1775 & 86.7 & \textbf{73.8} & 60.0 & -- & 98.7\\
\covistrowfull \textbf{\fixed} & \underline{62.9} & \underline{68.5} & \underline{1821} & \textbf{87.0} & 73.1 & \underline{60.6} & 77.9 & \underline{99.8}\\
\covistrowfull \textbf{\pro} & \textbf{63.3} & \textbf{68.6} & \textbf{1835} & \underline{86.8} & 73.2 & \textbf{60.9} & 78.1 & \textbf{100.2}\\
\midrule
\multicolumn{9}{c}{\textit{$K{=}128$ ($\downarrow 77.8\%$)}}\\\midrule
VisionZip & 57.9 & 66.7 & 1743 & 85.2 & \textbf{74.0} & 58.7 & 76.8 & 96.9\\
DivPrune & 58.9 & 66.1 & 1749 & 86.5 & 72.8 & 58.2 & 77.1 & 97.0\\
PruneSID & 58.9 & 65.5 & \underline{1811} & 85.9 & 73.1 & 57.5 & 76.7 & 97.1\\
SCOPE & 59.3 & 67.2 & 1735 & 85.9 & \underline{73.9} & 58.7 & -- & 97.6\\
CDPruner & 59.7 & 67.5 & 1778 & \textbf{87.3} & 72.5 & 58.4 & 77.7 & 97.7\\
MMTok & 59.0 & 67.2 & 1756 & 86.2 & 73.4 & 59.2 & 77.6 & 97.7\\
FSR & 59.6 & \underline{68.2} & 1768 & 86.3 & 73.8 & 58.8 & \textbf{78.0} & 97.9\\
ZOO-Prune & 58.9 & 67.0 & 1791 & 87.0 & 73.4 & 58.8 & 77.8 & 98.0\\
\covistrowfull \textbf{\fixed} & \underline{62.7} & 68.0 & \underline{1811} & \underline{87.1} & 73.7 & \underline{60.1} & 77.6 & \underline{99.6}\\
\covistrowfull \textbf{\pro} & \textbf{62.9} & \textbf{68.6} & \textbf{1822} & 86.9 & 72.9 & \textbf{60.6} & \underline{77.9} & \textbf{99.8}\\
\midrule
\multicolumn{9}{c}{\textit{$K{=}64$ ($\downarrow 88.9\%$)}}\\\midrule
VisionZip & 56.2 & 64.9 & 1676 & 76.0 & \textbf{74.4} & 57.4 & 73.7 & 93.3\\
PruneSID & 57.8 & 63.8 & 1711 & 82.0 & 71.8 & 56.3 & 75.2 & 94.2\\
DivPrune & 57.7 & 64.6 & \underline{1778} & 84.8 & 71.3 & 57.1 & 75.2 & 95.5\\
FSR & 58.6 & 66.3 & 1750 & 85.0 & 73.0 & 58.1 & \underline{76.8} & 96.4\\
SCOPE & 58.7 & 65.5 & 1762 & 83.0 & 73.2 & 58.3 & -- & 96.5\\
CDPruner & 59.4 & 65.5 & 1744 & \textbf{87.2} & 72.4 & 57.6 & 76.7 & 96.5\\
ZOO-Prune & 58.6 & 64.8 & \textbf{1780} & 85.3 & 72.1 & 58.6 & 76.4 & 96.5\\
MMTok & 58.4 & 65.7 & 1763 & 84.4 & 73.0 & 58.4 & 76.6 & 96.6\\
\covistrowfull \textbf{\fixed} & \underline{61.4} & \underline{67.1} & 1775 & \underline{86.3} & 73.0 & \underline{59.5} & 76.5 & \underline{98.2}\\
\covistrowfull \textbf{\pro} & \textbf{62.1} & \textbf{68.0} & 1775 & \textbf{87.2} & \underline{73.7} & \textbf{59.9} & \textbf{77.1} & \textbf{99.0}\\
\bottomrule
\end{tabular*}
\end{minipage}
\hfill
\begin{minipage}[t]{0.525\linewidth}\vspace{0pt}\centering
\caption{Comparison on Qwen2.5-VL-7B.}\label{tab:cross-qwen}
\fontsize{6.5}{7.5}\selectfont\setlength{\tabcolsep}{0.6pt}\renewcommand{\arraystretch}{0.9}\setlength{\aboverulesep}{1pt}\setlength{\belowrulesep}{1.6pt}
\begin{tabular*}{\linewidth}{@{}l@{\extracolsep{\fill}}ccccccccc@{}}
\toprule
Method & GQA & MMB & MME & POPE & SQA & VQA\tsup{T} & VQA\tsup{v2} & MMB\tsup{CN} & $R$ (\%)\\\midrule
Vanilla & 59.8 & 83.4 & 2320 & 86.6 & 88.1 & 76.7 & 81.6 & 80.1 & 100.0\\\midrule
\multicolumn{10}{c}{\textit{$K{=}512$ ($\downarrow 60.5\%$)}}\\\midrule
VisionZip & 56.6 & 78.9 & \textbf{2317} & 85.8 & 80.5 & -- & \underline{80.7} & -- & 95.8\\
PruneSID & \textbf{59.8} & 80.9 & 2218 & \underline{85.9} & \textbf{87.6} & -- & 80.4 & -- & 97.6\\
\covistrowfull \textbf{\pro} & 58.0 & \underline{82.1} & \underline{2315} & \textbf{86.4} & \underline{86.2} & \underline{73.1} & 80.1 & \underline{79.6} & \underline{98.2}\\
\covistrowfull \textbf{\fixed} & \underline{59.2} & \textbf{83.0} & 2308 & 85.5 & \textbf{87.6} & \textbf{76.7} & \textbf{80.9} & \textbf{80.2} & \textbf{99.4}\\
\midrule
\multicolumn{10}{c}{\textit{$K{=}256$ ($\downarrow 80.2\%$)}}\\\midrule
ZOO-Prune & 55.9 & 77.4 & 2139 & 81.7 & 82.4 & 68.8 & -- & -- & 92.4\\
VisionZip & 54.6 & 76.8 & 2224 & 83.4 & 80.4 & -- & 78.5 & -- & 93.2\\
CaRe & 58.1 & 77.9 & 2274 & 83.9 & 84.4 & \underline{70.9} & -- & -- & 95.4\\
PruneSID & \textbf{59.0} & 78.0 & 2169 & \underline{85.6} & \underline{86.9} & -- & \underline{78.7} & -- & 95.9\\
CDPruner & -- & 81.5 & 2232 & 83.5 & 84.1 & -- & -- & 80.1 & 96.4\\
DivPrune & -- & \underline{81.8} & 2167 & 85.3 & 84.8 & -- & -- & \textbf{80.9} & 96.6\\
\covistrowfull \textbf{\pro} & 57.0 & \textbf{82.2} & \textbf{2315} & 85.5 & 84.7 & 69.7 & 78.1 & 79.3 & 96.8\\
TOPS$^{\dagger}$ & -- & 81.4 & 2285 & \textbf{86.1} & \textbf{87.2} & -- & -- & \underline{80.5} & \underline{98.2}\\
\covistrowfull \textbf{\fixed} & \underline{58.9} & \textbf{82.2} & \underline{2308} & 84.6 & 86.3 & \textbf{75.7} & \textbf{79.9} & 79.0 & \textbf{98.5}\\
\midrule
\multicolumn{10}{c}{\textit{$K{=}128$ ($\downarrow 90.1\%$)}}\\\midrule
ZOO-Prune & 53.3 & 72.5 & 1962 & 78.5 & 79.9 & 62.0 & -- & -- & 86.9\\
CaRe & 54.5 & 73.4 & 2017 & 79.2 & 81.5 & 65.3 & -- & -- & 88.9\\
VisionZip & 53.2 & 75.8 & 2025 & 78.9 & 80.1 & -- & 73.8 & -- & 89.3\\
PruneSID & \underline{55.8} & 73.9 & 2076 & 80.2 & \textbf{86.5} & -- & 74.6 & -- & 91.6\\
CDPruner & -- & 78.6 & 2033 & 80.5 & 82.5 & -- & -- & \underline{78.7} & 92.6\\
DivPrune & -- & 79.2 & 2044 & \underline{83.9} & 82.8 & -- & -- & 78.6 & 93.6\\
\covistrowfull \textbf{\pro} & 55.4 & \textbf{81.0} & \underline{2231} & \textbf{84.9} & 83.2 & \underline{65.5} & \underline{74.9} & 77.0 & \underline{94.0}\\
TOPS$^{\dagger}$ & -- & 80.2 & 2217 & \underline{83.9} & \underline{85.3} & -- & -- & \textbf{78.8} & \textbf{96.0}\\
\covistrowfull \textbf{\fixed} & \textbf{57.9} & \underline{80.3} & \textbf{2287} & 82.2 & 84.1 & \textbf{71.6} & \textbf{78.1} & 77.6 & \textbf{96.0}\\
\bottomrule
\end{tabular*}
\par\vspace{1pt}{\fontsize{6.0}{7.0}\selectfont\raggedright \pro\ uses layer-average budgets in both tables, and TOPS also prunes within the decoder. VisionZip and PruneSID keep 33.3\%, 22.2\%, and 11.1\% of the tokens, and ZOO-Prune and CaRe keep 20\% and 10\%. $^{\dagger}$Layer average cannot be derived.\par}
\end{minipage}
\end{table}

\section{Experiments}
\label{sec:experiments}

\subsection{Experimental setup}
\label{sec:setup}

\textbf{1) Models and benchmarks:} We adopt LLaVA-1.5-7B \citep{llava2024} as the primary model, and further evaluate \method\ on LLaVA-1.5-13B, LLaVA-NeXT-7B \citep{llavanext2024}, and Qwen2.5-VL-7B \citep{qwen2025vl}. We conduct the evaluation on seven benchmarks, \textit{i.e.,} GQA \citep{hudson2019gqa}, MMBench (MMB) \citep{liu2024mmbench}, MME \citep{fu2023mme}, POPE \citep{li2023pope}, ScienceQA (SQA) \citep{lu2022scienceqa}, TextVQA (VQA$^{\mathrm{T}}$) \citep{singh2019textvqa}, and VQAv2 (VQA$^{\mathrm{v2}}$) \citep{goyal2017vqav2}. For Qwen2.5-VL-7B, we additionally use MMBench-CN (MMB$^{\mathrm{CN}}$).

\textbf{2) Evaluation protocol:} We report the performance retention $R=\frac{100}{B}\sum_{b=1}^{B}s_b/s_b^{\mathrm{full}}$, \textit{i.e.,} the average ratio of each task score $s_b$ to its uncompressed reference $s_b^{\mathrm{full}}$ over the $B$ tasks.  Note that when a compared method does not report a task, its retention is averaged over the tasks it does report. In the Budget column, \emph{fixed} denotes $K$ visual tokens at every decoder layer, and \emph{avg.}\ denotes a layer schedule within the decoder whose layer-average count is $K$. If the budget of a method is defined differently, we mark it with the layer-average count it actually uses, or with $^{\dagger}$ when this count cannot be derived. Results of compared methods are taken from published literature under their own budgets, protocols, and full-model references.

\subsection{Main results on LLaVA-1.5-7B}
\label{sec:results-7b}

Table~\ref{tab:external} compares both execution strategies with state-of-the-art methods on LLaVA-1.5-7B. It can be observed that the two strategies rank first and second at every budget. Specifically, \fixed\ retains $99.9\%$, $99.5\%$, and $98.1\%$ at 192, 128, and 64 tokens, respectively, and \pro\ retains $99.8\%$, $99.9\%$, and $99.1\%$ at the same budgets taken as layer averages. Moreover, \method\ can even exceed the vanilla model on MMBench, POPE, and MME after compression. Meanwhile, the two strategies differ in where their advantage lies. \fixed\ ranks first at 192 tokens, whereas the advantage of \pro\ over \fixed\ grows as the budget shrinks and reaches $0.4$ and $1.0$ points at 128 and 64 tokens, respectively. Particularly at 64 tokens, \pro\ attains the highest GQA, MMBench, MME, and TextVQA scores. The reason is that the later reductions of \pro\ act on decoder-transformed features while inheriting the weights and positions established before them (Table~\ref{tab:components}).

\begin{table}[!t]\centering\caption{Performance comparison on LLaVA-NeXT-7B. $K$ counts content tokens, and \method\ additionally keeps the 48 newline tokens. $^{\dagger}$Layer average cannot be derived.}\label{tab:cross-next}
\fontsize{7.9}{8.8}\selectfont\setlength{\tabcolsep}{2.6pt}\renewcommand{\arraystretch}{0.93}
\begin{tabular*}{\linewidth}{@{}w{l}{\budgetlabw}@{\hspace{3pt}}l@{\extracolsep{\fill}\hspace{2\tabcolsep}}ccccccccc@{}}
\toprule
 & Method & Budget & GQA & MMB & MME & POPE & SQA & VQA\tsup{T} & VQA\tsup{v2} & $R$ (\%)\\
\midrule
 & Vanilla & -- & 64.2 & 64.3 & 1817 & 86.9 & 68.0 & 61.1 & 79.9 & 100.0\\
\midrule
 & \methodvenue{DivPrune}{CVPR 2025} & fixed & 61.6 & 65.4 & 1773 & 85.5 & 67.8 & 55.4 & 78.9 & 96.1\\
 & \methodvenue{PyramidDrop}{CVPR 2025} & avg. & 62.9 & 66.5 & 1733 & 86.4 & \underline{69.4} & 58.3 & -- & 97.3\\
 & \methodvenue{ApET}{CVPR 2026} & avg.$^{\dagger}$ & 63.0 & 65.3 & 1815 & 87.2 & -- & 57.9 & 79.2 & 97.5\\
 & \methodvenue{ZOO-Prune}{CVPR 2026} & fixed & 62.2 & 65.2 & 1816 & 86.8 & 68.0 & 58.0 & \underline{79.6} & 97.5\\
 & \methodvenue{VisionZip}{CVPR 2025} & fixed & 61.3 & 66.3 & 1787 & 87.7 & 68.1 & 60.2 & -- & 97.8\\
 & \methodvenue{FlowCut}{NeurIPS 2025} & avg.\,680 & 61.9 & \underline{66.7} & 1837 & 86.1 & -- & \underline{60.6} & \textbf{79.8} & 98.4\\
 & \methodvenue{CaRe}{arXiv 2026} & fixed & \textbf{63.9} & 66.0 & 1831 & \underline{88.2} & \underline{69.4} & 58.4 & 78.0 & 98.5\\
\covistrowskip & \textbf{\fixed} & fixed & \underline{63.5} & 63.6 & \underline{1865} & 87.4 & 67.9 & 60.3 & 79.2 & 99.8\\
 & \methodvenue{VisionTrim}{ICLR 2026} & avg.\,690 & 63.2 & \textbf{67.2} & 1825 & \textbf{88.5} & \textbf{70.7} & \textbf{61.0} & -- & \underline{99.9}\\
\covistrowskip\budgetlabel{10}{$K{=}640$ ($\downarrow 77.8\%$)} & \textbf{\pro} & avg. & \underline{63.5} & 64.2 & \textbf{1881} & 87.4 & 68.1 & \textbf{61.0} & 79.5 & \textbf{100.4}\\
\midrule
 & \methodvenue{PyramidDrop}{CVPR 2025} & avg. & 58.5 & 63.2 & 1667 & 81.9 & 66.8 & 58.2 & -- & 93.3\\
 & \methodvenue{VisionZip}{CVPR 2025} & fixed & 59.3 & 63.1 & 1702 & 82.1 & 67.3 & 58.9 & 76.2 & 94.3\\
 & \methodvenue{ApET}{CVPR 2026} & avg.$^{\dagger}$ & 61.0 & 63.5 & 1783 & 85.6 & -- & 54.4 & 75.8 & 94.3\\
 & \methodvenue{DivPrune}{CVPR 2025} & fixed & 61.1 & \underline{65.1} & 1721 & 84.7 & 67.7 & 56.2 & 77.2 & 95.0\\
 & \methodvenue{FlowCut}{NeurIPS 2025} & avg.\,340 & 59.8 & \textbf{65.3} & 1791 & 83.4 & -- & \textbf{60.1} & 77.8 & 96.0\\
 & \methodvenue{ZOO-Prune}{CVPR 2026} & fixed & 61.0 & 64.9 & 1788 & 85.5 & 67.8 & 57.3 & \underline{78.1} & 96.3\\
 & \methodvenue{VisionTrim}{ICLR 2026} & avg.\,390 & 61.7 & 64.8 & 1795 & 83.6 & \textbf{69.6} & 59.6 & -- & 97.0\\
 & \methodvenue{CaRe}{arXiv 2026} & fixed & 62.0 & 64.9 & 1792 & \textbf{87.6} & \underline{68.9} & 57.7 & 77.8 & 97.1\\
\covistrowskip & \textbf{\fixed} & fixed & \underline{62.6} & 62.5 & \underline{1812} & 87.3 & 67.8 & 58.6 & \underline{78.1} & \underline{98.4}\\
\covistrowskip\budgetlabel{10}{$K{=}320$ ($\downarrow 88.9\%$)} & \textbf{\pro} & avg. & \textbf{63.2} & 63.7 & \textbf{1885} & \underline{87.4} & 67.8 & \underline{59.7} & \textbf{78.9} & \textbf{99.7}\\
\midrule
 & \methodvenue{PyramidDrop}{CVPR 2025} & avg. & 56.1 & 60.3 & 1545 & 78.0 & 67.4 & 54.7 & -- & 89.3\\
 & \methodvenue{ApET}{CVPR 2026} & avg.$^{\dagger}$ & 58.4 & 60.8 & 1680 & 82.6 & -- & 53.8 & 72.7 & 90.7\\
 & \methodvenue{VisionZip}{CVPR 2025} & fixed & 55.5 & 60.1 & 1630 & 79.4 & 68.3 & 56.2 & -- & 90.7\\
 & \methodvenue{DivPrune}{CVPR 2025} & fixed & 59.3 & 63.2 & 1614 & 80.0 & 67.2 & 54.1 & 75.0 & 91.7\\
 & \methodvenue{FlowCut}{NeurIPS 2025} & avg.\,170 & 57.6 & 62.8 & \underline{1746} & 79.9 & -- & 57.6 & 74.6 & 92.4\\
 & \methodvenue{VisionTrim}{ICLR 2026} & avg.\,240 & 57.2 & 63.3 & 1702 & 81.1 & \textbf{70.2} & \textbf{58.3} & -- & 94.0\\
 & \methodvenue{ZOO-Prune}{CVPR 2026} & fixed & 59.9 & \underline{64.2} & 1739 & 83.1 & 68.4 & 55.4 & \underline{76.1} & 94.5\\
 & \methodvenue{CaRe}{arXiv 2026} & fixed & 60.0 & \textbf{64.5} & 1707 & \underline{86.8} & \underline{69.7} & 55.8 & 75.8 & 95.2\\
\covistrowskip & \textbf{\fixed} & fixed & \underline{61.1} & 59.5 & 1734 & 86.7 & 66.7 & 55.8 & 76.0 & \underline{95.4}\\
\covistrowskip\budgetlabel{10}{$K{=}160$ ($\downarrow 94.4\%$)} & \textbf{\pro} & avg. & \textbf{62.0} & 60.4 & \textbf{1809} & \textbf{87.2} & 68.1 & \underline{58.0} & \textbf{77.1} & \textbf{97.5}\\
\bottomrule\end{tabular*}\end{table}

\subsection{Generalization to other models}
\label{sec:transfer}

To verify the generalizability of \method, we further apply it to three other models. Only the evidence extraction and the positional interface follow each architecture (Appendix~\ref{app:cross-model-implementation}), and the head gains are recomputed from the frozen projections. Meanwhile, the two reductions of \pro\ keep their relative depths (Appendix~\ref{app:schedule}).

\textbf{1) Larger decoder:} The results on LLaVA-1.5-13B are shown in Table~\ref{tab:cross-13b}, where \pro\ and \fixed\ rank first and second at every budget. Specifically, \pro\ retains $100.2\%$, $99.8\%$, and $99.0\%$ at 192, 128, and 64 tokens, respectively, and even exceeds the vanilla model at 192 tokens. It also attains the highest GQA, MMBench, and TextVQA scores at every budget. \fixed\ retains $99.8\%$, $99.6\%$, and $98.2\%$, which are at least $1.1$ points higher than those of the strongest compared method.

\textbf{2) Non-CLIP encoder:} Qwen2.5-VL-7B adopts a native visual merger and multidimensional rotary positions. At $1008\times1008$ resolution, we reduce its 1,296 tokens to 512, 256, and 128. The results in Table~\ref{tab:cross-qwen} show that \fixed\ ranks first at every budget with $99.4\%$, $98.5\%$, and $96.0\%$, respectively, and attains the highest TextVQA and VQAv2 scores throughout. Moreover, \pro\ outperforms all compared methods except TOPS, whose retention is averaged from only five of the eight tasks.

\textbf{3) Longer visual sequences:} LLaVA-NeXT-7B encodes a $672\times672$ input into 2,880 content tokens, which we compress to 640, 320, and 160. According to the results in Table~\ref{tab:cross-next}, \pro\ ranks first at all three budgets with $100.4\%$, $99.7\%$, and $97.5\%$, which exceed the strongest compared method by $0.5$, $2.6$, and $2.3$ points, respectively. At 640 tokens, \pro\ even outperforms the vanilla model. Meanwhile, \fixed\ ranks second at 320 and 160 tokens. Compared with LLaVA-1.5-7B, the advantage of \pro\ over \fixed\ is larger and reaches $2.1$ points at 160 tokens. The reason is that the longer visual sequence leaves more redundancy for the later reductions to remove.

\subsection{Inference efficiency}
\label{sec:efficiency}

We measure the efficiency of LLaVA-NeXT-7B on POPE with natural EOS stopping, batch size one, FP16, and scaled dot-product attention on one NVIDIA A800 GPU. The results are presented in Table~\ref{tab:efficiency}. It can be observed that \fixed\ accelerates the generation by $1.39\times$, $1.89\times$, and $2.22\times$ at 640, 320, and 160 content tokens, respectively, and \pro\ by $1.14\times$, $1.58\times$, and $1.92\times$. Specifically, at 160 tokens, the time to first token decreases from $263.9$\,ms to $102.7$ and $122.1$\,ms for the two strategies, respectively, and the prefill KV cache decreases from $1495$ to $135$\,MiB. Note that these speedups include the cost of constructing the state. At equal layer-average budgets, the two strategies are complementary: \pro\ attains the higher retention and \fixed\ the lower latency, because \pro\ performs a larger initial selection and two internal reductions.

\begin{table}[!htb]
\begin{minipage}[t]{0.475\linewidth}
  \vspace{0pt}
  \centering
  \caption{Efficiency of LLaVA-NeXT-7B on POPE. $K$ counts content tokens (a layer average for \pro), and \method\ also keeps 48 newline tokens. Parentheses give the generation speedup over Vanilla; the generation time includes visual encoding and compression. Table~\ref{tab:efficiency-next} gives the full measurements.}
\label{tab:efficiency}
\fontsize{7.5}{8.6}\selectfont
\setlength{\tabcolsep}{1.5pt}
\begin{tabular*}{\linewidth}{@{\extracolsep{\fill}}lccr@{\extracolsep{0pt}\,}l@{\extracolsep{\fill}\hspace{2\tabcolsep}}cc@{}}\toprule
Method & $K$ & \shortstack[c]{TTFT\\(ms)} & \multicolumn{2}{c}{\shortstack{Generation\\(ms)}} & \shortstack[c]{FLOPs\\(T)} & \shortstack[c]{KV\\(MiB)}\\\midrule
Vanilla & -- & 263.94 & 289.64 & & 43.408 & 1494.82\\\midrule
\fixed & 640 & 178.96 & 207.66 & ($1.39\times$) & 10.004 & 374.81\\
\fixed & 320 & 125.85 & 153.32 & ($1.89\times$) & 5.661 & 214.81\\
\fixed & 160 & 102.72 & 130.25 & ($2.22\times$) & 3.530 & 134.81\\\midrule
\pro & 640 & 225.41 & 253.89 & ($1.14\times$) & 10.058 & 374.81\\
\pro & 320 & 153.56 & 183.57 & ($1.58\times$) & 5.675 & 214.81\\
\pro & 160 & 122.12 & 150.84 & ($1.92\times$) & 3.534 & 134.81\\
\bottomrule\end{tabular*}

\end{minipage}\hfill
\begin{minipage}[t]{0.5\linewidth}
  \vspace{0pt}
  \centering
  \caption{Composing the state (retention, \%). Rows (a)--(f) build up \fixed, with a span of 288 at $K=64$ in (a)--(e); (g) and (h) run \pro\ without and with state inheritance.}
  \label{tab:components}
  \fontsize{7.5}{8.6}\selectfont
  \setlength{\tabcolsep}{2pt}
  \begin{tabular*}{\linewidth}{@{\extracolsep{\fill}}c>{\raggedright\arraybackslash}p{1.3in}ccc@{}}\toprule
   & Configuration & $K{=}192$ & $K{=}128$ & $K{=}64$\\\midrule
  (a) & CSI representatives at persistent positions, $m\equiv\mathbf{1}$ & 99.28 & 98.81 & 96.35\\
  (b) & + $m$ composed under the conservation constraint & 99.74 & 99.18 & 97.10\\
  (c) & + coverage-gap reserve & 99.84 & 99.27 & 97.05\\
  (d) & + closed-form head gains & 99.91 & 99.50 & 97.49\\
  (e) & + $H$ composed with confidence gating & 99.95 & 99.47 & 97.59\\
  (f) & + $P$ on the full original span (main) & 99.96 & 99.49 & 98.10\\\midrule
  (g) & \pro, no inheritance & 99.75 & 99.37 & 98.84\\
  (h) & + state inherited (main) & 99.83 & 100.00 & 99.22\\
  \bottomrule\end{tabular*}
\end{minipage}
\end{table}

\subsection{Ablation study}
\label{sec:components}

To verify the effectiveness of each component, we build up the state one step at a time under \fixed\ in Table~\ref{tab:components}, starting from representatives with unit contribution weights, which the decoder reads as single patches. Note that the performance retention is averaged over six tasks other than VQAv2. According to the results, composing the contribution weights under the conservation constraint in row (b) brings the largest single gain at every budget. Moreover, the closed-form head gains in row (d) add increments that grow as the budget shrinks. Conversely, removing the attention bias from the complete construction with the representatives and their positions held fixed lowers MME by $68$ points at 192 tokens (Appendix~\ref{app:ablation}). Restoring the original span of the positions in row (f) adds $0.5$ points at 64 tokens, and compressing the span around the complete state costs over $1.4$ points at 192 and 128 tokens (Appendix~\ref{app:components}). Therefore, the decoder relies on the original grid geometry once the contribution weights are in place. Row (c) refines the selection, and row (e) composes the content with confidence gating. In total, the composed state raises the retention at 64 tokens from $96.35\%$ to $98.10\%$. Furthermore, rows (g) and (h) verify the effectiveness of inheriting the state across the reductions of \pro. Without inheritance, \pro\ loses its entire margin over \fixed\ at 128 tokens. By inheriting the state, its retention increases from $99.37\%$ to $100.00\%$ at this budget and by $0.38$ points at 64 tokens (Appendix~\ref{app:inheritance}). Finally, we vary each constant one at a time, and the retention changes by at most $0.4$ points (Appendix~\ref{app:sensitivity}). Hence, one shared configuration serves all budgets and models.

\section{Conclusion}
\label{sec:conclusion}

In this paper, we propose \method, a training-free visual token compression framework that represents a compressed image as a composable visual state, which is closed under reduction and explicitly incorporated into decoder attention. Its three modules select the representatives by coverage, compose the contribution weights under a conservation constraint with cross-stage inheritance and the content with confidence gating, and incorporate the weights and original positions into every decoder layer with closed-form head gains. One state and one composition rule thereby support both fixed and progressive compression. \method\ attains the highest retention among the compared methods on LLaVA-1.5-7B at every budget under both strategies, generalizes to three other models, and accelerates LLaVA-NeXT-7B by up to $2.22\times$. We also observe that CoViST exceeds the vanilla models on several benchmarks, and we plan to explore the underlying principles to develop beyond-lossless visual token compression methods in the future.

\label{page:main-end}
\clearpage
\subsection*{AI use statement}

In this work, we used generative AI tools for tasks with required disclosure, including providing feedback on experiments, implementing methods, supporting qualitative and thematic data analysis, and interpreting results. We have not used generative AI tools for helping develop theoretical models or conceptual frameworks, proposing or refining hypotheses, and the rest of the required disclosure tasks are not applicable to this work. Additionally, we used generative AI tools for creating scientific figures, editing software code, drafting parts of the paper, summarizing existing literature, brainstorming, sourcing/searching for information, editing the paper to improve readability, and formatting references. All AI-assisted output was reviewed by the authors to ensure its correctness. We take responsibility for the final content of this work, including text, claims or artifacts produced with the aid of generative AI.

\subsection*{Reproducibility statement}
Section~\ref{sec:method} specifies the three modules and the two execution strategies of our method, and Algorithm~\ref{alg:state} summarizes their shared construction. Table~\ref{tab:constants} lists every constant of the main configuration. Appendix~\ref{app:proofs} gives the attention-output analysis and the conservation identity, Appendix~\ref{app:implementation} details the evidence computation, the confidence function, the head gains, and state inheritance, and Appendix~\ref{app:schedule} gives the exact schedules. Appendix~\ref{app:evaluation} describes the metrics, data protocols, cross-model adaptations, and protocols of the component studies, Appendix~\ref{app:external} lists the source scores and normalization conventions of every external comparison, and Appendix~\ref{app:runtime} details the runtime measurement protocol. Additionally, the code and the runtime environment details will be released to the public.

\subsection*{Ethics statement}
This work studies visual token compression for existing vision--language models on public benchmarks and introduces no new dataset or human-subject study. Compression can change fine-grained perception and the distribution of model answers even when average benchmark performance is preserved, so aggregate retention should not be read as a guarantee of reliability, fairness, or suitability for safety-critical use. The method inherits the limitations of the underlying models and evaluation benchmarks.

\bibliography{covist}

\begin{thebibliography}{81}
\providecommand{\natexlab}[1]{#1}
\providecommand{\url}[1]{\texttt{#1}}
\expandafter\ifx\csname urlstyle\endcsname\relax
  \providecommand{\doi}[1]{doi: #1}\else
  \providecommand{\doi}{doi: \begingroup \urlstyle{rm}\Url}\fi

\bibitem[Alvar et~al.(2025)Alvar, Singh, Akbari, and Zhang]{alvar2025divprune}
Saeed~Ranjbar Alvar, Gursimran Singh, Mohammad Akbari, and Yong Zhang.
\newblock {DivPrune: Diversity-based Visual Token Pruning for Large Multimodal
  Models}.
\newblock In \emph{IEEE/CVF Conference on Computer Vision and Pattern
  Recognition}, 2025.

\bibitem[Bai et~al.(2025)Bai, Chen, Liu, Wang, Ge, Song, Dang, Wang, Wang,
  Tang, Zhong, Zhu, Yang, Li, Wan, Wang, Ding, Fu, Xu, Ye, Zhang, Xie, Cheng,
  Zhang, Yang, Xu, and Lin]{qwen2025vl}
Shuai Bai, Keqin Chen, Xuejing Liu, Jialin Wang, Wenbin Ge, Sibo Song, Kai
  Dang, Peng Wang, Shijie Wang, Jun Tang, Humen Zhong, Yuanzhi Zhu, Mingkun
  Yang, Zhaohai Li, Jianqiang Wan, Pengfei Wang, Wei Ding, Zheren Fu, Yiheng
  Xu, Jiabo Ye, Xi~Zhang, Tianbao Xie, Zesen Cheng, Hang Zhang, Zhibo Yang,
  Haiyang Xu, and Junyang Lin.
\newblock {Qwen2.5-VL Technical Report}.
\newblock \emph{arXiv preprint arXiv:2502.13923}, 2025.

\bibitem[Bolya et~al.(2023)Bolya, Fu, Dai, Zhang, Feichtenhofer, and
  Hoffman]{bolya2023tome}
Daniel Bolya, Cheng-Yang Fu, Xiaoliang Dai, Peizhao Zhang, Christoph
  Feichtenhofer, and Judy Hoffman.
\newblock {Token Merging: Your ViT But Faster}.
\newblock In \emph{International Conference on Learning Representations}, 2023.

\bibitem[Cai et~al.(2025)Cai, Yang, Gao, and Lee]{cai2025matryoshka}
Mu~Cai, Jianwei Yang, Jianfeng Gao, and Yong~Jae Lee.
\newblock {Matryoshka Multimodal Models}.
\newblock In \emph{International Conference on Learning Representations}, 2025.

\bibitem[Chen et~al.(2026{\natexlab{a}})Chen, Cai, Luo, Zhang, Yin, and
  Chen]{clse2026}
Bin Chen, Yuxiang Cai, Yadan Luo, Yi~Zhang, Jianwei Yin, and Zhi Chen.
\newblock {Spectral Evolution-Guided Token Pruning in Multimodal Large Language
  Models}.
\newblock In \emph{European Conference on Computer Vision}, 2026{\natexlab{a}}.

\bibitem[Chen et~al.(2026{\natexlab{b}})Chen, Liu, Wen, Wang, Huang, and
  Chen]{v2drop2026}
Junjie Chen, Xuyang Liu, Zichen Wen, Yiyu Wang, Siteng Huang, and Honggang
  Chen.
\newblock {Variation-aware Vision Token Dropping for Faster Large
  Vision-Language Models}.
\newblock In \emph{IEEE/CVF Conference on Computer Vision and Pattern
  Recognition}, 2026{\natexlab{b}}.

\bibitem[Chen et~al.(2024)Chen, Zhao, Liu, Bai, Lin, Zhou, and
  Chang]{chen2024fastv}
Liang Chen, Haozhe Zhao, Tianyu Liu, Shuai Bai, Junyang Lin, Chang Zhou, and
  Baobao Chang.
\newblock {An Image is Worth 1/2 Tokens After Layer 2: Plug-and-Play Inference
  Acceleration for Large Vision-Language Models}.
\newblock In \emph{European Conference on Computer Vision}, 2024.

\bibitem[Chen et~al.(2026{\natexlab{c}})Chen, Cai, Guo, Cai, Yin, and
  Chen]{priortr2026}
Zengjie Chen, Yuxiang Cai, Jingcai Guo, Taotao Cai, Jianwei Yin, and Zhi Chen.
\newblock {Accelerating Multimodal Large Language Models with Prior-Corrected
  Token Reduction}.
\newblock In \emph{European Conference on Computer Vision}, 2026{\natexlab{c}}.

\bibitem[Cho et~al.(2026)Cho, Baek, Kim, and Ham]{cho2026restore}
Hyeonwoo Cho, Donghyeon Baek, Yewon Kim, and Bumsub Ham.
\newblock {Improving Visual Token Reduction via Rectifying Distortions for
  Efficient Multimodal LLM Inference}.
\newblock In \emph{International Conference on Machine Learning}, 2026.

\bibitem[Deng et~al.(2025)Deng, Li, Zhou, and He]{scope2025}
Jinhong Deng, Wen Li, Joey~Tianyi Zhou, and Yang He.
\newblock {SCOPE: Saliency-Coverage Oriented Token Pruning for Efficient
  Multimodel LLMs}.
\newblock In \emph{Advances in Neural Information Processing Systems}, 2025.

\bibitem[Ding et~al.(2026)Ding, Li, Liu, Zhang, Xiao, Kong, and
  Zhang]{etprune2026}
Zizhong Ding, Junxian Li, Kai Liu, Shaoqiu Zhang, Xiao Xiao, Linghe Kong, and
  Yulun Zhang.
\newblock {ET-Prune: Evidence-Aware Dynamic Budgeting for Visual Token Pruning
  in Text-Rich MLLMs}.
\newblock \emph{arXiv preprint arXiv:2608.01979}, 2026.

\bibitem[Dong et~al.(2026)Dong, Hu, Zhang, Yin, Fu, and Qian]{mmtok2026}
Sixun Dong, Juhua Hu, Mian Zhang, Ming Yin, Yanjie Fu, and Qi~Qian.
\newblock {MMTok: Multimodal Coverage Maximization for Efficient Inference of
  VLMs}.
\newblock In \emph{International Conference on Learning Representations}, 2026.

\bibitem[Du et~al.(2026)Du, Deng, Liu, Zhang, Chen, and Ren]{du2026coin}
Chenxi Du, Yongheng Deng, Jiani Liu, Yujia Zhang, Xi~Chen, and Ju~Ren.
\newblock {CoIn: Coverage and Informativeness-Guided Token Reduction for
  Efficient Large Multimodal Models}.
\newblock In \emph{IEEE/CVF Conference on Computer Vision and Pattern
  Recognition}, pp.\  10492--10501, 2026.

\bibitem[Fang et~al.(2026)Fang, Lyu, Zhang, Lu, Yu, and Pei]{prunesid2026}
Zhengyao Fang, Pengyuan Lyu, Chengquan Zhang, Guangming Lu, Jun Yu, and Wenjie
  Pei.
\newblock {Prune Redundancy, Preserve Essence: Vision Token Compression in VLMs
  via Synergistic Importance-Diversity}.
\newblock In \emph{International Conference on Learning Representations}, 2026.

\bibitem[Feng et~al.(2025)Feng, Lv, Cao, Xie, and Zhou]{feng2025adakv}
Yuan Feng, Junlin Lv, Yukun Cao, Xike Xie, and S.~Kevin Zhou.
\newblock {Ada-KV: Optimizing KV Cache Eviction by Adaptive Budget Allocation
  for Efficient LLM Inference}.
\newblock In \emph{Advances in Neural Information Processing Systems}, 2025.

\bibitem[Fu et~al.(2025)Fu, Chen, Shen, Qin, Zhang, Lin, Yang, Zheng, Li, Sun,
  Wu, Ji, Shan, and He]{fu2023mme}
Chaoyou Fu, Peixian Chen, Yunhang Shen, Yulei Qin, Mengdan Zhang, Xu~Lin,
  Jinrui Yang, Xiawu Zheng, Ke~Li, Xing Sun, Yunsheng Wu, Rongrong Ji, Caifeng
  Shan, and Ran He.
\newblock {MME: A Comprehensive Evaluation Benchmark for Multimodal Large
  Language Models}.
\newblock In \emph{Advances in Neural Information Processing Systems Datasets
  and Benchmarks Track}, 2025.

\bibitem[Goyal et~al.(2017)Goyal, Khot, Summers-Stay, Batra, and
  Parikh]{goyal2017vqav2}
Yash Goyal, Tejas Khot, Douglas Summers-Stay, Dhruv Batra, and Devi Parikh.
\newblock Making the {V} in {VQA} matter: Elevating the role of image
  understanding in visual question answering.
\newblock In \emph{IEEE/CVF Conference on Computer Vision and Pattern
  Recognition}, pp.\  6904--6913, 2017.

\bibitem[Guo et~al.(2026)Guo, Wang, Zhang, Meng, Zhang, Cao, Jiang, Wu, Wu,
  Chen, Luo, Cheng, Tang, Li, Li, Huang, Wang, and Zhang]{starpro2026}
Yichen Guo, Tinghao Wang, Qizhe Zhang, Lingbei Meng, Yuan Zhang, Jiajun Cao,
  Hao Jiang, Chenwei Wu, Jixian Wu, Sixiang Chen, Tao Luo, Hongyang Cheng, Kai
  Tang, Chenxi Li, Renyuan Li, Xiande Huang, Wenya Wang, and Shanghang Zhang.
\newblock {STAR-Pro: Stage-Wise Token Adaptive Reduction with Progressive
  Refinement for Efficient Large Vision-Language Models}.
\newblock \emph{arXiv preprint arXiv:2609.05916}, 2026.

\bibitem[He et~al.(2026)He, Young, and Xu]{stepprune2026}
Landi He, Shawn Young, and Lijian Xu.
\newblock {Stepwise Token Selection for Efficient Multimodal Large Language
  Models}.
\newblock \emph{arXiv preprint arXiv:2606.16067}, 2026.

\bibitem[Hudson \& Manning(2019)Hudson and Manning]{hudson2019gqa}
Drew~A. Hudson and Christopher~D. Manning.
\newblock {GQA: A New Dataset for Real-World Visual Reasoning and Compositional
  Question Answering}.
\newblock In \emph{IEEE/CVF Conference on Computer Vision and Pattern
  Recognition}, 2019.

\bibitem[Jia et~al.(2026)Jia, Tang, and Yang]{s2prune2026}
Yuanyuan Jia, Shunpu Tang, and Qianqian Yang.
\newblock {S$^2$Prune: Spatially Structured Visual Token Pruning for Multimodal
  Large Language Models}.
\newblock \emph{arXiv preprint arXiv:2609.01224}, 2026.

\bibitem[Kim et~al.(2026)Kim, Zhang, Liu, Jung, Lee, and Hong]{zooprune2026}
Youngeun Kim, Youjia Zhang, Huiling Liu, Aecheon Jung, Sunwoo Lee, and Sungeun
  Hong.
\newblock {ZOO-Prune: Training-Free Token Pruning via Zeroth-Order Gradient
  Estimation in Vision-Language Models}.
\newblock In \emph{IEEE/CVF Conference on Computer Vision and Pattern
  Recognition}, pp.\  39572--39582, 2026.

\bibitem[Lee et~al.(2026)Lee, Wen, and Choi]{spare2026}
Jaeyeon Lee, Shunjie Wen, and Dong-Wan Choi.
\newblock {Moving Beyond Diversity: Visual Token Pruning as Subspace
  Reconstruction for Efficient VLMs}.
\newblock \emph{arXiv preprint arXiv:2606.18681}, 2026.

\bibitem[Li et~al.(2025{\natexlab{a}})Li, Duan, Zhang, Ma, Xie, Carneiro,
  Yaqub, and Wang]{transprune2026}
Ao~Li, Yuxiang Duan, Jinghui Zhang, Congbo Ma, Yutong Xie, Gustavo Carneiro,
  Mohammad Yaqub, and Hu~Wang.
\newblock {TransPrune: Token Transition Pruning for Efficient Large
  Vision-Language Model}.
\newblock \emph{arXiv preprint arXiv:2507.20630}, 2025{\natexlab{a}}.

\bibitem[Li et~al.(2024)Li, Ge, Ge, Wang, Wang, Zhang, and Shan]{li2024seed}
Bohao Li, Yuying Ge, Yixiao Ge, Guangzhi Wang, Rui Wang, Ruimao Zhang, and Ying
  Shan.
\newblock {SEED-Bench: Benchmarking Multimodal Large Language Models}.
\newblock In \emph{IEEE/CVF Conference on Computer Vision and Pattern
  Recognition}, pp.\  13299--13308, 2024.

\bibitem[Li et~al.(2026{\natexlab{a}})Li, Chen, Chen, Shan, Zuo, Lyu, An, Li,
  and Yang]{occamtoken2026}
Geng Li, Guohao Chen, Ting Chen, Shilin Shan, Kuangji Zuo, Bofan Lyu, Tuo An,
  Gen Li, and Jianfei Yang.
\newblock {OccamToken: Efficient VLM Inference with Training-Free and
  Budget-Adaptive Token Pruning}.
\newblock \emph{arXiv preprint arXiv:2605.29657}, 2026{\natexlab{a}}.

\bibitem[Li et~al.(2026{\natexlab{b}})Li, Ji, Zhang, and Li]{li2026care}
Jiasheng Li, Zhong Ji, Yan Zhang, and Huihui Li.
\newblock {Calibrate Before Reason: Robust Visual Token Reduction against
  Semantic Drift in VLMs}.
\newblock \emph{arXiv preprint arXiv:2607.27700}, 2026{\natexlab{b}}.

\bibitem[Li et~al.(2026{\natexlab{c}})Li, Hu, Huang, Li, Zhong, and
  Wang]{sinkpruner2026}
Shiyu Li, Zi-Yuan Hu, Shijia Huang, Yanyang Li, Yiwu Zhong, and Liwei Wang.
\newblock {SinkPruner: Sink-Free Visual Token Pruning for Multimodal Large
  Language Models}.
\newblock In \emph{Findings of the Association for Computational Linguistics:
  EMNLP}, 2026{\natexlab{c}}.

\bibitem[Li et~al.(2026{\natexlab{d}})Li, Zheng, Zhao, Liang, Liu, Zhu, Chen,
  Zhou, Zhao, and Guo]{li2026crisp}
Xu~Li, Yi~Zheng, Mengyang Zhao, Yuxuan Liang, Zhe Liu, Rui Zhu, Xiaolei Chen,
  Wei Zhou, Baoquan Zhao, and Juncen Guo.
\newblock {CRISP: Pre-LLM Yet Text-Driven Visual Token Pruning for Efficient
  LVLM Inference}.
\newblock In \emph{IEEE International Conference on Multimedia and Expo},
  2026{\natexlab{d}}.

\bibitem[Li et~al.(2025{\natexlab{b}})Li, Zhan, Chen, Liu, and Lu]{mob2026}
Yangfu Li, Hongjian Zhan, Tianyi Chen, Qi~Liu, and Yue Lu.
\newblock {Why 1 + 1 < 1 in Visual Token Pruning: Beyond Naive Integration via
  Multi-Objective Balanced Covering}.
\newblock In \emph{Advances in Neural Information Processing Systems},
  2025{\natexlab{b}}.

\bibitem[Li et~al.(2023)Li, Du, Zhou, Wang, Zhao, and Wen]{li2023pope}
Yifan Li, Yifan Du, Kun Zhou, Jinpeng Wang, Wayne~Xin Zhao, and Ji-Rong Wen.
\newblock {Evaluating Object Hallucination in Large Vision-Language Models}.
\newblock In \emph{Conference on Empirical Methods in Natural Language
  Processing}, 2023.

\bibitem[Li et~al.(2026{\natexlab{e}})Li, Li, Li, Chen, and
  Zhang]{specflow2026}
Zhaoyang Li, Yanjun Li, Wangkai Li, Yujia Chen, and Tianzhu Zhang.
\newblock {Spectral Heat Flow for Conservative Token Condensation in
  Vision-Language Models}.
\newblock In \emph{International Conference on Machine Learning},
  2026{\natexlab{e}}.

\bibitem[Liu et~al.(2024{\natexlab{a}})Liu, Li, Li, and Lee]{llava2024}
Haotian Liu, Chunyuan Li, Yuheng Li, and Yong~Jae Lee.
\newblock {Improved Baselines with Visual Instruction Tuning}.
\newblock In \emph{IEEE/CVF Conference on Computer Vision and Pattern
  Recognition}, 2024{\natexlab{a}}.

\bibitem[Liu et~al.(2024{\natexlab{b}})Liu, Li, Li, Li, Zhang, Shen, and
  Lee]{llavanext2024}
Haotian Liu, Chunyuan Li, Yuheng Li, Bo~Li, Yuanhan Zhang, Sheng Shen, and
  Yong~Jae Lee.
\newblock {LLaVA-NeXT: Improved Reasoning, OCR, and World Knowledge}.
\newblock LLaVA project blog, 2024{\natexlab{b}}.
\newblock URL \url{https://llava-vl.github.io/blog/2024-01-30-llava-next/}.

\bibitem[Liu et~al.(2026)Liu, Du, Zhu, Lian, Li, Chen, Guan, and
  Wang]{hiprune2026}
Jizhihui Liu, Feiyi Du, Guangdao Zhu, Niu Lian, Jun Li, Bin Chen, Weili Guan,
  and Yaowei Wang.
\newblock {HiPrune: Hierarchical Attention for Efficient Token Pruning in
  Vision-Language Models}.
\newblock In \emph{Findings of the Association for Computational Linguistics:
  ACL}, 2026.

\bibitem[Liu et~al.(2024{\natexlab{c}})Liu, Duan, Zhang, Li, Zhang, Zhao, Yuan,
  Wang, He, Liu, Chen, and Lin]{liu2024mmbench}
Yuan Liu, Haodong Duan, Yuanhan Zhang, Bo~Li, Songyang Zhang, Wangbo Zhao, Yike
  Yuan, Jiaqi Wang, Conghui He, Ziwei Liu, Kai Chen, and Dahua Lin.
\newblock {MMBench: Is Your Multi-modal Model an All-around Player?}
\newblock In \emph{European Conference on Computer Vision}, 2024{\natexlab{c}}.

\bibitem[Lu et~al.(2026{\natexlab{a}})Lu, Zhang, Jin, Hu, Shi, Jiang, Hu, and
  Li]{lrcp2026}
Hongyu Lu, Feng Zhang, Wenwei Jin, Huanling Hu, Tianjun Shi, Shikai Jiang, Yao
  Hu, and Jiawei Li.
\newblock {LRCP: Low-Rank Compressibility Guided Visual Token Pruning for
  Efficient LVLMs}.
\newblock \emph{arXiv preprint arXiv:2605.15621}, 2026{\natexlab{a}}.

\bibitem[Lu et~al.(2026{\natexlab{b}})Lu, Zhang, Jin, Hu, Zhang, Hu, Li, and
  Jiang]{evocut2026}
Hongyu Lu, Feng Zhang, Wenwei Jin, Huanling Hu, Pengfei Zhang, Yao Hu, Jiawei
  Li, and Shikai Jiang.
\newblock {EvoCut: Multi-Layer Evolution-Aware Visual Token Compression for
  Efficient Large Vision-Language Models}.
\newblock \emph{arXiv preprint arXiv:2606.01756}, 2026{\natexlab{b}}.

\bibitem[Lu et~al.(2022)Lu, Mishra, Xia, Qiu, Chang, Zhu, Tafjord, Clark, and
  Kalyan]{lu2022scienceqa}
Pan Lu, Swaroop Mishra, Tony Xia, Liang Qiu, Kai-Wei Chang, Song-Chun Zhu,
  Oyvind Tafjord, Peter Clark, and Ashwin Kalyan.
\newblock {Learn to Explain: Multimodal Reasoning via Thought Chains for
  Science Question Answering}.
\newblock In \emph{Advances in Neural Information Processing Systems}, 2022.

\bibitem[Lu et~al.(2026{\natexlab{c}})Lu, Wu, Xu, Luo, Fan, Chen, Dong, Yang,
  and Guo]{rora2026}
Qiyanhui Lu, Han Wu, Rongjian Xu, Tingzhang Luo, Cheng Fan, Xinghao Chen,
  Minjing Dong, Jufeng Yang, and Jianyuan Guo.
\newblock {RoRA: Role-Oriented Regional Allocation for Visual Token Pruning in
  MLLMs}.
\newblock \emph{arXiv preprint arXiv:2608.07088}, 2026{\natexlab{c}}.

\bibitem[Ma et~al.(2026{\natexlab{a}})Ma, Jiang, Li, Liu, Li, Xu, and
  Zhang]{provip2026}
Chaofang Ma, Lin Jiang, Carol~Jingyi Li, Xingyu Liu, Zeyu Li, Jiang Xu, and Wei
  Zhang.
\newblock {Not All Attention Heads Contribute to Critical Visual Token
  Selection: Head-Aware Pruning Matters More}.
\newblock \emph{arXiv preprint arXiv:2608.25332}, 2026{\natexlab{a}}.

\bibitem[Ma et~al.(2026{\natexlab{b}})Ma, Xiao, Chen, Min, Zhu, Wang, and
  Liao]{ma2026desap}
Kexin Ma, Jing Xiao, Chaofeng Chen, Geyong Min, Guibo Zhu, Jinqiao Wang, and
  Liang Liao.
\newblock {Decoupled Similarity for Task-Aware Token Pruning in Large
  Vision-Language Models}.
\newblock In \emph{ACM International Conference on Multimedia},
  2026{\natexlab{b}}.

\bibitem[Ma et~al.(2026{\natexlab{c}})Ma, Zhang, Wang, Chen, Song, and
  Zheng]{apet2026}
Qiankun Ma, Ziyao Zhang, Haofei Wang, Jie Chen, Zhen Song, and Hairong Zheng.
\newblock {ApET: Approximation-Error Guided Token Compression for Efficient
  VLMs}.
\newblock In \emph{IEEE/CVF Conference on Computer Vision and Pattern
  Recognition}, 2026{\natexlab{c}}.

\bibitem[Oh \& Kim(2026)Oh and Kim]{oh2026anchorprune}
Kyuan Oh and Bumsoo Kim.
\newblock {AnchorPrune: Relevance-Anchored Contextual Expansion for Visual
  Token Pruning}.
\newblock In \emph{European Conference on Computer Vision}, 2026.

\bibitem[Ou et~al.(2026)Ou, Song, Zhou, Sun, Zhang, and Luo]{sieve2026}
Congyang Ou, Ruike Song, Yang Zhou, Libo Sun, Haokui Zhang, and Zhenbo Luo.
\newblock {When Vision Becomes Text: Visual Token Pruning via Cross-Modal
  Residual Guidance in VLMs}.
\newblock \emph{arXiv preprint arXiv:2608.10489}, 2026.

\bibitem[Qian et~al.(2026)Qian, Wang, Shi, Jiang, Gao, and Yu]{e2s2026}
Taoyu Qian, Qi~Wang, Daqian Shi, Yuanhao Jiang, Shang Gao, and Hualong Yu.
\newblock {E2S-Pruner: Progressive Two-Stage Evidence Fusion for Visual Token
  Pruning in Vision-Language Models}.
\newblock \emph{arXiv preprint arXiv:2608.23253}, 2026.

\bibitem[Radford et~al.(2021)Radford, Kim, Hallacy, Ramesh, Goh, Agarwal,
  Sastry, Askell, Mishkin, Clark, Krueger, and Sutskever]{radford2021clip}
Alec Radford, Jong~Wook Kim, Chris Hallacy, Aditya Ramesh, Gabriel Goh,
  Sandhini Agarwal, Girish Sastry, Amanda Askell, Pamela Mishkin, Jack Clark,
  Gretchen Krueger, and Ilya Sutskever.
\newblock {Learning Transferable Visual Models From Natural Language
  Supervision}.
\newblock In \emph{International Conference on Machine Learning}, 2021.

\bibitem[Shang et~al.(2025)Shang, Cai, Xu, Lee, and Yan]{shang2025prumerge}
Yuzhang Shang, Mu~Cai, Bingxin Xu, Yong~Jae Lee, and Yan Yan.
\newblock {LLaVA-PruMerge: Adaptive Token Reduction for Efficient Large
  Multimodal Models}.
\newblock In \emph{Proceedings of the IEEE/CVF International Conference on
  Computer Vision}, pp.\  22857--22867, 2025.

\bibitem[Singh et~al.(2019)Singh, Natarajan, Shah, Jiang, Chen, Batra, Parikh,
  and Rohrbach]{singh2019textvqa}
Amanpreet Singh, Vivek Natarajan, Meet Shah, Yu~Jiang, Xinlei Chen, Dhruv
  Batra, Devi Parikh, and Marcus Rohrbach.
\newblock {Towards VQA Models That Can Read}.
\newblock In \emph{IEEE/CVF Conference on Computer Vision and Pattern
  Recognition}, 2019.

\bibitem[Song et~al.(2025)Song, Wang, Chen, Wang, Guan, and Wang]{trim2025}
Dingjie Song, Wenjun Wang, Shunian Chen, Xidong Wang, Michael~X. Guan, and
  Benyou Wang.
\newblock {Less is More: A Simple yet Effective Token Reduction Method for
  Efficient Multi-modal LLMs}.
\newblock In \emph{Proceedings of the 31st International Conference on
  Computational Linguistics}, pp.\  7614--7623, 2025.

\bibitem[Su et~al.(2021)Su, Lu, Pan, Murtadha, Wen, and Liu]{su2024roformer}
Jianlin Su, Yu~Lu, Shengfeng Pan, Ahmed Murtadha, Bo~Wen, and Yunfeng Liu.
\newblock {RoFormer: Enhanced Transformer with Rotary Position Embedding}.
\newblock \emph{arXiv preprint arXiv:2104.09864}, 2021.

\bibitem[Sun et~al.(2026{\natexlab{a}})Sun, Ji, Jin, Deng, Fu, and
  Wang]{hap2026}
Yuanhao Sun, Huawei Ji, Yuan Jin, Cheng Deng, Luoyi Fu, and Xinbing Wang.
\newblock {HAP: Head-Adaptive Visual Token Pruning via Cross-Modal Alignment}.
\newblock \emph{arXiv preprint arXiv:2608.23921}, 2026{\natexlab{a}}.

\bibitem[Sun et~al.(2026{\natexlab{b}})Sun, Ma, Liu, Chen, Tang, Hu, and
  Xu]{sun2026ivc}
Zhichao Sun, Yidong Ma, Gang Liu, Yibo Chen, Xu~Tang, Yao Hu, and Yongchao Xu.
\newblock {IVC-Prune: Revealing the Implicit Visual Coordinates in LVLMs for
  Vision Token Pruning}.
\newblock In \emph{International Conference on Learning Representations},
  2026{\natexlab{b}}.

\bibitem[Tong et~al.(2026)Tong, Bai, Zhu, Jiang, and Liu]{tong2026fsr}
Enwei Tong, Yuanchao Bai, Yao Zhu, Junjun Jiang, and Xianming Liu.
\newblock {Focus-Scan-Refine: From Human Visual Perception to Efficient Visual
  Token Pruning}.
\newblock \emph{arXiv preprint arXiv:2602.05809}, 2026.

\bibitem[Tong et~al.(2025)Tong, Jin, Qin, Li, Zou, Li, Li, and Li]{flowcut2026}
Jintao Tong, Wenwei Jin, Pengda Qin, Anqi Li, Yixiong Zou, Yuhong Li, Yuhua Li,
  and Ruixuan Li.
\newblock {FlowCut: Rethinking Redundancy via Information Flow for Efficient
  Vision-Language Models}.
\newblock In \emph{Advances in Neural Information Processing Systems}, 2025.

\bibitem[Wang et~al.(2026{\natexlab{a}})Wang, Zhang, Han, and Zhang]{sts2026}
Jiahui Wang, Kai Zhang, Mai Han, and Huanghe Zhang.
\newblock {When Attention Collapses: Stage-Aware Visual Token Pruning from
  Structure to Semantics}.
\newblock \emph{arXiv preprint arXiv:2606.03569}, 2026{\natexlab{a}}.

\bibitem[Wang et~al.(2026{\natexlab{b}})Wang, Guo, Huang, Lu, Zhang, Li, Zhang,
  Cao, Shen, Du, Gan, Wang, Cong, and Zhang]{tops2026}
Tinghao Wang, Yichen Guo, Rui Huang, Zheng Lu, Qizhe Zhang, Chenxi Li, Yuan
  Zhang, Jiajun Cao, Zhirong Shen, Yaosong Du, Guangyan Gan, Wenya Wang,
  Lin~William Cong, and Shanghang Zhang.
\newblock {TOPS: First-Principles Visual Token Pruning via Constructing Token
  Optimal Preservation Sets for Efficient MLLM Inference}.
\newblock \emph{arXiv preprint arXiv:2606.27161}, 2026{\natexlab{b}}.

\bibitem[Wang et~al.(2026{\natexlab{c}})Wang, Yang, and Shen]{wang2026eadp}
Xuehui Wang, Xuankun Yang, and Wei Shen.
\newblock {Combating Textual Noise and Redundancy: Entropy-Aware Dense Visual
  Token Pruning}.
\newblock In \emph{European Conference on Computer Vision}, 2026{\natexlab{c}}.

\bibitem[Wang et~al.(2026{\natexlab{d}})Wang, Wu, Ni, Yang, Liu, Yang, Wen, He,
  Tang, Liu, and Zhou]{wtr2026}
Yahong Wang, Juncheng Wu, Zhangkai Ni, Longzhen Yang, Yihang Liu, Chengmei
  Yang, Ying Wen, Lianghua He, Xianfeng Tang, Hui Liu, and Yuyin Zhou.
\newblock {When Token Pruning is Worse than Random: Understanding Visual Token
  Information in VLLMs}.
\newblock In \emph{IEEE/CVF Conference on Computer Vision and Pattern
  Recognition}, 2026{\natexlab{d}}.

\bibitem[Wang et~al.(2026{\natexlab{e}})Wang, Qiao, Diao, Zhuge, Zhang, Zhang,
  Zhang, and Lu]{wang2026era}
Yuhao Wang, Mu~Qiao, Haiwen Diao, Yunzhi Zhuge, Pingping Zhang, Xindong Zhang,
  Lei Zhang, and Huchuan Lu.
\newblock {ERA: Entropy-Guided Visual Token Pruning with Rectified Attention
  for Efficient MLLMs}.
\newblock \emph{arXiv preprint arXiv:2606.31982}, 2026{\natexlab{e}}.

\bibitem[Wen et~al.(2026)Wen, Lee, and Choi]{cenprune2026}
Shunjie Wen, Jaeyeon Lee, and Dong-Wan Choi.
\newblock {Centering before Pruning: Lightweight Geometry Correction for
  Diversity-Based Visual Token Pruning in LVLMs}.
\newblock \emph{arXiv preprint arXiv:2608.30263}, 2026.

\bibitem[Wen et~al.(2025)Wen, Gao, Wang, Zhang, Zhang, Li, He, and
  Zhang]{dart2025}
Zichen Wen, Yifeng Gao, Shaobo Wang, Junyuan Zhang, Qintong Zhang, Weijia Li,
  Conghui He, and Linfeng Zhang.
\newblock {Stop Looking for Important Tokens in Multimodal Language Models:
  Duplication Matters More}.
\newblock In \emph{Conference on Empirical Methods in Natural Language
  Processing}, 2025.

\bibitem[Wu et~al.(2026{\natexlab{a}})Wu, Fan, Dai, Tong, Ma, and
  Shen]{hidrop2026}
Hao Wu, Yingqi Fan, Jinyang Dai, Junlong Tong, Yunpu Ma, and Xiaoyu Shen.
\newblock {HiDrop: Hierarchical Vision Token Reduction in MLLMs via Late
  Injection, Concave Pyramid Pruning, and Early Exit}.
\newblock In \emph{International Conference on Learning Representations},
  2026{\natexlab{a}}.

\bibitem[Wu et~al.(2026{\natexlab{b}})Wu, Ma, Ni, Zhang, Shu, Jiang, and
  Chen]{vlmpruner2026}
Zhenkai Wu, Xiaowen Ma, Zhenliang Ni, Dengming Zhang, Han Shu, Xin Jiang, and
  Xinghao Chen.
\newblock {VLM-Pruner: Buffering for Spatial Sparsity in an Efficient VLM
  Centrifugal Token Pruning Paradigm}.
\newblock In \emph{IEEE/CVF Conference on Computer Vision and Pattern
  Recognition}, 2026{\natexlab{b}}.

\bibitem[Xing et~al.(2025)Xing, Huang, Dong, Lu, Zhang, Zang, Cao, He, Wang,
  Wu, and Lin]{xing2025pyramiddrop}
Long Xing, Qidong Huang, Xiaoyi Dong, Jiajie Lu, Pan Zhang, Yuhang Zang, Yuhang
  Cao, Conghui He, Jiaqi Wang, Feng Wu, and Dahua Lin.
\newblock {Conical Visual Concentration for Efficient Large Vision-Language
  Models}.
\newblock In \emph{IEEE/CVF Conference on Computer Vision and Pattern
  Recognition}, 2025.

\bibitem[Xu et~al.(2026)Xu, Shi, and Gao]{score2026}
Tong Xu, Hailong Shi, and Xingyu Gao.
\newblock {SCoRe: Salience-Coverage Reduction for Vision Token Pruning in
  Vision-Language Models}.
\newblock In \emph{Proceedings of the IEEE/CVF Conference on Computer Vision
  and Pattern Recognition}, pp.\  24686--24695, 2026.

\bibitem[Yang et~al.(2025)Yang, Chen, Tian, Wang, Li, Yu, and
  Jia]{yang2025visionzip}
Senqiao Yang, Yukang Chen, Zhuotao Tian, Chengyao Wang, Jingyao Li, Bei Yu, and
  Jiaya Jia.
\newblock {VisionZip: Longer is Better but Not Necessary in Vision Language
  Models}.
\newblock In \emph{IEEE/CVF Conference on Computer Vision and Pattern
  Recognition}, 2025.

\bibitem[Ye et~al.(2024)Ye, Wu, Lin, and Zhou]{fitprune2025}
Weihao Ye, Qiong Wu, Wenhao Lin, and Yiyi Zhou.
\newblock {Fit and Prune: Fast and Training-free Visual Token Pruning for
  Multi-modal Large Language Models}.
\newblock \emph{arXiv preprint arXiv:2409.10197}, 2024.

\bibitem[Yu et~al.(2026)Yu, Li, Qu, Wang, Chen, and Zhu]{visiontrim2026}
Hanxun Yu, Wentong Li, Xuan Qu, Song Wang, Junbo Chen, and Jianke Zhu.
\newblock {VisionTrim: Unified Vision Token Compression for Training-Free MLLM
  Acceleration}.
\newblock In \emph{International Conference on Learning Representations}, 2026.

\bibitem[Yue et~al.(2024)Yue, Ni, Zhang, Zheng, Liu, Zhang, Stevens, Jiang,
  Ren, Sun, Wei, Yu, Yuan, Sun, Yin, Zheng, Yang, Liu, Huang, Sun, Su, and
  Chen]{yue2024mmmu}
Xiang Yue, Yuansheng Ni, Kai Zhang, Tianyu Zheng, Ruoqi Liu, Ge~Zhang, Samuel
  Stevens, Dongfu Jiang, Weiming Ren, Yuxuan Sun, Cong Wei, Botao Yu, Ruibin
  Yuan, Renliang Sun, Ming Yin, Boyuan Zheng, Zhenzhu Yang, Yibo Liu, Wenhao
  Huang, Huan Sun, Yu~Su, and Wenhu Chen.
\newblock {MMMU: A Massive Multi-discipline Multimodal Understanding and
  Reasoning Benchmark for Expert AGI}.
\newblock In \emph{IEEE/CVF Conference on Computer Vision and Pattern
  Recognition}, pp.\  9556--9567, 2024.

\bibitem[Zhang et~al.(2026{\natexlab{a}})Zhang, Ma, Fang, Yu, Zhang, Zhang, Mi,
  and Yu]{zhang2026vscan}
Ce~Zhang, Kaixin Ma, Tianqing Fang, Wenhao Yu, Hongming Zhang, Zhisong Zhang,
  Haitao Mi, and Dong Yu.
\newblock {VScan: Rethinking Visual Token Reduction for Efficient Large
  Vision-Language Models}.
\newblock \emph{Transactions on Machine Learning Research}, 2026{\natexlab{a}}.

\bibitem[Zhang et~al.(2026{\natexlab{b}})Zhang, Yu, Wu, Wen, Yan, Ding, Qi, and
  Zhang]{d2pruner2026}
Evelyn Zhang, Fufu Yu, Aoqi Wu, Zichen Wen, Ke~Yan, Shouhong Ding, Biqing Qi,
  and Linfeng Zhang.
\newblock {D2Pruner: Debiased Importance and Structural Diversity for MLLM
  Token Pruning}.
\newblock In \emph{Proceedings of the AAAI Conference on Artificial
  Intelligence}, pp.\  12412--12420, 2026{\natexlab{b}}.

\bibitem[Zhang et~al.(2024)Zhang, Cheng, Lu, Zhuo, Wang, Cao, Guo, She, and
  Zhang]{fastervlm2024}
Qizhe Zhang, Aosong Cheng, Ming Lu, Zhiyong Zhuo, Minqi Wang, Jiajun Cao,
  Shaobo Guo, Qi~She, and Shanghang Zhang.
\newblock {[CLS] Attention is All You Need for Training-Free Visual Token
  Pruning: Make VLM Inference Faster}.
\newblock \emph{arXiv preprint arXiv:2412.01818v1}, 2024.

\bibitem[Zhang et~al.(2025{\natexlab{a}})Zhang, Cheng, Lu, Zhang, Zhuo, Cao,
  Guo, She, and Zhang]{zhang2025vispruner}
Qizhe Zhang, Aosong Cheng, Ming Lu, Renrui Zhang, Zhiyong Zhuo, Jiajun Cao,
  Shaobo Guo, Qi~She, and Shanghang Zhang.
\newblock {Beyond Text-Visual Attention: Exploiting Visual Cues for Effective
  Token Pruning in VLMs}.
\newblock In \emph{Proceedings of the IEEE/CVF International Conference on
  Computer Vision}, pp.\  20857--20867, 2025{\natexlab{a}}.

\bibitem[Zhang et~al.(2025{\natexlab{b}})Zhang, Liu, Li, Lu, Zhang, Pan, She,
  and Zhang]{zheng2025cdpruner}
Qizhe Zhang, Mengzhen Liu, Lichen Li, Ming Lu, Yuan Zhang, Junwen Pan, Qi~She,
  and Shanghang Zhang.
\newblock {Beyond Attention or Similarity: Maximizing Conditional Diversity for
  Token Pruning in MLLMs}.
\newblock In \emph{Advances in Neural Information Processing Systems},
  2025{\natexlab{b}}.

\bibitem[Zhang et~al.(2025{\natexlab{c}})Zhang, Fang, Yang, and
  Feng]{zhang2025llavamini}
Shaolei Zhang, Qingkai Fang, Zhe Yang, and Yang Feng.
\newblock {LLaVA-Mini: Efficient Image and Video Large Multimodal Models with
  One Vision Token}.
\newblock In \emph{International Conference on Learning Representations},
  2025{\natexlab{c}}.

\bibitem[Zhang et~al.(2025{\natexlab{d}})Zhang, Fan, Ma, Zheng, Huang, Cheng,
  Gudovskiy, Okuno, Nakata, Keutzer, and Zhang]{zhang2025sparsevlm}
Yuan Zhang, Chun-Kai Fan, Junpeng Ma, Wenzhao Zheng, Tao Huang, Kuan Cheng,
  Denis Gudovskiy, Tomoyuki Okuno, Yohei Nakata, Kurt Keutzer, and Shanghang
  Zhang.
\newblock {SparseVLM: Visual Token Sparsification for Efficient Vision-Language
  Model Inference}.
\newblock In \emph{International Conference on Machine Learning},
  2025{\natexlab{d}}.

\bibitem[Zhong et~al.(2026)Zhong, An, Wang, Li, Yang, and He]{dive2026}
Chen Zhong, Xiao An, Zijie Wang, Jiepan Li, Guangyi Yang, and Wei He.
\newblock {DIVE: Dynamic Iterative Visual Evidence Construction for Efficient
  Vision-Language Models}.
\newblock \emph{arXiv preprint arXiv:2608.04496}, 2026.

\bibitem[Zhou et~al.(2026)Zhou, Zhang, Yang, Gao, and Wang]{zhou2026calibrated}
Qing Zhou, Hongyuan Zhang, Tao Yang, Junyu Gao, and Qi~Wang.
\newblock {Statistically Calibrated Scaling for Token Merging in Transformers}.
\newblock In \emph{International Conference on Machine Learning}, 2026.

\bibitem[Zhu et~al.(2026)Zhu, You, Jiang, Yang, Liu, and Yuan]{coverpruner2026}
Qingchan Zhu, Weihang You, Hanqi Jiang, Changdi Yang, Tianming Liu, and Geng
  Yuan.
\newblock {Who Speaks for the Pruned? Visual Token Pruning as Coverage
  Optimization}.
\newblock In \emph{Conference on Empirical Methods in Natural Language
  Processing}, 2026.

\bibitem[Zou et~al.(2025)Zou, Lu, Wang, Yan, Lyu, Zheng, Zhang, and
  Hu]{zou2025holov}
Xin Zou, Di~Lu, Yizhou Wang, Yibo Yan, Yuanhuiyi Lyu, Xu~Zheng, Linfeng Zhang,
  and Xuming Hu.
\newblock {Don't Just Chase "Highlighted Tokens" in MLLMs: Revisiting Visual
  Holistic Context Retention}.
\newblock In \emph{Advances in Neural Information Processing Systems}, 2025.

\end{thebibliography}
\bibliographystyle{iclr2027_conference}
\clearpage
\appendix
\etocdepthtag.toc{appendixmatter}
\etocsettagdepth{mainmatter}{none}
\etocsettagdepth{appendixmatter}{subsection}
\etocsettocstyle{\section*{Appendix Contents}}{}
\begingroup\hypersetup{linkcolor=black}\tableofcontents\endgroup
\clearpage
\section{Attention-output analysis and conservation identity}
\label{app:proofs}

\subsection{Setting and statement}
\label{app:setting}

We consider one attention head and a fixed text query that follows the image. Specifically, we partition the uncompressed visual tokens into the groups $C_i$ represented by each representative, with total weight $Z_i=\sum_{j\in C_i}a_j$ and conditional value $\bar v_i=\sum_{j\in C_i}a_jv_j/Z_i$, where $a_j>0$ is the unnormalized attention weight of token $j$. The uncompressed output is then exactly the normalized weighted sum over $(Z_i,\bar v_i)$ and the text tokens, whereas the compressed computation replaces each group by $(\widehat Z_i,\widehat v_i)$.

\begin{proposition}[Local attention approximation]
\label{prop:attention}
Suppose that $\|\bar v_i-\widehat v_i\|\leq\eta$ and $|\log\widehat Z_i-\log Z_i|\leq\delta$ for every visual group, that all compressed representative values and text values have norm at most $V$, and that the query and the text keys and values are unchanged. Then
\begin{equation}
  \norm{y-\widehat y}\leq
  \eta+2V\tanh(\delta/2).
  \label{eq:bound}
\end{equation}
\end{proposition}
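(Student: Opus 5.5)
The plan is to compare $y$ and $\widehat y$ through an intermediate output. Write both outputs as normalized weighted sums over a common index set: the visual groups $i$ and the text tokens $t$. Then
\[
y=\frac{\sum_i Z_i\bar v_i+\sum_t a_tv_t}{\sum_i Z_i+\sum_t a_t},
\qquad
\widehat y=\frac{\sum_i \widehat Z_i\widehat v_i+\sum_t a_tv_t}{\sum_i \widehat Z_i+\sum_t a_t}.
\]
The text weights $a_t$ and values $v_t$ agree in the two expressions because the query and the text keys and values are unchanged.

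Define the hybrid output $\tilde y$ that keeps the uncompressed weights $Z_i$ but uses the compressed values $\widehat v_i$. The proof then applies the triangle inequality to $\norm{y-\widehat y}\leq\norm{y-\tilde y}+\norm{\tilde y-\widehat y}$.

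\textbf{Value error.} The first term is a single convex combination, so it is controlled by the value error. Let $p_i=Z_i/(\sum_k Z_k+\sum_t a_t)$. Then
\[
y-\tilde y=\sum_i p_i(\bar v_i-\widehat v_i).
\]
The text contributions cancel exactly, and $\sum_i p_i\leq 1$. Hence $\norm{y-\tilde y}\leq\eta$.

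\textbf{Weight error.} The second term compares two probability vectors $p$ and $\widehat p$ over the same values. These values are the $\widehat v_i$ and the $v_t$, all of norm at most $V$. We have
\[
\tilde y-\widehat y=\sum_k (p_k-\widehat p_k)w_k,
\qquad
\norm{\tilde y-\widehat y}\leq V\norm{p-\widehat p}_1=2V\,\mathrm{TV}(p,\widehat p).
\]
Choosing the compressed values, rather than $\bar v_i$, in the hybrid is what makes the norm bound $V$ applicable here.

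It therefore remains to show $\mathrm{TV}(p,\widehat p)\leq\tanh(\delta/2)$. This is the main step. The unnormalized ratios $r_k$, which equal $\widehat Z_i/Z_i$ on the groups and $1$ on the text tokens, all lie in $[e^{-\delta},e^{\delta}]$.

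\textbf{Total-variation bound.} Let $A=\{k:\widehat p_k>p_k\}$ and $x=p(A)$, so that $\mathrm{TV}(p,\widehat p)=\widehat p(A)-x$. Increasing $r$ on $A$ and decreasing it off $A$ can only increase $\widehat p(A)$. Hence the extremal configuration is $r=e^{\delta}$ on $A$ and $r=e^{-\delta}$ elsewhere, which gives
\[
\widehat p(A)-p(A)\leq \frac{xe^{2\delta}}{xe^{2\delta}+1-x}-x.
\]
Maximizing this one-variable function over $x\in[0,1]$ is elementary calculus. The optimum is at $x=1/(e^{\delta}+1)$, where the value is $(e^{\delta}-1)/(e^{\delta}+1)=\tanh(\delta/2)$.

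Combining the two terms gives $\norm{y-\widehat y}\leq\eta+2V\tanh(\delta/2)$, which is the bound in Equation~\ref{eq:bound}.
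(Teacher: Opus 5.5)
Your proof is correct, and its skeleton matches the paper's. The paper writes $y-\widehat y=\sum_ip_i(\bar v_i-\widehat v_i)+\sum_i(p_i-\widehat p_i)\widehat v_i$, which is exactly your split through the hybrid $\tilde y$, with text tokens treated as singleton groups whose ratio is $1$.

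The difference lies in the key $\ell_1$ estimate. The paper isolates it as a standalone lemma for arbitrary ratios $r_i\in[a,b]$. It uses the identity $\norm{p-\widehat p}_1=\E_p|r-\mu|/\mu$ with $\mu=\E_pr$. It then bounds the convex function $|r-\mu|$ by its chord on $[a,b]$, which gives $2(b-\mu)(\mu-a)/((b-a)\mu)$, and maximizes over $\mu$ at $\mu=\sqrt{ab}$.

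You instead work with total variation through the set $A=\{k:\widehat p_k>p_k\}$. You push the ratios to the two endpoints and optimize over $x=p(A)$.

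The two arguments have the same strength. If you replace $e^{2\delta}$ by $b/a$, your computation yields $(\sqrt b-\sqrt a)/(\sqrt b+\sqrt a)$ for general $[a,b]$, which is exactly the paper's lemma. The paper's chord argument is a short, purely analytic estimate that never needs a distinguished set. Yours is more hands-on but makes the extremal two-valued tilt explicit (at your optimum, $\mu=1=\sqrt{ab}$). This shows directly why the constant $\tanh(\delta/2)$ cannot be improved for the $\ell_1$ estimate.
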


The two error terms are additive, and the state controls both of them. Specifically, CSC updates the representative features and contribution weights, SCA maps the weights and original positions to $\widehat Z_i$, and CSI determines the representative set on which both act. The content composition also changes the representative key and hence $\widehat Z_i$, which couples the two error sources. Therefore, the content update in CSC is confidence-gated and uses a small coefficient $\alpha$. Moreover, since $m$ and $P$ are carried explicitly, the same rule controls $\delta$ after every reduction rather than only after the first one. We give the proof below, decompose $\delta$ in Appendix~\ref{app:logit}, and derive an exact identity for the evidence-weighted contribution withheld by the upper bound $m_{\max}$ in Appendix~\ref{app:clipping}.

\subsection{Perturbation of normalized weights}

We first establish the normalization bound used in Proposition~\ref{prop:attention}. It is a statement about positive measures and does not depend on any particular compression algorithm.

\begin{lemma}
\label{lem:tilt}
Let $p$ be a probability vector, let $r_i\in[a,b]$ with $0<a\leq b$, and let $\widehat p_i=p_ir_i/\sum_jp_jr_j$. Then
\begin{equation}
  \norm{p-\widehat p}_1\leq 2\,\frac{\sqrt b-\sqrt a}{\sqrt b+\sqrt a}.
  \label{eq:tilt}
\end{equation}
\end{lemma}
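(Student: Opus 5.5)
The plan is to use the variational form of the $\ell_1$ distance together with a two-block reduction. Since $p$ and $\widehat p$ are both probability vectors,
\[
\norm{p-\widehat p}_1=2\max_{A}\big(\widehat p(A)-p(A)\big),
\]
where the maximum runs over index sets $A$. The maximum is attained at $A=\{i:\widehat p_i>p_i\}$. It therefore suffices to bound $\widehat p(A)-p(A)$ uniformly over all sets $A$. This needs no assumption on how the $r_i$ are arranged.

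Fix $A$ and write $t=p(A)$. Define the $p$-weighted averages $x=\sum_{i\in A}p_ir_i/t$ and $y=\sum_{i\notin A}p_ir_i/(1-t)$. Both lie in $[a,b]$ because each is an average of values $r_i\in[a,b]$. The cases $t\in\{0,1\}$ are trivial, since then $\widehat p(A)=p(A)$. Otherwise
\[
\widehat p(A)=\frac{tx}{tx+(1-t)y}=\frac{t}{t+(1-t)\,y/x},
\]
which is increasing in $x/y$. Hence $x/y\le b/a$ gives
\[
\widehat p(A)-p(A)\le \frac{tb}{tb+(1-t)a}-t=\frac{t(1-t)(b-a)}{a+t(b-a)}=:\phi(t).
\]
This collapses the problem to the worst two-point tilt, in which the mass $t$ carries ratio $b$ and the rest carries ratio $a$.

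It remains to maximize $\phi$ over $t\in[0,1]$, which is the only real computation. Setting $\phi'(t)=0$ reduces to a quadratic whose root in $[0,1]$ is $t^\ast=\sqrt a/(\sqrt a+\sqrt b)$. At this point the quantities simplify:
\[
1-t^\ast=\frac{\sqrt b}{\sqrt a+\sqrt b},\qquad a+t^\ast(b-a)=\sqrt a\big(\sqrt a+\sqrt b-\sqrt a\big)=\sqrt{ab}.
\]
Substituting,
\[
\phi(t^\ast)=\frac{\sqrt{ab}\,(b-a)}{(\sqrt a+\sqrt b)^2\sqrt{ab}}=\frac{\sqrt b-\sqrt a}{\sqrt b+\sqrt a}.
\]
To confirm this is the global maximum, note that $\phi\ge0$ on $[0,1]$, $\phi(0)=\phi(1)=0$, and the stationary point $t^\ast$ is unique in $[0,1]$. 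Multiplying by $2$ gives \eqref{eq:tilt}.

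The main obstacle is the reduction step, namely justifying the passage from arbitrary $r_i$ to the two extreme values. The averaging trick over a fixed set $A$ handles this cleanly, because the normalizer depends only on $x$ and $y$. This avoids arguing directly about how $\sum_j p_jr_j$ moves when individual $r_i$ are pushed to the endpoints.
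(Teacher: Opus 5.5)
Your proof is correct, but it reaches the bound by a different route than the paper. The paper does not use sets at all. With $\mu=\sum_ip_ir_i$ it writes $\norm{p-\widehat p}_1=\E_p|r-\mu|/\mu$ and bounds the convex function $r\mapsto|r-\mu|$ on $[a,b]$ by its chord. Taking expectations (using $\E_p r=\mu$) then gives $\norm{p-\widehat p}_1\le 2(b-\mu)(\mu-a)/((b-a)\mu)$, which it maximizes at $\mu=\sqrt{ab}$. You instead use $\norm{p-\widehat p}_1=2\max_A(\widehat p(A)-p(A))$, average the ratios inside and outside $A$, and push them to the endpoints by monotonicity in $x/y$.

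The two arguments end in the same one-variable problem. Substituting $\mu=a+t(b-a)$ turns your $2\phi(t)$ into exactly the paper's expression, and your $t^\ast$ corresponds to $\mu=\sqrt{ab}$. The paper's chord step is shorter and goes through unchanged with any convex function in place of $|\cdot|$, so the same argument also bounds other divergences between $p$ and its tilt. Your reduction relies on the set characterization, which is special to the $\ell_1$ distance. In exchange, it needs no convexity and shows directly that the worst case is a two-block tilt (ratio $b$ on a set of mass $t^\ast$, ratio $a$ elsewhere), where the bound is attained.

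One small point: when $a=b$ every $t$ is stationary, so your uniqueness claim fails there. In that case $\phi\equiv0$ and the bound holds trivially, which the paper treats as a separate case.
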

\begin{proof}
If $a=b$, both sides vanish. Otherwise, let $\mu=\sum_ip_ir_i\in[a,b]$. Since $r\mapsto|r-\mu|$ is convex on $[a,b]$, it lies below its chord, \textit{i.e.,}
\[
  |r-\mu|\leq\frac{b-r}{b-a}(\mu-a)+\frac{r-a}{b-a}(b-\mu).
\]
Taking the expectation under $p$ and dividing by $\mu$ gives
\[
  \norm{p-\widehat p}_1=\frac{\E_p|r-\mu|}{\mu}\leq\frac{2(b-\mu)(\mu-a)}{(b-a)\,\mu}.
\]
The right-hand side is maximized at $\mu=\sqrt{ab}$, where it equals $2(\sqrt b-\sqrt a)^2/(b-a)$, which is exactly Equation~\ref{eq:tilt}.
\end{proof}

\subsection{Proof of Proposition~\ref{prop:attention}}

\begin{proof}
We treat each text token as a singleton group whose weight and value are identical in the reference and compressed computations. Let $p_i=Z_i/Z$ and $\widehat p_i=\widehat Z_i/\widehat Z$, where $Z$ and $\widehat Z$ sum over both visual and text groups. Since the grouping into $(Z_i,\bar v_i)$ in Appendix~\ref{app:setting} is exact, we have
\[
  y-\widehat y=\sum_ip_i(\bar v_i-\widehat v_i)+\sum_i(p_i-\widehat p_i)\widehat v_i.
\]
The first sum vanishes on text groups and has norm at most $\eta$ over all visual groups because $\sum_ip_i\leq1$. The second sum has norm at most $V\norm{p-\widehat p}_1$ by the norm bound on values. With $r_i=\widehat Z_i/Z_i$, we have $\widehat p_i=p_ir_i/\sum_jp_jr_j$, where $r_i\in[e^{-\delta},e^{\delta}]$ on visual groups and $r_i=1$ on text groups. Applying Lemma~\ref{lem:tilt} with $a=e^{-\delta}$ and $b=e^{\delta}$ gives $\norm{p-\widehat p}_1\leq2\tanh(\delta/2)$, which completes the proof.
\end{proof}

The proposition is a local statement for one head with a shared query and shared text keys and values. Restricting the query to a text token that follows the image ensures that every member of a visual group is causally visible.

\subsection{Decomposition of the contribution discrepancy}
\label{app:logit}

Write the reference weight of token $j$ in group $C_i$ as $a_j=w_j\exp(s_j)$, where $w_j>0$ and $s_j$ is the query--key logit including the position encoding, and let $M_i=\sum_{j\in C_i}w_j$. Suppose a reference logit $s_i$ satisfies $|s_j-s_i|\leq\xi_i$ for all $j\in C_i$, and the compressed group uses $\widehat Z_i=\mu_i\exp(\widehat s_i)$. Then $Z_i$ lies between $M_i\exp(s_i-\xi_i)$ and $M_i\exp(s_i+\xi_i)$, and the triangle inequality yields
\begin{equation}
  \big|\log\widehat Z_i-\log Z_i\big|\leq\xi_i+|\widehat s_i-s_i|+|\log\mu_i-\log M_i|.
  \label{eq:logit-decomposition}
\end{equation}
The three terms correspond to the within-group logit variation, the perturbation of the representative logit, and the mismatch of the effective contribution, respectively. Note that the weights $w_j$ may themselves depend on the fixed query. Therefore, Equation~\ref{eq:logit-decomposition} does not require a query-independent exact group mass. Let $\mathcal N$ denote the input normalization of the decoder and $W_k$ its key projection, and keep the original RoPE index of the representative. The orthogonality of the rotation gives
\begin{equation}
  |\widehat s_i-s_i|\leq\frac{\norm{q}\norm{W_k}}{\sqrt{d_h}}\,\norm{\mathcal N(h'_i)-\mathcal N(h_i)}.
  \label{eq:key-perturbation}
\end{equation}
Therefore, a content update affects both terms of Proposition~\ref{prop:attention}, even when it is intended to improve the representative values. Renumbering the position would introduce a further rotation term. In \method, the multiplicative factor entering this analysis is $\mu_i=\exp(B_{\ell h}(q,i))$, which combines the recorded weight $m_i$ with the per-head gain and the distance factor. Hence, the bound applies to the complete bias of Equation~\ref{eq:attention}.

\subsection{Conservation under the upper bound}
\label{app:clipping}

The contribution update in Equation~\ref{eq:mass} admits an exact characterization of the contribution withheld by the upper bound. When the scaling denominator is nonzero and the inherited weights lie in $[1,m_{\max}]$, the evidence-weighted contribution withheld by the bound is
\begin{equation}
  \sum_{j\in A\cup D} e_jm_j-\sum_{i\in A}e_im'_i
  =\sum_{i\in A}e_i\,[m_i+\lambda b_i-m_{\max}]_+ .
  \label{eq:clipping}
\end{equation}
Only the representatives that reach the bound contribute to the right-hand side. Therefore, the withheld contribution is confined to the largest groups, and the identity holds unchanged when the input weights are inherited from an earlier stage. To see this, let $Q_D=\sum_{j\in D}e_jm_j$ and $Q_A=\sum_{i\in A}e_im_i$. When the denominator of $\lambda$ exceeds the numerical threshold, we have $\lambda\sum_{i\in A}e_ib_i=Q_D$ and hence $\sum_{i\in A}e_i(m_i+\lambda b_i)=Q_A+Q_D=\sum_{j\in A\cup D}e_jm_j$. Since the evidence and the weights are non-negative and $m_i\geq1$, the lower bound of the projection is inactive and $m'_i=\min(m_i+\lambda b_i,m_{\max})$. Subtracting the bounded sum gives Equation~\ref{eq:clipping} in exact arithmetic. This identity concerns the evidence-weighted contribution, whereas Proposition~\ref{prop:attention} covers the attention output, which additionally depends on the query-dependent logits and on the representative values. The implementation uses a denominator threshold of $10^{-8}$, below which $\lambda=0$. At a later reduction, Equation~\ref{eq:clipping} holds with $m$ the inherited weights and $e$ the evidence of that stage as defined in Section~\ref{sec:update}, so the conservation holds within each stage under its own evidence.

\section{Implementation details}
\label{app:implementation}

\begin{algorithm}[htbp]
\caption{One reduction in \method: selection followed by composition}
\label{alg:state}
\begin{algorithmic}[1]
\Require state $S=(H,P,m,G)$, target count $K$, and the signals of the stage: encoder signals and instruction (initial stage) or instruction states at the current depth (internal stages)
\State \textit{Selection:} obtain the representative set $A$ with $|A|=K$ and the evidence $e$ by CSI (initial stage), by the internal selection (internal stages), or by ordered spatial subsampling with uniform $e$ (final stage); sort $A$ by original position; $D\gets\{1,\ldots,|H|\}\setminus A$
\State \textit{Composition:} assign each $j\in D$ to $a(j)$ by Equation~\ref{eq:correspondence} using the similarities in $G$
\State Compose the contributions $m'_A$ by Equation~\ref{eq:mass} and the content $H'_A$ by Equation~\ref{eq:content}, using $m$, $H$, the positions in $P$, and $e$
\State Restrict the positions, similarities, and anchors to $A$; return $S'=\mathcal C_K(S;A,e)=(H'_A,P_A,m'_A,G_A)$
\State Run the decoder blocks with SCA (Equation~\ref{eq:attention}); at the next reduction, apply the same steps to the current state
\end{algorithmic}
\end{algorithm}

\begin{table}[htbp]
\caption{Constants of the main configuration, shared by both execution strategies and by all four models.}
\label{tab:constants}\centering\small
\setlength{\tabcolsep}{4pt}
\begin{tabular}{@{}l>{\raggedright\arraybackslash}p{0.40\linewidth}>{\raggedright\arraybackslash}p{0.30\linewidth}@{}}\toprule
Symbol & Value & Role\\\midrule
$\beta$ & 1.5 & evidence exponent, Equation~\ref{eq:coverage-weights}\\
$u$ & 0.40 / 0.30 / 0.25 at $K=64$ / 128 / 192, concentration-adaptive, at most 0.60 & uniform component\\
$\lambda_{\max}$, $\sigma_0$ & 0.30, 0.10 & instruction residual bound and selectivity scale\\
$\lambda_{\max}$ (first-layer route) & 0.15, without instruction anchors & used when no CLIP text encoder is available\\
$N_{\mathrm r}$ & 16 regions with a 10\% uniform regional weight & region balancing\\
core fraction & $0.25+0.45\sqrt{c}$ of the budget & protected attention peaks\\
anchors & at most 25\% of the budget and at most 24 anchors & instruction anchors\\
$\rho_{\mathrm r}$ & 5\% & coverage-gap reserve\\
$\gamma$, $m_{\max}$ & 2, 8 & contribution transfer, Equation~\ref{eq:mass}\\
$\alpha$, $\tau$ & 0.15, 0.75 & content composition, Equation~\ref{eq:content}\\
$g_{\min}$, $g_{\max}$ & 0.6, 1.4 & head-gain range, Equation~\ref{eq:head-gain}\\
$u_\ell$ & 1 for \fixed, and for \pro\ 0.8 before the first internal reduction and 1 thereafter & strategy gain\\
stages & layers $\mathrm{round}(14L/32)$ and $\mathrm{round}(24L/32)$ of an $L$-layer decoder, with $K_1=\mathrm{round}(1.5K)$ and $K_3=\mathrm{round}(0.25K)$ & progressive schedule\\
internal quotas & 10\% coordinate, 12.5\% spatial, 5\% reserve & internal selection\\
\bottomrule\end{tabular}
\end{table}

Algorithm~\ref{alg:state} summarizes the shared construction of both execution strategies, and Table~\ref{tab:constants} lists every constant of the main configuration. In the formulas of this appendix, $K$ denotes the number of representatives selected by the stage at hand, which equals the budget under \fixed\ and $K_1=\operatorname{round}(1.5K)$ at the initial stage of \pro. The same values are used by both execution strategies and by all four models, and only the head gains in Equation~\ref{eq:head-gain} are recomputed from the projections of each model. When no CLIP text encoder is available, as in Qwen2.5-VL, the instruction distribution is derived from the first decoder layer. This route uses the fixed bound $\lambda_{\max}=0.15$ and no instruction anchors (Table~\ref{tab:constants}). All similarity features are normalized row-wise with a small positive constant in the denominator. $\MassNorm$ divides a non-negative vector by its sum and returns the uniform vector when the sum is zero, and $\operatorname{MinMax}$ subtracts the minimum and divides by the range, with the same numerical constant.

\subsection{CSI: visual and instruction evidence}

\paragraph{Visual evidence.}
We first aggregate the CLIP class-token attention with entropy-dependent head weights and truncate it at the 99.5th percentile. Then, we smooth it with two steps of diffusion (coefficient $0.35$, similarity temperature $0.1$) over a graph that links each token to its eight nearest neighbors in the CLIP key space, followed by a $3\times3$ spatial kernel. Meanwhile, the feature distinctiveness combines a global term, $1-x_j^\top\operatorname{Normalize}(\sum_ix_i)$ for normalized projected features $x_j$, with a local term, one minus the mean similarity to the four spatially adjacent features. The two min--max-normalized terms are mixed with coefficients $0.45$ and $0.55$, respectively. Let $a$ and $s$ be the min--max-normalized smoothed attention and distinctiveness. The base evidence is
\begin{equation}
  b=\MassNorm\big(\rho a+(1-\rho)s\big),\qquad
  \rho=\frac{\operatorname{Var}(a)}{\operatorname{Var}(a)+\operatorname{Var}(s)},
  \label{eq:base-evidence}
\end{equation}
with a numerically stabilized denominator.

\paragraph{Instruction evidence.}
Following the entropy-filtered instruction scoring of EADP \citep{wang2026eadp}, the frozen CLIP text encoder processes the instruction within its native 77-token limit. The beginning-of-sequence token is excluded from the token-level aggregation and the end-of-sequence token is retained. Let $u_{tj}$ be the cosine similarity between the projected text token $t$ and the image patch $j$. For each text token, $\pi_{tj}=\softmax_j(u_{tj}/0.01)$ has entropy $H_t$. Then, we keep the half of the text tokens with the lowest entropy (rounding upward) and weight them by $\beta_t\propto\exp(-H_t/0.01)$. Specifically, the implementation uses the negative-similarity convention also described by AnchorPrune \citep{oh2026anchorprune}, \textit{i.e.,} $r^{\mathrm{dense}}_j=-\sum_t\beta_tu_{tj}$ and $r^{\mathrm{eos}}_j=-u_{\mathrm{eos},j}$, and mixes the separately min--max-normalized dense and end-of-sequence scores equally before mass normalization. Finally, the resulting distribution $\pi$ enters Equation~\ref{eq:query-evidence} with $\lambda_{\max}=0.30$ and $\sigma_0=0.10$, and a zero residual sets $\lambda_q=0$. The mixture bounds the instruction-dependent adjustment and keeps the evidence normalized.

\subsection{CSI: coverage weights, anchors, and reserve}

The base uniform coefficient $u$ in Equation~\ref{eq:coverage-weights} is $0.40$, $0.30$, and $0.25$ for the budgets 64, 128, and 192, respectively, with linear interpolation between these budgets and $0.25$ above 192. It is further adapted to the attention concentration: with $c=1-H(a_{\mathrm{mass}})/\log N$ for the mass-normalized encoder attention and a running mean $\bar c$ (decay $0.98$, after a 16-sample warm-up), the base coefficient is multiplied by $\clip(1+(\bar c-c)/\max(\bar c,\epsilon),0.5,1.5)$ and bounded by $0.60$. Since this statistic is updated online, exact reproduction uses the same initialization and sample order.

For region balancing, we first concatenate the scaled normalized encoder features, projected features, and centered spatial descriptors with squared scale weights $0.425$, $0.425$, and $0.15$, respectively. Then, we obtain $N_{\mathrm r}=16$ region centers by farthest-point initialization followed by nearest-center allocation. Finally, with $r_g=\sum_{j\in g}w_j$ being the mass of region $g$ and $N_{\mathrm a}$ the number of active regions, each region receives the weight $0.9r_g+0.1/N_{\mathrm a}$, which is distributed within the region proportionally to $w_j$ and renormalized to give $\omega$.

The protected attention core takes a budget fraction of $0.25+0.45\sqrt c$. Following the relevance-anchor construction of \citet{oh2026anchorprune}, instruction anchors extend this core when the instruction selectivity is at least $0.05$. The $\max(2,\lfloor K/12\rfloor)$ most relevant tokens form the initial anchor set, which grows along the relevance order until three candidates whose novelty with respect to the initial set exceeds $0.2$ have been included, subject to at most $25\%$ of the budget and at most 24 anchors. Candidates whose similarity to a retained anchor reaches $0.95$ are discarded. The protected set leaves at least $\max(2,\lfloor K/8\rfloor)$ places for the greedy coverage and the reserve. Then, greedy coverage fills the remaining non-reserved positions by Equation~\ref{eq:coverage}. Finally, the reserve of $\rho_{\mathrm r}=5\%$ repeatedly selects the token with the largest current gap $1-\max_{i\in A}\kappa_{ij}$ and updates the coverage after each choice. All returned indices are sorted in the original spatial order.

\subsection{CSC: confidence}

For a representative $i$ and a removed token $j$, the affinity used for the confidence is
\begin{equation}
  t_{ij}=s_{ij}+0.45\exp\!\Big(-\frac{\norm{p_i^{2D}-p_j^{2D}}^2}{2(0.2)^2}\Big),
  \label{eq:affinity}
\end{equation}
where $p^{2D}$ denotes the normalized image coordinates derived from the original positions in $P$. Let $k_f=\min(K,\max(2,\lceil0.1K\rceil))$ and let $\varpi_j$ be the softmax at temperature $0.07$ of the $k_f$ largest affinities of token $j$. Its confidence is
\begin{equation}
  q_j=\operatorname{sigmoid}\!\Big(\frac{\max_it_{ij}-0.4}{0.1}\Big)\cdot
      \frac{1+\clip\big(1-H(\varpi_j)/\log k_f,0,1\big)}{2}.
  \label{eq:confidence}
\end{equation}
Note that the correspondence $a(j)$ is still chosen by the semantic similarity $s_{ij}$, and Equation~\ref{eq:affinity} controls the confidence only. Equation~\ref{eq:content} uses $\tilde e_j=\operatorname{MinMax}(e)_j+10^{-6}$, $\alpha=0.15$, and $\tau=0.75$, and its centroid uses the current hidden features without retrieving the features removed at an earlier stage. Equation~\ref{eq:mass} uses $\gamma=2$ and $m_{\max}=8$.

\subsection{SCA: head gains and positions}

Let $W_q^{\ell h}$ and $W_k^{\ell h}$ be the query and key projections of one head. For the RoPE pair $r$ in the split-half convention,
\[
  E_{\ell hr}=\sqrt{\norm{W_{q,r}^{\ell h}}_2^2+\norm{W_{q,r+d_h/2}^{\ell h}}_2^2}\;
              \sqrt{\norm{W_{k,r}^{\ell h}}_2^2+\norm{W_{k,r+d_h/2}^{\ell h}}_2^2},
\]
where $W_{\cdot,r}$ denotes the row of the projection that produces coordinate $r$. Equation~\ref{eq:head-gain} maps $\nu_{\ell h}$ to $[g_{\min},g_{\max}]=[0.6,1.4]$ by min--max normalization over all decoder layers and heads with a numerical constant of $10^{-8}$. Therefore, a larger frequency emphasis receives a larger gain. This construction measures the relative frequency emphasis of the projection matrices and requires no attention-head labels.

The visual representatives keep their original indices in the image span, and the subsequent text positions follow that original span. Both RoPE and the distance factor in Equation~\ref{eq:attention} use these persistent indices, and the causal attention mask is preserved. The frequencies $\theta_r$ are those of the host model. For Qwen2.5-VL, the base is $10^6$ and the distance factor is evaluated on each axis of its multidimensional RoPE (Appendix~\ref{app:cross-model-implementation}). During decoding, the visual-key weights remain associated with the cached visual tokens of the corresponding layer.

\subsection{Progressive composition and internal selection}

The metadata $G$ contains the initial similarity matrix restricted to the surviving representatives and the surviving instruction anchors. Meanwhile, the original positions in $P$ provide a shared indexing convention across stages, from which the normalized coordinates used by the spatial affinity of Equation~\ref{eq:affinity} and by the coordinate-based component are derived. At the first internal reduction, the normalized hidden states of the decoder give the query of the last instruction token and the current visual keys. We first apply the distance correction to their RoPE logits with all representative weights set to one, normalize the attention over the visual keys, and aggregate the heads by their departure from uniform visual attention. Then, a second score averages, over the instruction tokens after the image, the softmax-normalized scaled inner products between their value projections and those of the visual representatives. Finally, the internal selection averages the two min--max-normalized scores with equal weight, and this finite score, taken before the protections below, is the evidence $e$ of the stage in Equations~\ref{eq:mass} and~\ref{eq:content}. The surviving anchors remain protected. Moreover, a coordinate-based component following IVC-Prune \citep{sun2026ivc}, stratified spatial selection, and a coverage-gap reserve using the inherited similarity matrix receive quotas of $10\%$, $12.5\%$, and $5\%$, respectively. The final reduction uses ordered spatial subsampling with the same type of reserve and uniform evidence. Each reduction then performs the composition in Algorithm~\ref{alg:state}, which consults only the evidence $e$ of the stage, the inherited similarities in $G$, the inherited weights $m$, the positions $P$ together with the coordinates derived from them, and the hidden states of the representatives at the current depth. The anchors in $G$ serve the internal selection alone, and the instruction states reach the composition only through $A$ and $e$. The model attention always receives the effective weights recorded in the state, and the unit-weight computation is confined to the internal selection.

\section{Schedules and computational accounting}
\label{app:schedule}

\begin{table}[t]
\caption{Token schedules for the 32 decoder layers. The last column gives the exact layer average.}
\label{tab:schedule}\centering
\begin{tabular}{@{}lccccc@{}}\toprule
Method & Target $K$ & Layers 0--13 & Layers 14--23 & Layers 24--31 & Mean\\\midrule
\pro & 192 & 288 & 173 & 48 & 192.0625\\
\pro & 128 & 192 & 115 & 32 & 127.9375\\
\pro & 64 & 96 & 58 & 16 & 64.1250\\
\fixed & $K$ & $K$ & $K$ & $K$ & $K$\\
\bottomrule\end{tabular}
\end{table}

For a target average count $K$, the progressive schedule sets $K_1=\operatorname{round}(1.5K)$, $K_3=\operatorname{round}(0.25K)$, and $K_2=\operatorname{round}\big((32K-14K_1-8K_3)/10\big)$, which gives the counts in Table~\ref{tab:schedule}. On the 40 decoder layers of LLaVA-1.5-13B, the two reductions take place at layers 18 and 30, and $K_2=\operatorname{round}\big((40K-18K_1-10K_3)/12\big)$ gives 168, 112, and 56 tokens, and the layer averages equal 192, 128, and 64 exactly. LLaVA-NeXT-7B also has 32 decoder layers and gives 960/576/160, 480/288/80, and 240/144/40 content tokens at 640, 320, and 160, respectively. On the 28 layers of Qwen2.5-VL-7B, the reductions take place at layers 12 and 21, which gives 768/469/128, 384/235/64, and 192/117/32 tokens at 512, 256, and 128, respectively, with layer averages of 511.9, 256.1, and 127.9. Note that these counts are determined before inference, whereas the identities of the retained representatives depend on the intermediate computation.

The mean count $\overline K=L^{-1}\sum_\ell K_\ell$ measures the token exposure through the decoder, and the cost of a schedule additionally depends on the distribution of the counts over the layers. With text length $T$, hidden size $d$, feed-forward size $d_{\mathrm{ff}}$, and $n_\ell=T+K_\ell$, a dense matrix-multiplication approximation for a LLaMA-style decoder is
\begin{equation}
  \operatorname{FLOPs}_{\mathrm{decoder}}\approx\sum_{\ell=0}^{L-1}\big(8n_\ell d^2+4n_\ell^2d+6n_\ell d\,d_{\mathrm{ff}}\big),
  \label{eq:flops}
\end{equation}
counting a multiply and an addition as two operations. Equal $\overline K$ approximately aligns the terms linear in the sequence length, but not $\sum_\ell K_\ell^2$. For the three reported schedules, the visual-only quadratic term of \pro\ is about $1.25$ times that of \fixed. The visual encoder, instruction encoder, initial pairwise similarities, internal selection, and bias construction also contribute to the end-to-end cost, and the initial pairwise kernel requires quadratic storage in the original patch count. These operations are included in the measured generation times of Table~\ref{tab:efficiency}, whereas the decoder estimates count the decoder matrix multiplications only.

\section{Evaluation protocol and additional results}
\label{app:evaluation}

\subsection{Tasks, metrics, and retention}

\begin{table}[htbp]
\caption{Per-task performance of \method\ on LLaVA-1.5-7B. Retention $R_{\mathrm{PC}}$ (\%) averages six task-specific ratios to the uncompressed reference; MME-PC sums perception and cognition, POPE uses F1, and TextVQA uses the OCR-augmented prompt. $K$ is exact at every layer for \fixed\ and denotes the target layer average for \pro\ (actual means $192.06$, $127.94$, and $64.13$). Aggregates are computed before rounding.}
\label{tab:main}\centering
\setlength{\tabcolsep}{3.2pt}
\begin{tabular}{@{}lcccccccc@{}}\toprule
Method & $K$ & GQA & MMB & MME-PC & POPE & SQA & VQA\tsup{T} & $R_{\mathrm{PC}}$ (\%)\\\midrule
Vanilla & 576 & 62.0 & 64.0 & 1867 & 85.8 & 69.5 & 58.2 & 100.0\\\midrule
\fixed & 192 & 61.3 & 64.3 & 1901 & 86.2 & 69.1 & 57.6 & 100.0\\
\fixed & 128 & 61.3 & 64.1 & 1867 & 86.3 & 69.0 & 57.1 & 99.5\\
\fixed & 64 & 60.2 & 63.1 & 1819 & 85.8 & 68.5 & 56.4 & 98.1\\\midrule
\pro & 192 & 61.6 & 64.5 & 1855 & 86.2 & 69.3 & 57.9 & 99.8\\
\pro & 128 & 61.3 & 64.7 & 1890 & 86.1 & 69.1 & 57.7 & 100.0\\
\pro & 64 & 60.9 & 64.4 & 1863 & 86.0 & 68.4 & 57.0 & 99.2\\
\bottomrule\end{tabular}
\end{table}

The per-task scores of both execution strategies on LLaVA-1.5-7B at all three budgets are shown in Table~\ref{tab:main}. Specifically, GQA uses the exact-match accuracy, and MMBench uses the English development split and the answer-selection accuracy of the evaluator. MME reports the sum of the perception and cognition scores, or the perception score alone for $R_{\mathrm P}$. POPE reports the F1 over the combined predictions of the three splits, which is distinct from the mean of the split-wise F1 scores. ScienceQA uses the image-containing subset, and TextVQA uses the OCR-augmented prompt and its standard metric. The retention averages the six ratios with equal weight. LLaVA-1.5-13B uses the same tasks, protocol, and constants, with the head gains recomputed from its projections.

\subsection{Cross-model implementation}
\label{app:cross-model-implementation}

\begin{table}[p]\centering
\caption{Cross-model performance of \fixed. Each model has its own uncompressed reference. $R_{\mathrm{PC}}$ uses the same six tasks as Table~\ref{tab:main}, including OCR-augmented TextVQA. For LLaVA-NeXT, $K$ counts content tokens and $N$ includes 48 retained newline tokens. For the other models, $K=N$. Aggregates are computed before rounding.}\label{tab:transfer}
\fontsize{8.3}{9.5}\selectfont
\setlength{\tabcolsep}{3pt}\renewcommand{\arraystretch}{1.0}
\begin{tabular*}{\linewidth}{@{\extracolsep{\fill}}lcccccccc@{}}\toprule
Setting & $N$ & GQA & MMB & MME-PC & POPE & SQA & VQA\tsup{T} & $R_{\mathrm{PC}}$ (\%)\\\midrule
\multicolumn{9}{l}{\textbf{LLaVA-1.5-13B}}\\
Vanilla & 576 & 63.3 & 68.7 & 1824 & 86.0 & 72.8 & 61.3 & 100.0\\
$K=192$ ($\downarrow 66.7\%$) & 192 & 62.9 & 68.5 & 1821 & 87.0 & 73.1 & 60.6 & 99.9\\
$K=128$ ($\downarrow 77.8\%$) & 128 & 62.7 & 68.0 & 1811 & 87.1 & 73.7 & 60.1 & 99.7\\
$K=64$ ($\downarrow 88.9\%$) & 64 & 61.4 & 67.1 & 1775 & 86.3 & 73.0 & 59.5 & 98.3\\
\midrule
\multicolumn{9}{l}{\textbf{LLaVA-NeXT-7B}}\\
Vanilla & 2928 & 64.2 & 64.3 & 1817 & 86.9 & 68.0 & 61.1 & 100.0\\
$K=640$ ($\downarrow 77.8\%$) & 688 & 63.5 & 63.6 & 1865 & 87.4 & 67.9 & 60.3 & 99.9\\
$K=320$ ($\downarrow 88.9\%$) & 368 & 62.6 & 62.5 & 1812 & 87.3 & 67.8 & 58.6 & 98.5\\
$K=160$ ($\downarrow 94.4\%$) & 208 & 61.1 & 59.5 & 1734 & 86.7 & 66.7 & 55.8 & 95.4\\
\midrule
\multicolumn{9}{l}{\textbf{Qwen2.5-VL-7B}}\\
Vanilla & 1296 & 59.8 & 83.4 & 2320 & 86.6 & 88.1 & 76.7 & 100.0\\
$K=512$ ($\downarrow 60.5\%$) & 512 & 59.2 & 83.0 & 2308 & 85.5 & 87.6 & 76.7 & 99.3\\
$K=256$ ($\downarrow 80.2\%$) & 256 & 58.9 & 82.2 & 2308 & 84.6 & 86.3 & 75.7 & 98.5\\
$K=128$ ($\downarrow 90.1\%$) & 128 & 57.9 & 80.3 & 2287 & 82.2 & 84.1 & 71.6 & 95.9\\
\bottomrule\end{tabular*}
\medskip
\parbox{\linewidth}{\footnotesize LLaVA-1.5 uses $336\times336$ inputs. LLaVA-NeXT uses $672\times672$ inputs with 2,880 content tokens before compression. Qwen2.5-VL uses $1008\times1008$ inputs with 1,296 tokens after its native merger.}
\label{tab:transfer-end}
\end{table}

Table~\ref{tab:transfer} reports the per-task scores of \fixed\ on the three additional models, and Tables~\ref{tab:cross-13b}, \ref{tab:cross-qwen}, and~\ref{tab:cross-next} include those of \pro. The architecture-specific interfaces are described below.

\paragraph{LLaVA-NeXT.}
We evaluate the Vicuna-7B checkpoint with its native image preparation and conversation template at a fixed $672\times672$ input. The base view and the high-resolution grid contribute 576 and $48\times48$ tokens, respectively, followed by the native insertion of 48 newline markers. Compression takes place after the image-feature packing and before the language decoder. The CLIP evidence and descriptors are aligned with the packed content grid, with the spatial neighborhoods kept separately for the base and high-resolution views. The newline markers keep their original order and positions, are excluded from the representative selection and content composition, and receive no additional bias. Thus, the budget $K$ counts the content representatives, and the decoder receives $K+48$ visual tokens under \fixed\ and $K_1+48$, $K_2+48$, and $K_3+48$ over the three stages of \pro.

\paragraph{Qwen2.5-VL.}
We evaluate Qwen2.5-VL-7B-Instruct at a fixed $1008\times1008$ input, which yields 1,296 tokens after the native visual merger. Its encoder has neither a class token nor a CLIP text space. Therefore, CSI uses the mean incoming attention of the last global visual-attention block as the visual evidence, and computes the instruction relevance from the frozen input normalization and the query and key projections of the first language block. The visual descriptors are restored from the window order and averaged within the native merger groups before selection. The state preserves the original temporal, height, and width position components, and SCA evaluates each rotary frequency on the axis assigned by the multidimensional RoPE configuration of the model, with the native generation position offsets retained. These interfaces supply architecture-specific evidence and geometry to the shared coverage, contribution-transfer, and confidence-gated composition rules.

\paragraph{Configuration and records.}
Both high-resolution models use batch size one, scaled dot-product attention, and lmms-eval version \texttt{v0.6-80-g2c08ee55}, and their full and compressed configurations use the same fixed input resolution. Table~\ref{tab:transfer-extra} reports the TextVQA prompt without OCR tokens, the image subset of SEED-Bench \citep{li2024seed}, and MMMU validation \citep{yue2024mmmu}.

\begin{table}[p]\centering
\caption{Additional task scores for the cross-model evaluation. TextVQA here omits OCR tokens from the prompt. SEED-I denotes the image subset. These scores are reported separately from the six-task aggregate in Table~\ref{tab:transfer}.}\label{tab:transfer-extra}
\fontsize{8.5}{10}\selectfont
\setlength{\tabcolsep}{3pt}\renewcommand{\arraystretch}{1.0}
\begin{tabular*}{\linewidth}{@{\extracolsep{\fill}}lcccc@{}}\toprule
Setting & MME-P & VQA\tsup{T} (no OCR) & SEED-I & MMMU\\\midrule
\multicolumn{5}{l}{\textbf{LLaVA-NeXT-7B}}\\
Vanilla & 1496 & 64.6 & 69.9 & 36.3\\
$K=640$ ($\downarrow 77.8\%$) & 1531 & 63.1 & 69.0 & 35.2\\
$K=320$ ($\downarrow 88.9\%$) & 1482 & 59.4 & 67.7 & 35.3\\
$K=160$ ($\downarrow 94.4\%$) & 1449 & 53.0 & 65.2 & 34.8\\
\midrule
\multicolumn{5}{l}{\textbf{Qwen2.5-VL-7B}}\\
Vanilla & 1671 & 82.4 & 77.8 & 48.9\\
$K=512$ ($\downarrow 60.5\%$) & 1671 & 81.5 & 76.7 & 48.4\\
$K=256$ ($\downarrow 80.2\%$) & 1687 & 80.5 & 75.6 & 48.8\\
$K=128$ ($\downarrow 90.1\%$) & 1655 & 74.7 & 73.1 & 47.2\\
\bottomrule\end{tabular*}
\label{tab:transfer-extra-end}
\end{table}

\subsection{Controlled interventions}
\label{app:ablation}

\begin{table}[htbp]
\caption{Controlled MME interventions at fixed representative sets. \emph{Content} enables Equation~\ref{eq:content}; \emph{attention} enables the complete visual attention correction. Each budget has its own paired reference run. Components and totals are rounded independently from the saved scores.}
\label{tab:ablation}\centering
\begin{tabular}{@{}cccccc@{}}\toprule
$K$ & Content & Attention & Perception & Cognition & Total\\\midrule
192 & Yes & Yes & 1537 & 364 & 1901\\
 & No & Yes & 1534 & 364 & 1898\\
 & Yes & No & 1485 & 349 & 1834\\
 & No & No & 1483 & 351 & 1834\\
\midrule
288 & Yes & Yes & 1497 & 351 & 1848\\
 & No & Yes & 1496 & 351 & 1847\\
 & Yes & No & 1502 & 342 & 1844\\
 & No & No & 1503 & 341 & 1844\\
\bottomrule\end{tabular}
\end{table}

To isolate the content composition and the attention bias, Table~\ref{tab:ablation} uses a paired intervention protocol on MME under \fixed. Specifically, the four conditions share the selected indices, token order, instruction anchors, prompts, and persistent positions. Disabling the content composition leaves the effective weights unchanged. Meanwhile, disabling the attention bias removes its additional attention contribution while preserving the content update when enabled.

\subsection{Cumulative study and position span}
\label{app:components}

Rows (a)--(e) of Table~\ref{tab:components} place the representatives on a span of $\min(576,\operatorname{round}(4.5K))$ positions, which equals the full span at $K=192$ and $K=128$ and compresses the grid to 288 positions at $K=64$. Row (f) removes this rule and is the main configuration. Then, Table~\ref{tab:span} varies the span alone around the main configuration with all other constants fixed. It can be observed that the full span is the best at 192 and 128 tokens and matches the best setting at 64 tokens within a tenth of a point. Moreover, the compressed spans lose most on MME perception, ScienceQA, and TextVQA. These results confirm that the decoder relies on the original grid geometry.

\begin{table}[htbp]
\caption{Position span around the main configuration (retention, \%). Representatives are placed on a proportionally compressed copy of the original grid spanning 288, 432, or 576 positions. The full span of 576 preserves the original grid geometry and is the main configuration.}
\label{tab:span}\centering\small
\begin{tabular}{@{}lccc@{}}\toprule
Span & $K=192$ & $K=128$ & $K=64$\\\midrule
288 & 98.49 & 98.06 & 97.59\\
432 & 99.11 & 99.24 & 98.17\\
576 (main) & 99.96 & 99.49 & 98.10\\
\bottomrule\end{tabular}
\end{table}

\subsection{State inheritance under \pro}
\label{app:inheritance}

Rows (g) and (h) of Table~\ref{tab:components} share the first stage and the schedule of \pro\ and differ only at the two internal reductions. In row (g), each internal reduction resets the contribution weights of the representatives to one and composes neither contributions nor content. It also renumbers their positions contiguously within the image span and recomputes the similarities from the current hidden states without the inherited anchors. Therefore, it selects from a reduced set of ordinary tokens. In contrast, row (h) is the main configuration, whose reductions compose the inherited state by Algorithm~\ref{alg:state}. At 64 tokens, the 16 representatives of the final stage carry an average contribution weight of $6.86$ in row (h), whereas row (g) attends to them with unit weights.

\subsection{Sensitivity}
\label{app:sensitivity}

\begin{table}[htbp]
\caption{\fixed\ retention (\%) under one-at-a-time changes. Each row uses one configuration across all three budgets. The reference is shared with Table~\ref{tab:main}.}
\label{tab:sensitivity}\centering
\begin{tabular}{@{}lccc@{}}\toprule
Configuration & $K=192$ & $K=128$ & $K=64$\\\midrule
Reference & 100.0 & 99.5 & 98.1\\
Head gains $[0.5,1.5]$ & 99.9 & 99.5 & 97.8\\
Head gains $[0.7,1.3]$ & 99.8 & 99.6 & 98.3\\
Coverage reserve $3\%$ & 99.9 & 99.5 & 98.3\\
Coverage reserve $8\%$ & 99.9 & 99.5 & 97.9\\
Evidence exponent $1.0$ & 99.6 & 99.6 & 98.1\\
Evidence exponent $2.0$ & 99.8 & 99.3 & 98.1\\
Uniform coefficient multiplier $0.75$ & 100.0 & 99.3 & 97.9\\
Uniform coefficient multiplier $1.25$ & 99.8 & 99.5 & 97.9\\
\bottomrule\end{tabular}
\end{table}

\begin{table}[htbp]
\caption{\pro\ retention (\%) for nearby first-stage attention gains. Gains from layer 14 onward remain one. The main configuration uses $0.80$ at all budgets.}
\label{tab:pro-gain}\centering
\begin{tabular}{@{}lccc@{}}\toprule
First-stage gain & $K=192$ & $K=128$ & $K=64$\\\midrule
$0.75$ & 99.9 & 99.9 & 99.2\\
$0.80$ & 99.8 & 100.0 & 99.2\\
$0.85$ & 99.7 & 99.9 & 99.3\\
\bottomrule\end{tabular}
\end{table}

Table~\ref{tab:sensitivity} records one-at-a-time variations of the \fixed\ configuration, and Table~\ref{tab:pro-gain} records nearby first-stage gains of \pro. It can be observed that every variation stays within $0.4$ points of the main configuration at every budget, and the first-stage gains of $0.75$ and $0.85$ stay within $0.15$ points of it. Therefore, the reported results do not depend on budget-specific tuning and one shared configuration is used throughout.

\section{External comparisons}
\label{app:external}

\subsection{Sources, coverage, and normalization}

Table~\ref{tab:source-key} identifies every numerical source of the LLaVA-1.5-7B comparisons, and Table~\ref{tab:external-bases} lists the corresponding full-model references. ToMe, VisionZip, DivPrune, VisPruner \citep{zhang2025vispruner}, HoloV \citep{zou2025holov}, and the RESTORE combinations use the evaluations reported by RESTORE. The second VisionZip report and the SparseVLM, PyramidDrop, and VScan \citep{zhang2026vscan} rows come from VisionTrim. FasterVLM \citep{fastervlm2024}, the earlier version preceding VisPruner, uses the report in PruneSID, FastV uses the report in DART, and TRIM \citep{trim2025} and PruMerge+ \citep{shang2025prumerge} use the reports in CDPruner. The remaining configurations use their own papers, and different reports of the same method are listed separately.

For these comparisons, the retention averages the six task ratios of Table~\ref{tab:main}, and published averages over other task suites are not used. MME-PC and MME-P define separate aggregates. Note that the source-specific references matter. For example, SCoRe reports a ScienceQA reference of $66.8$, whereas most other sources use $69.5$, and the reproduced full-model row of HiDrop uses a POPE reference of $86.8$. Therefore, the tables compare reported results under the protocols of their sources.

\subsection{Complete LLaVA-1.5-7B comparisons}

Tables~\ref{tab:external-fixed-full} and~\ref{tab:external-pro-full} give all six task scores at every budget for every configuration, including ToMe, the earlier FasterVLM version, the duplicate VisionZip report, the perception-only convention, the post-pruning counts, and the sample-average budgets. The venue labels refer to the named method rather than the paper supplying a reproduced result. For a combined method, the label refers to the added compression or correction method, and records without a confirmed archival venue are labeled arXiv.

\begingroup\fontsize{8.5}{10}\selectfont
\setlength{\tabcolsep}{3pt}\renewcommand{\arraystretch}{0.95}
\setlength{\LTcapwidth}{\linewidth}
\begin{longtable}{@{}w{l}{\budgetlabw}@{\hspace{3pt}}lccccccc@{}}
\caption{Pre-decoder compression on LLaVA-1.5-7B. Budget labels give the fixed visual count $K$ entering the decoder. MME-PC includes perception and cognition; MME-P uses perception only. $R$ averages the six ratios to each source's full-model reference. Within each budget and metric block, methods are ordered by increasing unrounded $R$. Bold and underline indicate the two highest distinct displayed values in each performance column; ties share a mark and full-model rows are excluded. VisionZip$^{\mathrm R}$ and VisionZip$^{\mathrm V}$ use RESTORE and VisionTrim reports. Appendix~\ref{app:external} specifies sources and evaluation protocols.}\label{tab:external-fixed-full}\\
\toprule
 & Method & GQA & MMB & MME & POPE & SQA & VQA\tsup{T} & $R$ (\%)\\
\midrule\endfirsthead
\multicolumn{9}{l}{\textit{Table \thetable\ (continued)}}\\
\toprule
 & Method & GQA & MMB & MME & POPE & SQA & VQA\tsup{T} & $R$ (\%)\\
\midrule\endhead
\midrule\multicolumn{9}{r}{\textit{Continued on next page}}\\\endfoot
\bottomrule\endlastfoot
\multicolumn{9}{l}{\textbf{(a) Fixed counts: MME-PC and $R_{\mathrm{PC}}$}}\\*
 & Vanilla, $K=576$ & 62.0 & 64.0 & 1867 & 85.8 & 69.5 & 58.2 & 100.0\\*
\midrule
 & \methodvenue{HoloV}{NeurIPS 2025} & 58.6 & 62.6 & 1779 & 85.0 & 67.3 & 55.8 & 96.5\\*
 & \methodvenue{VisionZip\textsuperscript{R}}{CVPR 2025} & 59.2 & 62.5 & 1749 & 85.2 & 68.7 & 55.8 & 96.7\\*
 & \methodvenue{DivPrune}{CVPR 2025} & 58.9 & 63.1 & 1723 & 86.5 & 69.0 & 55.7 & 96.8\\*
 & \methodvenue{ToMe}{ICLR 2023} & 59.5 & 62.6 & 1727 & 86.9 & 69.0 & 55.8 & 97.0\\*
 & \methodvenue{VisionZip\textsuperscript{V}}{CVPR 2025} & 59.3 & 63.0 & 1783 & 85.3 & 68.9 & 57.3 & 97.6\\*
 & \methodvenue{VisPruner}{ICCV 2025} & 59.4 & 62.5 & 1784 & 86.0 & 68.3 & 57.7 & 97.7\\*
 & \methodvenue{FasterVLM}{arXiv 2024} & 59.3 & 63.5 & 1780 & 85.3 & \underline{70.0} & 57.3 & 98.0\\*
 & \methodvenue{PruneSID}{ICLR 2026} & 60.1 & 63.7 & 1791 & 86.9 & 68.5 & 56.7 & 98.1\\*
 & \methodvenue{MMTok}{ICLR 2026} & 60.1 & 63.4 & 1774 & 86.4 & 68.8 & 57.7 & 98.2\\*
 & \methodvenue{ZOO-Prune}{CVPR 2026} & 60.0 & 62.9 & 1782 & 87.2 & 69.2 & 57.3 & 98.2\\*
 & \methodvenue{SCOPE}{NeurIPS 2025} & 60.1 & 63.6 & 1804 & 86.4 & 68.8 & 57.7 & 98.5\\*
 & \methodvenue{EvoCut}{arXiv 2026} & 60.2 & 64.2 & 1794 & 86.5 & 68.5 & 57.6 & 98.5\\*
 & \methodvenue{FlowCut}{NeurIPS 2025} & 60.1 & 63.2 & 1836 & 86.1 & 68.6 & 57.5 & 98.5\\*
 & \longmethodvenue{RESTORE+VisPruner}{ICML 2026} & 60.9 & 63.3 & 1816 & 86.1 & 69.5 & 57.0 & 98.7\\*
 & \longmethodvenue{RESTORE+HoloV}{ICML 2026} & \underline{61.0} & 63.7 & 1793 & 86.6 & 69.6 & 57.2 & 98.8\\*
 & \methodvenue{SpecFlow}{ICML 2026} & 58.3 & \textbf{65.8} & 1827 & 85.8 & 69.7 & 57.9 & 98.9\\*
 & \methodvenue{DeSAP}{ACM MM 2026} & 60.3 & 63.5 & \underline{1848} & 87.1 & 69.9 & 57.2 & 99.2\\*
 & \methodvenue{CRISP}{ICME 2026} & 60.3 & 63.8 & 1797 & \underline{87.7} & 69.4 & \textbf{59.7} & 99.5\\*
 & \methodvenue{CaRe}{arXiv 2026} & 60.6 & 63.4 & 1809 & \textbf{88.9} & \textbf{70.2} & \underline{58.0} & \underline{99.6}\\*
\covistrow\cellcolor{white}\budgetlabel[8.4pt]{20}{$K{=}192$ ($\downarrow 66.7\%$)} & \textbf{\fixed} & \textbf{61.3} & \underline{64.3} & \textbf{1901} & 86.2 & 69.1 & 57.6 & \textbf{100.0}\\
\midrule
 & \methodvenue{ToMe}{ICLR 2023} & 58.7 & 60.7 & 1668 & 86.5 & 68.4 & 54.8 & 95.3\\*
 & \methodvenue{VisionZip\textsuperscript{R}}{CVPR 2025} & 58.5 & 61.4 & 1705 & 83.2 & 68.8 & 55.6 & 95.4\\*
 & \methodvenue{HoloV}{NeurIPS 2025} & 57.5 & 62.5 & 1761 & 82.2 & 69.0 & 55.6 & 95.8\\*
 & \methodvenue{VisionZip\textsuperscript{V}}{CVPR 2025} & 57.6 & 62.0 & 1762 & 83.2 & 68.9 & 56.8 & 96.2\\*
 & \methodvenue{PruneSID}{ICLR 2026} & 58.8 & 62.1 & 1749 & 86.5 & 68.3 & 54.7 & 96.3\\*
 & \methodvenue{FasterVLM}{arXiv 2024} & 57.8 & 62.5 & 1762 & 82.8 & \textbf{70.0} & 56.3 & 96.4\\*
 & \methodvenue{DivPrune}{CVPR 2025} & 58.6 & 63.7 & 1702 & 86.5 & 68.9 & 55.2 & 96.6\\*
 & \methodvenue{VisPruner}{ICCV 2025} & 58.1 & 61.9 & 1778 & 84.5 & 68.8 & 56.9 & 96.7\\*
 & \methodvenue{FlowCut}{NeurIPS 2025} & 58.5 & 62.1 & 1792 & 85.2 & 68.6 & 57.3 & 97.2\\*
 & \methodvenue{EvoCut}{arXiv 2026} & 59.2 & 62.3 & 1790 & 85.7 & 68.7 & 57.1 & 97.5\\*
 & \methodvenue{MMTok}{ICLR 2026} & 59.3 & 62.3 & 1779 & 86.3 & 68.8 & 57.0 & 97.5\\*
 & \methodvenue{SCOPE}{NeurIPS 2025} & 59.7 & 62.5 & 1776 & 86.1 & 68.4 & 57.2 & 97.6\\*
 & \methodvenue{SpecFlow}{ICML 2026} & 57.6 & \textbf{64.3} & 1794 & 84.9 & \underline{69.9} & 56.8 & 97.6\\*
 & \methodvenue{ZOO-Prune}{CVPR 2026} & 59.5 & 61.9 & 1752 & \underline{87.1} & 68.9 & 57.9 & 97.6\\*
 & \methodvenue{CaRe}{arXiv 2026} & 60.2 & 62.1 & 1731 & \textbf{87.8} & \underline{69.9} & \underline{58.0} & 98.1\\*
 & \longmethodvenue{RESTORE+HoloV}{ICML 2026} & 60.8 & 63.0 & 1807 & 86.0 & 68.7 & 56.6 & 98.2\\*
 & \longmethodvenue{RESTORE+VisPruner}{ICML 2026} & \underline{60.9} & 63.3 & 1813 & 86.2 & 68.7 & 56.4 & 98.3\\*
 & \methodvenue{DeSAP}{ACM MM 2026} & 59.5 & 62.8 & \underline{1815} & 87.0 & \underline{69.9} & 57.0 & 98.4\\*
 & \methodvenue{CRISP}{ICME 2026} & 59.7 & 62.8 & 1794 & \textbf{87.8} & 69.8 & \textbf{58.5} & \underline{98.8}\\*
\covistrow\cellcolor{white}\budgetlabel[8.4pt]{20}{$K{=}128$ ($\downarrow 77.8\%$)} & \textbf{\fixed} & \textbf{61.3} & \underline{64.1} & \textbf{1867} & 86.3 & 69.0 & 57.1 & \textbf{99.5}\\
\midrule
 & \methodvenue{ToMe}{ICLR 2023} & 56.0 & 57.9 & 1588 & 84.3 & 68.0 & 52.6 & 92.0\\*
 & \methodvenue{HoloV}{NeurIPS 2025} & 55.1 & 60.0 & 1699 & 76.8 & 68.6 & 54.9 & 92.6\\*
 & \methodvenue{VisionZip\textsuperscript{V}}{CVPR 2025} & 55.1 & 60.1 & 1690 & 77.0 & 69.0 & 55.5 & 92.8\\*
 & \methodvenue{FasterVLM}{arXiv 2024} & 55.0 & 60.6 & 1667 & 76.6 & \textbf{70.2} & 55.3 & 92.9\\*
 & \methodvenue{VisionZip\textsuperscript{R}}{CVPR 2025} & 56.0 & 60.5 & 1690 & 78.2 & 69.5 & 53.8 & 93.1\\*
 & \methodvenue{VisPruner}{ICCV 2025} & 55.8 & 60.0 & 1670 & 80.5 & 68.5 & 55.6 & 93.4\\*
 & \methodvenue{PruneSID}{ICLR 2026} & 57.1 & 58.8 & 1733 & 83.8 & 67.8 & 54.2 & 94.1\\*
 & \methodvenue{DivPrune}{CVPR 2025} & 57.1 & 60.2 & 1653 & 85.3 & 68.3 & 54.5 & 94.2\\*
 & \methodvenue{FlowCut}{NeurIPS 2025} & 55.6 & 60.8 & 1744 & 80.2 & 69.1 & 55.6 & 94.3\\*
 & \methodvenue{SpecFlow}{ICML 2026} & 55.3 & \textbf{63.7} & 1713 & 80.5 & 69.7 & 54.9 & 94.7\\*
 & \methodvenue{EvoCut}{arXiv 2026} & 56.6 & 61.8 & 1692 & 83.9 & 68.8 & 55.7 & 95.0\\*
 & \methodvenue{ZOO-Prune}{CVPR 2026} & 58.5 & 60.2 & 1676 & 85.9 & 68.3 & 55.4 & 95.1\\*
 & \methodvenue{SCOPE}{NeurIPS 2025} & 58.3 & 61.7 & 1698 & 83.9 & 68.6 & \underline{56.6} & 95.7\\*
 & \longmethodvenue{RESTORE+VisPruner}{ICML 2026} & \underline{59.2} & 61.6 & 1722 & 85.1 & 68.0 & 55.4 & 95.9\\*
 & \methodvenue{MMTok}{ICLR 2026} & 58.3 & 61.2 & 1715 & 85.8 & 69.2 & 56.0 & 96.1\\*
 & \methodvenue{CaRe}{arXiv 2026} & 59.1 & 60.6 & 1665 & \underline{87.0} & \underline{69.8} & 56.3 & 96.2\\*
 & \longmethodvenue{RESTORE+HoloV}{ICML 2026} & 59.0 & 61.9 & \underline{1787} & 84.9 & 68.0 & 55.4 & 96.5\\*
 & \methodvenue{DeSAP}{ACM MM 2026} & 58.3 & 62.3 & 1748 & 85.8 & 69.6 & 55.7 & 96.7\\*
 & \methodvenue{CRISP}{ICME 2026} & 58.3 & 60.7 & 1721 & \textbf{87.2} & 68.7 & \textbf{58.3} & \underline{96.8}\\*
\covistrow\cellcolor{white}\budgetlabel[8.4pt]{20}{$K{=}64$ ($\downarrow 88.9\%$)} & \textbf{\fixed} & \textbf{60.2} & \underline{63.1} & \textbf{1819} & 85.8 & 68.5 & 56.4 & \textbf{98.1}\\
\midrule
\multicolumn{9}{l}{\textbf{(b) Fixed counts: MME-P and $R_{\mathrm{P}}$}}\\*
 & Vanilla, $K=576$ & 62.0 & 64.0 & 1511 & 85.8 & 69.5 & 58.2 & 100.0\\*
\midrule
 & \methodvenue{PruMerge+}{ICCV 2025} & 58.2 & 61.8 & 1408 & 83.1 & 69.1 & 54.0 & 95.3\\*
 & \methodvenue{TRIM}{COLING 2025} & 58.4 & 63.0 & 1413 & 85.3 & 68.6 & 52.2 & 95.5\\*
 & \longmethodvenue{Cen-Prune+DivPrune}{arXiv 2026} & 59.3 & 62.5 & 1416 & 86.8 & 68.6 & \textbf{57.2} & 97.3\\*
 & \methodvenue{SPARE}{arXiv 2026} & 60.0 & 62.5 & 1416 & \underline{87.5} & 68.6 & 56.6 & 97.6\\*
 & \longmethodvenue{Cen-Prune+ZOO-Prune}{arXiv 2026} & 59.7 & 62.5 & 1445 & 86.5 & 68.9 & 57.0 & 97.7\\*
 & \methodvenue{EADP}{ECCV 2026} & 60.0 & 62.7 & 1439 & 87.2 & 69.0 & 56.5 & 97.8\\*
 & \longmethodvenue{Cen-Prune+CDPruner}{arXiv 2026} & \underline{60.2} & 63.1 & 1442 & 86.7 & 68.2 & 57.0 & 97.8\\*
 & \methodvenue{CDPruner}{NeurIPS 2025} & 59.9 & 63.1 & 1431 & \textbf{87.7} & 69.0 & 56.2 & 97.9\\*
 & \methodvenue{AnchorPrune}{ECCV 2026} & 59.8 & \underline{63.3} & 1454 & 86.8 & 68.9 & 57.0 & 98.3\\*
 & \methodvenue{CoverPruner}{EMNLP 2026} & 59.1 & 62.9 & \underline{1465} & \underline{87.5} & \textbf{70.4} & 56.9 & 98.5\\*
 & \methodvenue{SCoRe}{CVPR 2026} & 59.3 & 62.6 & 1451 & 87.3 & \underline{70.1} & \textbf{57.2} & \underline{99.0}\\*
\covistrow\cellcolor{white}\budgetlabel[12.6pt]{12}{$K{=}128$ ($\downarrow 77.8\%$)} & \textbf{\fixed} & \textbf{61.3} & \textbf{64.1} & \textbf{1524} & 86.3 & 69.0 & \underline{57.1} & \textbf{99.6}\\
\midrule
 & \methodvenue{PruMerge+}{ICCV 2025} & 55.4 & 59.6 & 1317 & 75.7 & \underline{69.5} & 52.0 & 91.1\\*
 & \methodvenue{TRIM}{COLING 2025} & 56.6 & 60.9 & 1351 & 85.9 & 69.0 & 49.7 & 93.3\\*
 & \longmethodvenue{Cen-Prune+ZOO-Prune}{arXiv 2026} & 58.2 & 60.7 & 1390 & 85.3 & 67.8 & 56.2 & 95.5\\*
 & \longmethodvenue{Cen-Prune+DivPrune}{arXiv 2026} & 58.4 & 60.5 & 1370 & 85.3 & 68.5 & \textbf{56.6} & 95.6\\*
 & \methodvenue{SCoRe}{CVPR 2026} & 56.9 & 60.2 & 1414 & 83.1 & \textbf{69.9} & \underline{56.5} & 96.2\\*
 & \methodvenue{CDPruner}{NeurIPS 2025} & 58.6 & 61.1 & 1415 & \textbf{87.5} & 68.1 & 55.3 & 96.3\\*
 & \methodvenue{EADP}{ECCV 2026} & \underline{59.4} & \underline{61.9} & 1404 & 86.5 & 68.9 & 55.0 & 96.4\\*
 & \methodvenue{CoverPruner}{EMNLP 2026} & 58.4 & 60.8 & \underline{1432} & 86.1 & 69.3 & 55.6 & 96.5\\*
 & \methodvenue{SPARE}{arXiv 2026} & 59.1 & 61.5 & 1400 & \underline{87.3} & 68.1 & 55.8 & 96.5\\*
 & \methodvenue{AnchorPrune}{ECCV 2026} & 58.6 & 61.3 & 1420 & 85.8 & 68.8 & 56.1 & 96.6\\*
 & \longmethodvenue{Cen-Prune+CDPruner}{arXiv 2026} & 59.0 & \underline{61.9} & 1414 & 86.1 & 68.8 & 56.3 & \underline{96.7}\\*
\covistrow\cellcolor{white}\budgetlabel[12.6pt]{12}{$K{=}64$ ($\downarrow 88.9\%$)} & \textbf{\fixed} & \textbf{60.2} & \textbf{63.1} & \textbf{1495} & 85.8 & 68.5 & 56.4 & \textbf{98.3}\\
\end{longtable}\endgroup

\begingroup\fontsize{8.5}{10}\selectfont
\setlength{\tabcolsep}{3pt}\renewcommand{\arraystretch}{0.95}
\setlength{\LTcapwidth}{\linewidth}
\begin{longtable}{@{}lccccccc@{}}
\caption{Compression during decoder prefill on LLaVA-1.5-7B. Panels (a,b,d) use MME-PC and $R_{\mathrm{PC}}$; (c) uses MME-P and $R_{\mathrm P}$. In (a,c), budget headings give reported layer averages $K$; $^\ddagger$ marks a budget as labeled by its source. Panel (b) uses post-pruning counts except SIEVE$^\ddagger$; (d) uses sample-average final counts. $^{\mathrm T}$ denotes training. Sorting and bold/underline marks follow Table~\ref{tab:external-fixed-full}; the single-method settings in (d) are unranked.}\label{tab:external-pro-full}\\
\toprule
Method & GQA & MMB & MME & POPE & SQA & VQA\tsup{T} & $R$ (\%)\\
\midrule\endfirsthead
\multicolumn{8}{l}{\textit{Table \thetable\ (continued)}}\\
\toprule
Method & GQA & MMB & MME & POPE & SQA & VQA\tsup{T} & $R$ (\%)\\
\midrule\endhead
\midrule\multicolumn{8}{r}{\textit{Continued on next page}}\\\endfoot
\bottomrule\endlastfoot
\multicolumn{8}{l}{\textbf{(a) Layer schedules: MME-PC and $R_{\mathrm{PC}}$}}\\*
Vanilla, $K=576$ & 62.0 & 64.0 & 1867 & 85.8 & 69.5 & 58.2 & 100.0\\*\midrule
\multicolumn{8}{c}{\textit{$K=192$ ($\downarrow 66.7\%$)}}\\*\midrule
\methodvenue{SparseVLM}{ICML 2025} & 57.6 & 62.5 & 1721 & 83.6 & 69.1 & 56.1 & 95.9\\*
\methodvenue{PyramidDrop}{CVPR 2025} & 57.3 & 63.3 & 1797 & 84.8 & 69.2 & 56.5 & 97.0\\*
\methodvenue{ApET$^{\ddagger}$}{CVPR 2026} & 60.2 & 63.4 & 1808 & 86.3 & 68.5 & 54.4 & 97.5\\*
\methodvenue{V2Drop}{CVPR 2026} & 58.5 & 63.7 & \underline{1826} & 85.1 & 69.3 & 55.6 & 97.6\\*
\longmethodvenue{DivPrune+Random$^{\ddagger}$}{CVPR 2026} & 60.1 & 63.2 & 1777 & \textbf{87.1} & 69.3 & 55.9 & 97.9\\*
\methodvenue{E2S-Pruner}{arXiv 2026} & 58.6 & 64.4 & 1814 & 83.6 & \underline{69.8} & 57.2 & 98.0\\*
\methodvenue{DART+Random$^{\ddagger}$}{CVPR 2026} & 59.6 & 63.9 & 1816 & 84.4 & 69.2 & 56.8 & 98.0\\*
\methodvenue{VScan}{TMLR 2026} & 60.6 & 63.9 & 1806 & 86.2 & 68.6 & 57.7 & 98.6\\*
\methodvenue{LRCP$^{\ddagger}$}{arXiv 2026} & 60.3 & 63.7 & 1823 & 86.4 & 68.8 & 57.4 & 98.7\\*
\methodvenue{ProViP$^{\ddagger}$}{arXiv 2026} & 60.9 & \textbf{64.8} & 1824 & 85.5 & 68.7 & 57.4 & 98.9\\*
\methodvenue{VisionTrim$^{\ddagger}$}{ICLR 2026} & \underline{61.0} & 64.4 & 1798 & \underline{86.8} & \textbf{70.8} & \textbf{58.4} & \underline{99.7}\\*
\covistrow \textbf{\pro} & \textbf{61.6} & \underline{64.5} & \textbf{1855} & 86.2 & 69.3 & \underline{57.9} & \textbf{99.8}\\
\midrule
\multicolumn{8}{c}{\textit{$K=128$ ($\downarrow 77.8\%$)}}\\*\midrule
\methodvenue{SparseVLM}{ICML 2025} & 56.0 & 60.0 & 1696 & 80.5 & 67.1 & 54.9 & 93.1\\*
\methodvenue{V2Drop}{CVPR 2026} & 56.3 & 61.8 & 1712 & 80.9 & 68.8 & 53.8 & 94.0\\*
\methodvenue{PyramidDrop}{CVPR 2025} & 57.1 & 61.6 & 1761 & 82.6 & 68.4 & 56.6 & 95.6\\*
\methodvenue{ApET$^{\ddagger}$}{CVPR 2026} & 58.9 & 62.3 & 1801 & 86.1 & 68.7 & 53.9 & 96.6\\*
\methodvenue{E2S-Pruner}{arXiv 2026} & 57.8 & 63.5 & 1779 & 83.1 & \textbf{69.9} & 56.0 & 96.8\\*
\longmethodvenue{SinkPruner$^{\ddagger}$}{EMNLP Findings 2026} & 59.3 & 63.6 & 1806 & 85.3 & \underline{69.8} & 55.8 & 97.8\\*
\methodvenue{LRCP$^{\ddagger}$}{arXiv 2026} & 58.9 & 63.1 & \underline{1807} & \underline{87.0} & 68.5 & 56.8 & 97.9\\*
\methodvenue{VScan}{TMLR 2026} & 59.8 & 63.0 & 1792 & 86.1 & 68.9 & 57.3 & 98.0\\*
\methodvenue{ProViP$^{\ddagger}$}{arXiv 2026} & 60.2 & 63.9 & 1785 & 86.3 & 69.2 & 56.7 & 98.2\\*
\methodvenue{STS}{arXiv 2026} & 60.1 & 63.5 & 1803 & \textbf{87.2} & 68.8 & 57.6 & 98.6\\*
\methodvenue{VisionTrim$^{\ddagger}$}{ICLR 2026} & \underline{60.3} & \underline{64.0} & 1788 & 86.6 & 69.7 & \textbf{58.2} & \underline{98.9}\\*
\covistrow \textbf{\pro} & \textbf{61.3} & \textbf{64.7} & \textbf{1890} & 86.1 & 69.1 & \underline{57.7} & \textbf{100.0}\\
\midrule
\multicolumn{8}{c}{\textit{$K=64$ ($\downarrow 88.9\%$)}}\\*\midrule
\methodvenue{SparseVLM}{ICML 2025} & 52.7 & 56.2 & 1505 & 75.1 & 62.2 & 51.8 & 86.5\\*
\methodvenue{V2Drop}{CVPR 2026} & 50.5 & 55.2 & 1470 & 75.1 & 68.9 & 51.8 & 86.9\\*
\methodvenue{PyramidDrop}{CVPR 2025} & 47.5 & 58.8 & 1561 & 76.2 & 69.0 & 50.6 & 87.7\\*
\methodvenue{E2S-Pruner}{arXiv 2026} & 55.5 & 57.3 & 1656 & 78.5 & 69.4 & 49.5 & 90.6\\*
\methodvenue{DART+Random$^{\ddagger}$}{CVPR 2026} & 55.1 & 60.4 & 1670 & 74.1 & 68.4 & 53.4 & 91.4\\*
\methodvenue{ApET$^{\ddagger}$}{CVPR 2026} & 56.9 & 61.2 & 1714 & 84.4 & 68.9 & 53.0 & 94.5\\*
\longmethodvenue{DivPrune+Random$^{\ddagger}$}{CVPR 2026} & 57.8 & 61.3 & 1693 & \underline{86.8} & 68.3 & 54.5 & 95.3\\*
\methodvenue{LRCP$^{\ddagger}$}{arXiv 2026} & 57.1 & 60.9 & 1721 & 85.3 & 68.5 & 55.6 & 95.4\\*
\methodvenue{VScan}{TMLR 2026} & 58.3 & 62.1 & 1698 & 85.0 & 69.1 & 55.6 & 95.9\\*
\longmethodvenue{SinkPruner$^{\ddagger}$}{EMNLP Findings 2026} & 57.4 & 62.8 & 1754 & 83.8 & \underline{70.0} & 55.5 & 96.3\\*
\methodvenue{ProViP$^{\ddagger}$}{arXiv 2026} & 58.7 & 62.3 & 1723 & 85.8 & 69.4 & 54.9 & 96.3\\*
\methodvenue{STS}{arXiv 2026} & \underline{59.0} & 61.6 & 1718 & \textbf{87.0} & 69.2 & \underline{56.8} & 96.9\\*
\methodvenue{VisionTrim$^{\ddagger}$}{ICLR 2026} & 58.8 & \underline{63.0} & \underline{1780} & 86.2 & \textbf{71.0} & \underline{56.8} & \underline{98.0}\\*
\covistrow \textbf{\pro} & \textbf{60.9} & \textbf{64.4} & \textbf{1863} & 86.0 & 68.4 & \textbf{57.0} & \textbf{99.2}\\
\midrule
\multicolumn{8}{l}{\textbf{(b) Post-pruning counts: MME-PC and $R_{\mathrm{PC}}$}}\\*
\multicolumn{8}{c}{\textit{$K=192$ ($\downarrow 66.7\%$)}}\\*\midrule
\methodvenue{FastV}{ECCV 2024} & 52.7 & 61.2 & 1612 & 64.8 & 67.3 & 52.5 & 88.1\\*
\methodvenue{PriorTR}{ECCV 2026} & 60.4 & 63.6 & 1845 & 83.8 & 68.6 & 57.5 & 98.3\\*
\methodvenue{DART}{EMNLP 2025} & 60.0 & 63.6 & 1856 & 82.8 & 69.8 & 57.4 & 98.4\\*
\methodvenue{SIEVE$^{\ddagger}$}{arXiv 2026} & 60.6 & \textbf{64.6} & 1820 & 85.5 & 68.9 & 57.8 & 98.9\\*
\methodvenue{VLM-Pruner}{CVPR 2026} & \underline{61.3} & 63.5 & 1788 & \textbf{86.0} & \underline{70.0} & 58.0 & 98.9\\*
\methodvenue{CLSE}{ECCV 2026} & \textbf{61.5} & 63.6 & 1817 & 85.1 & \textbf{70.1} & 57.5 & 99.0\\*
\methodvenue{MoB}{NeurIPS 2025} & 61.2 & 63.8 & 1858 & 84.5 & \underline{70.0} & \underline{58.2} & 99.4\\*
\methodvenue{RoRA}{arXiv 2026} & 61.2 & \underline{64.5} & \underline{1861} & 85.5 & 69.1 & \textbf{58.7} & \underline{99.7}\\*
\methodvenue{D$^2$Pruner}{AAAI 2026} & 60.8 & 64.3 & \textbf{1879} & \underline{85.6} & \underline{70.0} & \underline{58.2} & \textbf{99.8}\\
\midrule
\multicolumn{8}{c}{\textit{$K=128$ ($\downarrow 77.8\%$)}}\\*\midrule
\methodvenue{FastV}{ECCV 2024} & 49.6 & 56.1 & 1490 & 59.6 & 60.2 & 50.6 & 81.6\\*
\methodvenue{DART}{EMNLP 2025} & 58.7 & 63.2 & 1840 & 80.1 & 69.1 & 56.4 & 96.8\\*
\methodvenue{PriorTR}{ECCV 2026} & 59.1 & 62.4 & 1820 & 81.7 & 68.6 & 56.9 & 96.9\\*
\methodvenue{VLM-Pruner}{CVPR 2026} & \textbf{61.0} & 62.5 & 1768 & 85.4 & \underline{69.3} & 57.3 & 98.0\\*
\methodvenue{MoB}{NeurIPS 2025} & 60.7 & 63.2 & \underline{1842} & 81.7 & \underline{69.3} & 57.5 & 98.0\\*
\methodvenue{CLSE}{ECCV 2026} & \underline{60.9} & 63.0 & 1816 & 84.4 & \textbf{70.1} & 56.6 & 98.3\\*
\methodvenue{SIEVE$^{\ddagger}$}{arXiv 2026} & 59.8 & \textbf{64.0} & 1815 & \underline{85.8} & 68.8 & 57.6 & 98.5\\*
\methodvenue{D$^2$Pruner}{AAAI 2026} & 60.3 & \underline{63.7} & \textbf{1850} & 85.1 & \underline{69.3} & \underline{58.0} & \underline{98.9}\\*
\methodvenue{RoRA}{arXiv 2026} & 60.0 & \underline{63.7} & \textbf{1850} & \textbf{86.1} & 69.2 & \textbf{58.4} & \textbf{99.1}\\
\midrule
\multicolumn{8}{c}{\textit{$K=64$ ($\downarrow 88.9\%$)}}\\*\midrule
\methodvenue{FastV}{ECCV 2024} & 46.1 & 48.0 & 1256 & 48.0 & 51.1 & 47.8 & 71.3\\*
\methodvenue{DART}{EMNLP 2025} & 55.9 & 60.6 & 1765 & 73.9 & \underline{69.8} & 54.4 & 93.1\\*
\methodvenue{PriorTR}{ECCV 2026} & 56.6 & 61.2 & 1745 & 76.3 & 68.8 & 54.9 & 93.6\\*
\methodvenue{CLSE}{ECCV 2026} & 58.3 & 61.9 & \underline{1809} & 78.3 & 69.6 & 55.5 & 95.6\\*
\methodvenue{VLM-Pruner}{CVPR 2026} & \textbf{59.2} & 61.3 & 1752 & 82.2 & 68.4 & 56.1 & 95.8\\*
\methodvenue{MoB}{NeurIPS 2025} & \underline{59.0} & 61.7 & 1806 & 77.2 & 69.7 & \textbf{57.0} & 96.0\\*
\methodvenue{SIEVE$^{\ddagger}$}{arXiv 2026} & 58.0 & \textbf{62.2} & 1740 & \underline{83.5} & 69.0 & \underline{56.5} & 96.1\\*
\methodvenue{RoRA}{arXiv 2026} & 57.5 & \underline{62.0} & 1773 & \textbf{84.2} & 69.0 & 56.3 & \underline{96.3}\\*
\methodvenue{D$^2$Pruner}{AAAI 2026} & 57.9 & 61.9 & \textbf{1823} & 82.4 & \textbf{70.0} & 56.1 & \textbf{96.7}\\
\midrule
\multicolumn{8}{l}{\textbf{(c) Layer schedules: MME-P and $R_{\mathrm P}$}}\\*
Vanilla, $K=576$ & 62.0 & 64.0 & 1511 & 85.8 & 69.5 & 58.2 & 100.0\\*\midrule
\multicolumn{8}{c}{\textit{$K=128$ ($\downarrow 77.8\%$)}}\\*\midrule
\methodvenue{TOPS$^{\ddagger}$}{arXiv 2026} & 60.5 & \underline{62.5} & \underline{1483} & \underline{86.8} & 68.2 & 57.0 & 98.3\\*
\methodvenue{STAR-Pro}{arXiv 2026} & \underline{60.8} & 62.4 & 1456 & \textbf{87.3} & \underline{68.9} & \underline{57.2} & \underline{98.4}\\*
\covistrow \textbf{\pro} & \textbf{61.3} & \textbf{64.7} & \textbf{1525} & 86.1 & \textbf{69.1} & \textbf{57.7} & \textbf{100.0}\\
\midrule
\multicolumn{8}{c}{\textit{$K=64$ ($\downarrow 88.9\%$)}}\\*\midrule
\methodvenue{TOPS$^{\ddagger}$}{arXiv 2026} & 58.7 & 60.9 & 1443 & \underline{86.5} & \underline{68.6} & 56.2 & 96.8\\*
\methodvenue{STAR-Pro}{arXiv 2026} & 59.4 & 61.0 & 1421 & \textbf{87.5} & \underline{68.6} & \underline{56.3} & 97.0\\*
\methodvenue{HiDrop$^{\mathrm{T}}$}{ICLR 2026} & \underline{60.5} & \underline{63.2} & \underline{1473} & 86.4 & \textbf{68.9} & 55.2 & \underline{97.8}\\*
\covistrow \textbf{\pro} & \textbf{60.9} & \textbf{64.4} & \textbf{1502} & 86.0 & 68.4 & \textbf{57.0} & \textbf{99.2}\\
\midrule
\multicolumn{8}{l}{\textbf{(d) Sample-average final counts: MME-PC and $R_{\mathrm{PC}}$}}\\*
\multicolumn{8}{c}{\textit{$K=128$ ($\downarrow 77.8\%$)}}\\*\midrule
\methodvenue{OccamToken$^{S}$}{arXiv 2026} & 60.9 & 63.9 & 1825 & 86.3 & 69.1 & 58.0 & 99.1\\
\midrule
\multicolumn{8}{c}{\textit{$K=64$ ($\downarrow 88.9\%$)}}\\*\midrule
\methodvenue{OccamToken$^{S}$}{arXiv 2026} & 59.3 & 63.2 & 1801 & 86.2 & 69.0 & 57.4 & 98.1\\
\end{longtable}\endgroup

\subsection{Compression location and budget semantics}

\paragraph{Fixed counts before prefill.}
The fixed comparison concerns the number of visual tokens that enter and traverse the decoder. Encoder-internal reduction is compatible with this definition. For example, FlowCut completes the LLaVA-1.5 compression inside the vision encoder. Consequently, its LLaVA-1.5 row belongs to the fixed group, although its implementations for other backbones also prune inside the LLM. Cen-Prune is evaluated with three separate base selectors.

\paragraph{Layer schedules and post-pruning counts.}
For a sequence of layer counts $K_\ell$, the mean is $L^{-1}\sum_\ell K_\ell$ and includes every layer before the first pruning event. For example, a method that processes 576 tokens for two layers and $K$ for the remaining 30 layers has a mean of $(2\cdot576+30K)/32$, \textit{i.e.,} 216, 156, and 96 for the post-pruning counts 192, 128, and 64, respectively. These means differ from the target means of \pro. E2S-Pruner explicitly accounts for the initial full-token layers: its four stage lengths are $3/14/5/10$ with counts $576/194/148/96$, $576/122/48/42$, and $576/20/8/0$, which yield exact means of 192, 128, and 64, respectively. STS uses $1.5K$ tokens for the first half of the decoder and $0.5K$ for the second half. STAR-Pro uses $12/8/12$ layers with counts $256/74/36$ or $128/37/18$, which yield exact means of 128 and 64, respectively. The reported schedules of SinkPruner give approximate means of $130.19$ and $64.88$ for the nominal 128 and 64 settings, and TOPS uses $2K\rightarrow K\rightarrow K/4$ over $12/12/8$ layers, giving a mean of $1.1875K$. VisionTrim keeps half of the 576 tokens for the first two decoder layers and the reported count thereafter, which gives the layer averages of 198, 138, and 78 in Table~\ref{tab:external}. Table~\ref{tab:external-pro-full} keeps its source labels and marks these rows. ProViP, ApET, LRCP, SIEVE, and the WTR combinations likewise retain their source-defined budgets.

\paragraph{Sample averages and training.}
PruneSID-Dyn and StepPrune vary the pre-decoder count across samples, and OccamToken reports the sample-average final counts after staged compression. Note that a sample average of final counts does not determine the mean across decoder layers. HiDrop includes training and uses its own reproduced full-model reference. Tables~\ref{tab:other-pc} and~\ref{tab:other-p} preserve these results under their original settings and identify the distinct training protocol.

\paragraph{Coverage beyond the six-task suite.}
DIVE \citep{dive2026} reports five of the six tasks. Therefore, its published aggregate is not inserted into $R_{\mathrm{PC}}$. HAP, TransPrune, and FitPrune \citep{hap2026,transprune2026,fitprune2025} provide results under FLOPs-based settings, which are outside the present numerical comparison. S$^2$Prune \citep{s2prune2026} and ET-Prune \citep{etprune2026} evaluate other backbone suites and are therefore not included.

\begingroup\fontsize{8.1}{9}\selectfont\setlength{\tabcolsep}{3pt}
\begin{longtable}{@{}lcccccccc@{}}
\caption{Additional reported settings with MME perception plus cognition. The budget column reproduces each source's setting. $^{\mathrm T}$ denotes training. Source codes are defined in Table~\ref{tab:source-key}.}\label{tab:other-pc}\\
\toprule Method & Budget & GQA & MMB & MME & POPE & SQA & VQA\tsup{T} & $R$ (\%)\\\midrule\endfirsthead
\multicolumn{9}{l}{\textit{Additional settings (continued)}}\\\toprule
Method & Budget & GQA & MMB & MME & POPE & SQA & VQA\tsup{T} & $R$ (\%)\\\midrule\endhead
\bottomrule\endfoot
\methodvenue{PruneSID-Dyn$^{S}$}{ICLR 2026}\textsuperscript{PS} & 192 & 60.2 & 63.8 & 1797 & 87.1 & 69.1 & 56.9 & 98.5\\
\methodvenue{PruneSID-Dyn$^{S}$}{ICLR 2026}\textsuperscript{PS} & 128 & 58.9 & 62.6 & 1760 & 86.9 & 68.8 & 55.1 & 96.9\\
\methodvenue{PruneSID-Dyn$^{S}$}{ICLR 2026}\textsuperscript{PS} & 64 & 57.2 & 59.7 & 1734 & 84.1 & 68.1 & 54.2 & 94.5\\
\methodvenue{OccamToken$^{S}$}{arXiv 2026}\textsuperscript{OC} & 128 & 60.9 & 63.9 & 1825 & 86.3 & 69.1 & 58.0 & 99.1\\
\methodvenue{OccamToken$^{S}$}{arXiv 2026}\textsuperscript{OC} & 64 & 59.3 & 63.2 & 1801 & 86.2 & 69.0 & 57.4 & 98.1\\
\methodvenue{OccamToken$^{S}$}{arXiv 2026}\textsuperscript{OC} & 32 & 59.1 & 62.9 & 1780 & 86.2 & 69.1 & 56.3 & 97.5\\
\methodvenue{StepPrune$^{S}$$^{\mathrm{T}}$}{arXiv 2026}\textsuperscript{SE} & 192 & 59.5 & 63.5 & 1773 & 85.7 & 69.9 & 56.5 & 97.8\\
\methodvenue{StepPrune$^{S}$$^{\mathrm{T}}$}{arXiv 2026}\textsuperscript{SE} & 128 & 57.3 & 63.2 & 1749 & 85.2 & 69.8 & 55.3 & 96.5\\
\methodvenue{StepPrune$^{S}$$^{\mathrm{T}}$}{arXiv 2026}\textsuperscript{SE} & 64 & 56.4 & 61.8 & 1698 & 83.1 & 69.3 & 54.7 & 94.7\\
\end{longtable}\endgroup

\begingroup\fontsize{8.1}{9}\selectfont\setlength{\tabcolsep}{3pt}
\begin{longtable}{@{}lcccccccc@{}}
\caption{Additional reported settings with MME perception only. The budget column reproduces each source's setting, and $^{\ddagger}$ marks a budget as labeled by its source. $^{\mathrm T}$ denotes training. Source codes are defined in Table~\ref{tab:source-key}.}\label{tab:other-p}\\
\toprule Method & Budget & GQA & MMB & MME & POPE & SQA & VQA\tsup{T} & $R$ (\%)\\\midrule\endfirsthead
\multicolumn{9}{l}{\textit{Additional settings (continued)}}\\\toprule
Method & Budget & GQA & MMB & MME & POPE & SQA & VQA\tsup{T} & $R$ (\%)\\\midrule\endhead
\bottomrule\endfoot
\methodvenue{TOPS$^{\ddagger}$}{arXiv 2026}\textsuperscript{TP} & 128 & 60.5 & 62.5 & 1483 & 86.8 & 68.2 & 57.0 & 98.3\\
\methodvenue{TOPS$^{\ddagger}$}{arXiv 2026}\textsuperscript{TP} & 64 & 58.7 & 60.9 & 1443 & 86.5 & 68.6 & 56.2 & 96.8\\
\methodvenue{TOPS$^{\ddagger}$}{arXiv 2026}\textsuperscript{TP} & 32 & 56.7 & 59.5 & 1385 & 83.5 & 68.8 & 54.9 & 94.3\\
\methodvenue{STAR-Pro}{arXiv 2026}\textsuperscript{SR} & 128 & 60.8 & 62.4 & 1456 & 87.3 & 68.9 & 57.2 & 98.4\\
\methodvenue{STAR-Pro}{arXiv 2026}\textsuperscript{SR} & 64 & 59.4 & 61.0 & 1421 & 87.5 & 68.6 & 56.3 & 97.0\\
\methodvenue{STAR-Pro}{arXiv 2026}\textsuperscript{SR} & 32 & 57.2 & 60.1 & 1358 & 86.6 & 68.7 & 54.6 & 94.8\\
\methodvenue{HiDrop$^{\mathrm{T}}$}{ICLR 2026}\textsuperscript{HI} & 80 & 61.3 & 63.7 & 1467 & 86.6 & 67.5 & 54.9 & 97.7\\
\methodvenue{HiDrop$^{\mathrm{T}}$}{ICLR 2026}\textsuperscript{HI} & 64 & 60.5 & 63.2 & 1473 & 86.4 & 68.9 & 55.2 & 97.8\\
\methodvenue{HiDrop$^{\mathrm{T}}$}{ICLR 2026}\textsuperscript{HI} & 48 & 59.8 & 63.7 & 1446 & 85.8 & 67.7 & 54.4 & 96.8\\
\midrule
Vanilla & 576 & 62.0 & 64.0 & 1511 & 85.8 & 69.5 & 58.2 & 100.0\\
\covistrow \pro & 192 & 61.6 & 64.5 & 1494 & 86.2 & 69.3 & 57.9 & 99.7\\
\covistrow \pro & 128 & 61.3 & 64.7 & 1525 & 86.1 & 69.1 & 57.7 & 100.0\\
\covistrow \pro & 64 & 60.9 & 64.4 & 1502 & 86.0 & 68.4 & 57.0 & 99.2\\
\end{longtable}\endgroup

\begin{longtable}{@{}lp{2.7in}cc@{}}
\caption{Sources for external task scores. Codes identify a specific report, including reproduced baseline rows. B1--B12 identify the complete six-task denominators in Table~\ref{tab:external-bases}.}\label{tab:source-key}\\
\toprule Code & Reporting paper & Table & Base\\\midrule\endfirsthead
\multicolumn{4}{l}{\textit{Sources (continued)}}\\\toprule
Code & Reporting paper & Table & Base\\\midrule\endhead
\bottomrule\endfoot
R & RESTORE \citep{cho2026restore} & 1 & B1\\
D & DeSAP \citep{ma2026desap} & 1 & B2\\
C & CRISP \citep{li2026crisp} & 1 & B2\\
V & VisionTrim \citep{visiontrim2026} & 1 & B2\\
T & CDPruner \citep{zheng2025cdpruner} & 1 & B3\\
S & SCoRe \citep{score2026} & 1 & B4\\
E & EADP \citep{wang2026eadp} & 1 & B5\\
A & AnchorPrune \citep{oh2026anchorprune} & 1 & B6\\
SC & SCOPE \citep{scope2025} & 1 & B2\\
SF & SpecFlow \citep{specflow2026} & 1 & B2\\
EV & EvoCut \citep{evocut2026} & 1 & B2\\
CA & CaRe \citep{li2026care} & 1 & B2\\
MT & MMTok \citep{mmtok2026} & 1 & B2\\
ZO & ZOO-Prune \citep{zooprune2026} & 1 & B2\\
FC & FlowCut \citep{flowcut2026} & 1 & B2\\
PS & PruneSID \citep{prunesid2026} & 1 & B2\\
SP & SPARE \citep{spare2026} & 1 & B3\\
CP & CoverPruner \citep{coverpruner2026} & 1 & B3\\
CN & Cen-Prune + CDPruner \citep{cenprune2026} & 2 & B7\\
E2 & E2S-Pruner \citep{e2s2026} & 1 & B1\\
V2 & V2Drop \citep{v2drop2026} & 1 & B1\\
ST & STS \citep{sts2026} & 1 & B2\\
PV & ProViP \citep{provip2026} & 1 & B2\\
AP & ApET \citep{apet2026} & 1 & B2\\
LR & LRCP \citep{lrcp2026} & 1 & B2\\
SK & SinkPruner \citep{sinkpruner2026} & 11 & B2\\
DA & DART \citep{dart2025} & 1 & B2\\
PT & PriorTR \citep{priortr2026} & 1 & B8\\
VP & VLM-Pruner \citep{vlmpruner2026} & 1 & B9\\
D2 & D$^2$Pruner \citep{d2pruner2026} & 1 & B2\\
MO & MoB \citep{mob2026} & 2 & B2\\
SI & SIEVE \citep{sieve2026} & 1 & B2\\
CL & CLSE \citep{clse2026} & 1 & B2\\
RR & RoRA \citep{rora2026} & 1 & B2\\
OC & OccamToken \citep{occamtoken2026} & 1 & B2\\
TP & TOPS \citep{tops2026} & 11 & B3\\
SR & STAR-Pro \citep{starpro2026} & 6 & B10\\
HI & HiDrop \citep{hidrop2026} & 1 & B11\\
SE & StepPrune \citep{stepprune2026} & 1 & B2\\
WR & WTR \citep{wtr2026} & 2 & B12\\
\end{longtable}
\begin{table}[t]\centering
\caption{Source-specific full-model scores. PC denotes perception plus cognition and P denotes perception only. Displayed denominators are rounded; retention uses the original reported precision.}
\label{tab:external-bases}\setlength{\tabcolsep}{4pt}
\begin{tabular}{@{}lccccccc@{}}\toprule
Base & MME type & GQA & MMB & MME & POPE & SQA & VQA\tsup{T}\\\midrule
B1 & PC & 61.9 & 64.6 & 1862 & 85.9 & 69.5 & 58.2\\
B2 & PC & 61.9 & 64.7 & 1862 & 85.9 & 69.5 & 58.2\\
B3 & P & 61.9 & 64.7 & 1507 & 85.9 & 69.5 & 58.2\\
B4 & P & 62.0 & 64.3 & 1511 & 85.9 & 66.8 & 58.2\\
B5 & P & 61.9 & 64.7 & 1513 & 85.9 & 69.6 & 58.2\\
B6 & P & 61.9 & 64.0 & 1509 & 85.8 & 69.6 & 58.3\\
B7 & P & 61.9 & 64.7 & 1511 & 85.9 & 69.6 & 58.2\\
B8 & PC & 61.9 & 64.6 & 1864 & 85.9 & 69.5 & 58.3\\
B9 & PC & 62.0 & 64.7 & 1862 & 85.9 & 69.5 & 58.2\\
B10 & P & 61.9 & 64.7 & 1508 & 85.9 & 69.5 & 58.2\\
B11 & P & 61.9 & 64.7 & 1507 & 86.8 & 69.5 & 58.2\\
B12 & PC & 62.0 & 64.6 & 1862 & 85.9 & 69.5 & 58.3\\
\bottomrule\end{tabular}\end{table}

\subsection{Cross-model sources and per-task comparisons}
\label{app:cross-model-sources}

\paragraph{LLaVA-1.5-13B.}
Table~\ref{tab:cross-13b} of the main text uses the results from FSR Table~5 \citep{tong2026fsr}, SCOPE Table~6 \citep{scope2025}, MMTok Table~17 \citep{mmtok2026}, ZOO-Prune Table~C \citep{zooprune2026}, PruneSID Table~9 \citep{prunesid2026}, and VisionZip Table~10 \citep{yang2025visionzip}, where SCOPE does not report VQAv2. DivPrune uses the reproduction in ZOO-Prune, and CDPruner uses the reproduction in FSR. The PruneSID record at 192 tokens duplicates the VisionZip scores of the same table and is therefore omitted. As in the other comparisons, each external row is normalized by the full model of its own source. Specifically, FSR reports GQA 63.3, MMBench 68.5, MME 1828, POPE 86.1, ScienceQA 72.8, and TextVQA 61.2, and the other five sources report 63.2, 67.7, 1818, 85.9, 72.8, and 61.3, respectively. The sources report VQAv2 on the test-dev split with a full-model score of $80.0$, whereas we evaluate the validation split with a full-model score of $78.3$. Our rows are therefore normalized by our own full model (Table~\ref{tab:transfer}). For \fixed, the six tasks keep the runs used throughout the paper and VQAv2 comes from a later run of the same configuration.

\paragraph{LLaVA-NeXT-7B.}
Table~\ref{tab:cross-next} of the main text uses sources that evaluate this backbone under the perception-plus-cognition convention and the OCR-augmented TextVQA prompt. The full-model TextVQA reference of $61.3$ in these sources matches our $61.1$ rather than our plain-prompt $64.6$. The sources are VisionTrim Table~2 (PyramidDrop, VisionZip, and VisionTrim), ZOO-Prune Table~3 (DivPrune, VisionZip, and ZOO-Prune), OccamToken Table~2 (DivPrune), ApET Table~2, FlowCut Table~2, and CaRe Table~3. The budgets of FlowCut and VisionTrim give the layer averages implied by their settings. FlowCut prunes to twice the reported count before the language model and to the reported count at its second layer. Meanwhile, VisionTrim, whose hyperparameters are shared across models, keeps half of the 2,880 tokens for the first two decoder layers and the reported count thereafter. When a method is reported by several sources, its own paper takes precedence. Otherwise, the third-party report with the highest retention is kept, which favors the compared method. Each row is normalized by the full model of its own source. The remaining panels of this appendix are organized by source instead. The MME-PC panel uses ZOO-Prune Table~3, including the VisionZip and DivPrune evaluations of that paper, and the MME-P panel uses AnchorPrune Table~2, including PruMerge+, VisionZip, DivPrune, and CDPruner. Each panel includes the source paper's own method, normalizes each external row by the full model of that source (Table~\ref{tab:cross-model-bases}), and normalizes \fixed\ by its own full model. The labels 640, 320, and 160 preserve the budgets reported by the sources; for our implementation they are content-token counts with 48 additional newline tokens.

\paragraph{Qwen2.5-VL-7B.}
Table~\ref{tab:cross-qwen} of the main text collects the sources that report MME, MMBench, POPE, and ScienceQA on this backbone, which is the entry requirement of that table. The sources are TOPS Table~2 (CDPruner, DivPrune, and TOPS), PruneSID Table~12 (PruneSID and VisionZip), and CaRe Table~S4 (CaRe and ZOO-Prune). MMBench-CN is reported in addition to the seven tasks of Table~\ref{tab:external}, and among these sources only PruneSID Table~12 reports VQAv2, with a full-model score of $82.9$ against our $81.6$ on the validation split. TextVQA follows the OCR-augmented prompt. The reason is that the only source reporting it gives a full-model reference of $77.7$, which is close to our $76.7$ under that prompt and far from our plain-prompt $82.4$. At the 512-token budget, PruneSID Table~12 is the only source that meets this requirement, with $33.3\%$ of the tokens kept. Table~\ref{tab:external-qwen} draws CDPruner and DivPrune from CDPruner Table~4, and HiPrune \citep{hiprune2026} and EADP from EADP Table~3, all of which report 512, 256, and 128 retained tokens. Then, it averages the three metrics jointly available in that panel, \textit{i.e.,} MMBench, MME-PC, and TextVQA, into $R_3$. Table~\ref{tab:external-qwen-subsets}(a) uses the GQA, MMBench, POPE, and ScienceQA results of CoIn Table~3 \citep{du2026coin}, and panel (b) uses the MMBench, MME-P, TextVQA, and MMMU results of AnchorPrune Table~3. Each panel defines its own four-task aggregate $R_4$ at the two overlapping budgets of 256 and 128. The published TextVQA values remain as reported. Our TextVQA results omit the OCR tokens in the three-task table and include them in the AnchorPrune panel, with both variants reported in Tables~\ref{tab:transfer} and~\ref{tab:transfer-extra}.

\paragraph{Eligibility and ordering.}
The panels of this appendix admit only pre-decoder fixed-count configurations. As a result, methods that reduce tokens inside the language model under a layer-average budget, such as PyramidDrop, ApET, and VisionTrim, appear in the main tables instead, where the Budget column or the table note states the semantics of each budget and, where it can be derived, the layer-average count it corresponds to. Methods that keep the full visual sequence in the first decoder layers and count only the tokens kept afterwards, such as FastV and DART, are not compared on these backbones. Moreover, the per-tile upper bounds of MMTok on NeXT are not treated as exact whole-image counts, the token keep ratios under dynamic-resolution Qwen inputs are not converted into the fixed counts of our $1008\times1008$ evaluation, and missing task scores are not imputed. Within each panel and budget, methods are ordered by the unrounded aggregate from the lowest to the highest. Bold and underline denote the highest and second-highest distinct displayed scores including ties, and the uncompressed references are excluded from this highlighting.

\begin{table}[htbp]
\centering
\caption{External full-model references for cross-model retention. Each row lists scores in the order stated by its panel header. The CoViST references appear in Tables~\ref{tab:transfer} and~\ref{tab:transfer-extra}. Source-specific ratios use the full available precision before these references are rounded for display.}
\label{tab:cross-model-bases}
\small
\setlength{\tabcolsep}{4pt}
\begin{tabular*}{\linewidth}{@{\extracolsep{\fill}}lcccccc@{}}
\toprule
\multicolumn{7}{l}{\textbf{LLaVA-NeXT-7B: six tasks}}\\
Source & GQA & MMB & MME & POPE & SQA & VQA\tsup{T}\\\midrule
ZOO-Prune (MME-PC) & 64.2 & 67.9 & 1842 & 86.4 & 70.2 & 61.3\\
AnchorPrune (MME-P) & 64.2 & 67.2 & 1529 & 86.4 & 70.2 & 61.3\\
\bottomrule
\end{tabular*}
\par\medskip
\begin{tabular*}{\linewidth}{@{\extracolsep{\fill}}lccc@{}}
\toprule
\multicolumn{4}{l}{\textbf{Qwen2.5-VL-7B: three-task comparison}}\\
Source & MMB & MME-PC & VQA\tsup{T}\\\midrule
CDPruner (C) & 82.8 & 2304 & 84.8\\
EADP (E) & 83.9 & 2303 & 84.5\\
\bottomrule
\end{tabular*}
\par\medskip
\begin{tabular*}{\linewidth}{@{\extracolsep{\fill}}lcccc@{}}
\toprule
\multicolumn{5}{l}{\textbf{Qwen2.5-VL-7B: four-task comparisons}}\\
Source & GQA & MMB & POPE & SQA\\\midrule
CoIn & 60.8 & 83.8 & 86.3 & 88.2\\\midrule
Source & MMB & MME-P & VQA\tsup{T} & MMMU\\\midrule
AnchorPrune & 83.1 & 1688 & 77.3 & 50.6\\
\bottomrule
\end{tabular*}
\label{tab:cross-model-bases-end}
\end{table}

\begin{table}[p]\centering
\caption{Pre-decoder comparisons on LLaVA-NeXT-7B. Panel (a) uses the MME-PC results reported by \citet{zooprune2026}. Panel (b) uses the MME-P results reported by \citet{oh2026anchorprune}, including AnchorPrune itself. Each panel averages its six displayed tasks using the full-model references of the corresponding source. CoViST uses its own references and OCR-augmented TextVQA. Published rows retain their source protocols. CoViST keeps $K+48$ visual tokens including newline markers. Ranking and highlighting follow Table~\ref{tab:external-fixed-full} and are independent within each panel and budget.}\label{tab:external-next}
\fontsize{7.9}{8.8}\selectfont\setlength{\budgetlabw}{15pt}
\setlength{\tabcolsep}{3pt}\renewcommand{\arraystretch}{1.0}
\begin{tabular*}{\linewidth}{@{}w{l}{\budgetlabw}@{\hspace{3pt}}l@{\extracolsep{\fill}\hspace{2\tabcolsep}}ccccccc@{}}
\toprule
\multicolumn{9}{l}{\textbf{(a) MME-PC and six-task retention}}\\
 & Method & GQA & MMB & MME-PC & POPE & SQA & VQA\tsup{T} & $R$ (\%)\\\midrule
 & Vanilla & 64.2 & 64.3 & 1817 & 86.9 & 68.0 & 61.1 & 100.0\\
 & Full (source) & 64.2 & 67.9 & 1842 & 86.4 & 70.2 & 61.3 & 100.0\\
\midrule
 & \methodvenue{DivPrune}{CVPR 2025} & 61.6 & \underline{65.4} & 1773 & 85.5 & 67.8 & 55.4 & 95.7\\
 & \methodvenue{ZOO-Prune}{CVPR 2026} & \underline{62.2} & 65.2 & \underline{1816} & \underline{86.8} & \underline{68.0} & 58.0 & 97.2\\
 & \methodvenue{VisionZip}{CVPR 2025} & 61.3 & \textbf{66.3} & 1787 & 86.3 & \textbf{68.1} & \underline{60.2} & \underline{97.5}\\
\covistrowskip\budgetlabelii{4}{$K{=}640$}{($\downarrow 77.8\%$)} & \textbf{\fixed} & \textbf{63.5} & 63.6 & \textbf{1865} & \textbf{87.4} & 67.9 & \textbf{60.3} & \textbf{99.9}\\
\midrule
 & \methodvenue{DivPrune}{CVPR 2025} & 59.6 & \underline{63.7} & 1731 & 83.5 & \textbf{67.8} & 53.8 & 93.6\\
 & \methodvenue{VisionZip}{CVPR 2025} & 59.3 & 63.1 & 1702 & 82.1 & \underline{67.3} & \textbf{58.9} & 94.1\\
 & \methodvenue{ZOO-Prune}{CVPR 2026} & \underline{61.0} & \textbf{64.9} & \underline{1788} & \underline{85.5} & \textbf{67.8} & 57.3 & \underline{96.1}\\
\covistrowskip\budgetlabelii{4}{$K{=}320$}{($\downarrow 88.9\%$)} & \textbf{\fixed} & \textbf{62.6} & 62.5 & \textbf{1812} & \textbf{87.3} & \textbf{67.8} & \underline{58.6} & \textbf{98.5}\\
\midrule
 & \methodvenue{VisionZip}{CVPR 2025} & 55.5 & 60.1 & 1630 & 74.8 & \underline{68.3} & \textbf{56.2} & 89.8\\
 & \methodvenue{DivPrune}{CVPR 2025} & 57.8 & \underline{62.3} & 1658 & 79.4 & 68.0 & 52.4 & 91.0\\
 & \methodvenue{ZOO-Prune}{CVPR 2026} & \underline{59.9} & \textbf{64.2} & \textbf{1739} & \underline{83.1} & \textbf{68.4} & 55.4 & \underline{94.4}\\
\covistrowskip\budgetlabelii{4}{$K{=}160$}{($\downarrow 94.4\%$)} & \textbf{\fixed} & \textbf{61.1} & 59.5 & \underline{1734} & \textbf{86.7} & 66.7 & \underline{55.8} & \textbf{95.4}\\
\bottomrule\end{tabular*}
\par\medskip
\begin{tabular*}{\linewidth}{@{}w{l}{\budgetlabw}@{\hspace{3pt}}l@{\extracolsep{\fill}\hspace{2\tabcolsep}}ccccccc@{}}
\toprule
\multicolumn{9}{l}{\textbf{(b) MME-P and six-task retention}}\\
 & Method & GQA & MMB & MME-P & POPE & SQA & VQA\tsup{T} & $R$ (\%)\\\midrule
 & Vanilla & 64.2 & 64.3 & 1496 & 86.9 & 68.0 & 61.1 & 100.0\\
 & Full (source) & 64.2 & 67.2 & 1529 & 86.4 & 70.2 & 61.3 & 100.0\\
\midrule
 & \methodvenue{PruMerge+}{ICCV 2025} & 55.4 & 62.8 & 1295 & 67.1 & \underline{70.0} & 51.0 & 87.5\\
 & \methodvenue{DivPrune}{CVPR 2025} & 58.1 & 62.8 & 1315 & 79.6 & 67.4 & 52.6 & 90.6\\
 & \methodvenue{VisionZip}{CVPR 2025} & 61.3 & 64.6 & 1462 & 86.0 & 68.2 & 60.1 & 97.0\\
 & \methodvenue{CDPruner}{NeurIPS 2025} & 62.5 & \underline{65.9} & 1491 & \textbf{87.4} & 68.2 & 58.8 & 97.9\\
\covistrowskip & \textbf{\fixed} & \underline{63.5} & 63.6 & \textbf{1531} & \textbf{87.4} & 67.9 & \underline{60.3} & \underline{99.9}\\
\budgetlabelii{6}{$K{=}640$}{($\downarrow 77.8\%$)} & \methodvenue{AnchorPrune}{ECCV 2026} & \textbf{64.3} & \textbf{67.1} & \underline{1529} & \underline{86.4} & \textbf{70.3} & \textbf{61.3} & \textbf{100.0}\\
\midrule
 & \methodvenue{PruMerge+}{ICCV 2025} & 53.7 & 61.6 & 1221 & 61.2 & \textbf{69.6} & 49.6 & 84.3\\
 & \methodvenue{DivPrune}{CVPR 2025} & 56.1 & 60.4 & 1278 & 74.4 & 67.3 & 50.8 & 87.6\\
 & \methodvenue{VisionZip}{CVPR 2025} & 58.9 & 63.1 & 1410 & 82.2 & 67.6 & \underline{58.9} & 94.2\\
 & \methodvenue{CDPruner}{NeurIPS 2025} & 61.3 & \underline{64.6} & \underline{1483} & \textbf{87.6} & 67.2 & 57.3 & 96.5\\
\covistrowskip & \textbf{\fixed} & \underline{62.6} & 62.5 & 1482 & \underline{87.3} & 67.8 & 58.6 & \underline{98.3}\\
\budgetlabelii{6}{$K{=}320$}{($\downarrow 88.9\%$)} & \methodvenue{AnchorPrune}{ECCV 2026} & \textbf{62.9} & \textbf{65.3} & \textbf{1508} & 87.2 & \underline{69.0} & \textbf{59.6} & \textbf{98.4}\\
\midrule
 & \methodvenue{PruMerge+}{ICCV 2025} & 51.8 & 58.0 & 1153 & 56.8 & \textbf{70.2} & 47.7 & 81.0\\
 & \methodvenue{DivPrune}{CVPR 2025} & 53.2 & 57.4 & 1212 & 67.9 & 67.3 & 48.1 & 83.4\\
 & \methodvenue{VisionZip}{CVPR 2025} & 55.3 & 58.8 & 1326 & 74.8 & 67.8 & \underline{56.0} & 89.2\\
 & \methodvenue{CDPruner}{NeurIPS 2025} & 60.8 & \underline{64.2} & 1431 & \underline{86.7} & 67.4 & 55.7 & 95.2\\
\covistrowskip & \textbf{\fixed} & \underline{61.1} & 59.5 & \underline{1449} & \underline{86.7} & 66.7 & 55.8 & \underline{95.7}\\
\budgetlabelii{6}{$K{=}160$}{($\downarrow 94.4\%$)} & \methodvenue{AnchorPrune}{ECCV 2026} & \textbf{62.6} & \textbf{65.5} & \textbf{1481} & \textbf{87.5} & \underline{68.7} & \textbf{60.1} & \textbf{98.2}\\
\bottomrule\end{tabular*}
\label{tab:external-next-end}
\end{table}

\begin{table}[!htbp]\centering
\caption{Fixed-budget comparisons on Qwen2.5-VL-7B at $512/256/128$ tokens. All three task scores are shown, and $R_3$ averages their source-normalized ratios. Superscripts C and E identify Tables 4 and 3 of \citet{zheng2025cdpruner} and \citet{wang2026eadp}, respectively. CoViST TextVQA omits OCR tokens from the prompt. This three-task aggregate is distinct from the six-task result in Table~\ref{tab:transfer}. The corresponding source references are reported in Appendix~\ref{app:cross-model-sources}. Ranking and highlighting follow Table~\ref{tab:external-fixed-full}.}\label{tab:external-qwen}
\fontsize{8.5}{10}\selectfont\setlength{\budgetlabw}{15pt}
\setlength{\tabcolsep}{3pt}\renewcommand{\arraystretch}{1.0}
\begin{tabular*}{\linewidth}{@{}w{l}{\budgetlabw}@{\hspace{3pt}}l@{\extracolsep{\fill}\hspace{2\tabcolsep}}cccc@{}}
\toprule
\multicolumn{6}{l}{\textbf{Qwen2.5-VL-7B: MMBench, MME-PC, and TextVQA}}\\
 & Method & MMB & MME-PC & VQA\tsup{T} & $R_3$ (\%)\\\midrule
 & Vanilla & 83.4 & 2320 & 82.4 & 100.0\\
\midrule
 & \methodvenue{HiPrune\textsuperscript{E}}{ACL Findings 2026} & 80.3 & 2177 & 75.8 & 93.3\\
 & \methodvenue{EADP\textsuperscript{E}}{ECCV 2026} & 81.6 & 2213 & 78.6 & 95.5\\
 & \methodvenue{DivPrune\textsuperscript{C}}{CVPR 2025} & 81.6 & 2279 & \underline{81.8} & 98.0\\
\covistrowskip & \textbf{\fixed} & \textbf{83.0} & \underline{2308} & 81.5 & \underline{99.3}\\
\budgetlabelii{5}{$K{=}512$}{($\downarrow 60.5\%$)} & \methodvenue{CDPruner\textsuperscript{C}}{NeurIPS 2025} & \underline{82.2} & \textbf{2327} & \textbf{84.2} & \textbf{99.9}\\
\midrule
 & \methodvenue{HiPrune\textsuperscript{E}}{ACL Findings 2026} & 78.4 & 2153 & 64.2 & 87.6\\
 & \methodvenue{EADP\textsuperscript{E}}{ECCV 2026} & 80.2 & 2202 & 73.8 & 92.8\\
 & \methodvenue{DivPrune\textsuperscript{C}}{CVPR 2025} & 80.0 & 2184 & 76.0 & 93.7\\
 & \methodvenue{CDPruner\textsuperscript{C}}{NeurIPS 2025} & \underline{80.9} & \underline{2245} & \textbf{82.4} & \underline{97.4}\\
\covistrowskip\budgetlabelii{5}{$K{=}256$}{($\downarrow 80.2\%$)} & \textbf{\fixed} & \textbf{82.2} & \textbf{2308} & \underline{80.5} & \textbf{98.6}\\
\midrule
 & \methodvenue{HiPrune\textsuperscript{E}}{ACL Findings 2026} & 75.0 & 2027 & 51.1 & 79.3\\
 & \methodvenue{EADP\textsuperscript{E}}{ECCV 2026} & \underline{78.4} & \underline{2138} & 65.8 & 88.0\\
 & \methodvenue{DivPrune\textsuperscript{C}}{CVPR 2025} & 77.8 & 2108 & 67.0 & 88.2\\
 & \methodvenue{CDPruner\textsuperscript{C}}{NeurIPS 2025} & 76.2 & 2127 & \textbf{77.8} & \underline{92.0}\\
\covistrowskip\budgetlabelii{5}{$K{=}128$}{($\downarrow 90.1\%$)} & \textbf{\fixed} & \textbf{80.3} & \textbf{2287} & \underline{74.7} & \textbf{95.2}\\
\bottomrule\end{tabular*}
\label{tab:external-qwen-end}
\end{table}

\begin{table}[p]\centering
\caption{Additional fixed-budget comparisons on Qwen2.5-VL-7B over two independently defined four-task subsets. Panel (a) uses Table 3 of \citet{du2026coin}. VisionZip is their reimplementation. Panel (b) uses Table 3 of \citet{oh2026anchorprune} and OCR-augmented TextVQA for CoViST. Every aggregate uses the four displayed task scores and source-specific full-model references. The two $R_4$ columns have different task sets and are ranked separately.}\label{tab:external-qwen-subsets}
\fontsize{8.5}{10}\selectfont
\setlength{\tabcolsep}{3pt}\renewcommand{\arraystretch}{1.0}
\begin{tabular*}{\linewidth}{@{\extracolsep{\fill}}lccccc@{}}
\toprule
\multicolumn{6}{l}{\textbf{(a) General visual reasoning}}\\
Method & GQA & MMB & POPE & SQA & $R_4$ (\%)\\\midrule
Vanilla & 59.8 & 83.4 & 86.6 & 88.1 & 100.0\\
Full (source) & 60.8 & 83.8 & 86.3 & 88.2 & 100.0\\
\midrule
\multicolumn{6}{c}{\textit{$K=256$ ($\downarrow 80.2\%$)}}\\\midrule
\methodvenue{VisionZip}{CVPR 2025} & 57.0 & 78.6 & 83.2 & 84.5 & 94.9\\
\methodvenue{CoIn}{CVPR 2026} & \underline{58.7} & \underline{79.4} & \textbf{85.1} & \underline{84.6} & \underline{96.5}\\
\covistrowfull \textbf{\fixed} & \textbf{58.9} & \textbf{82.2} & \underline{84.6} & \textbf{86.3} & \textbf{98.2}\\
\midrule
\multicolumn{6}{c}{\textit{$K=128$ ($\downarrow 90.1\%$)}}\\\midrule
\methodvenue{VisionZip}{CVPR 2025} & 52.3 & 74.5 & 78.6 & 82.7 & 89.9\\
\methodvenue{CoIn}{CVPR 2026} & \underline{56.8} & \underline{76.0} & \textbf{83.1} & \underline{83.0} & \underline{93.6}\\
\covistrowfull \textbf{\fixed} & \textbf{57.9} & \textbf{80.3} & \underline{82.2} & \textbf{84.1} & \textbf{95.9}\\
\bottomrule\end{tabular*}
\par\bigskip
\begin{tabular*}{\linewidth}{@{\extracolsep{\fill}}lccccc@{}}
\toprule
\multicolumn{6}{l}{\textbf{(b) Perception, text, and multidisciplinary reasoning}}\\
Method & MMB & MME-P & VQA\tsup{T} & MMMU & $R_4$ (\%)\\\midrule
Vanilla & 83.4 & 1671 & 76.7 & 48.9 & 100.0\\
Full (source) & 83.1 & 1688 & 77.3 & 50.6 & 100.0\\
\midrule
\multicolumn{6}{c}{\textit{$K=256$ ($\downarrow 80.2\%$)}}\\\midrule
\methodvenue{CDPruner}{NeurIPS 2025} & 78.0 & 1622 & 63.2 & 47.7 & 91.5\\
\methodvenue{DivPrune}{CVPR 2025} & 80.0 & \underline{1672} & 70.0 & \underline{48.6} & 95.5\\
\methodvenue{AnchorPrune}{ECCV 2026} & \underline{80.8} & 1658 & \underline{72.6} & \underline{48.6} & \underline{96.3}\\
\covistrowfull \textbf{\fixed} & \textbf{82.2} & \textbf{1687} & \textbf{75.7} & \textbf{48.8} & \textbf{99.5}\\
\midrule
\multicolumn{6}{c}{\textit{$K=128$ ($\downarrow 90.1\%$)}}\\\midrule
\methodvenue{CDPruner}{NeurIPS 2025} & 77.9 & 1572 & 57.7 & 45.9 & 88.1\\
\methodvenue{DivPrune}{CVPR 2025} & 76.9 & 1642 & 66.3 & \textbf{47.3} & 92.3\\
\methodvenue{AnchorPrune}{ECCV 2026} & \underline{79.6} & \textbf{1664} & \underline{68.4} & 46.8 & \underline{93.8}\\
\covistrowfull \textbf{\fixed} & \textbf{80.3} & \underline{1655} & \textbf{71.6} & \underline{47.2} & \textbf{96.3}\\
\bottomrule\end{tabular*}
\label{tab:external-qwen-subsets-end}
\end{table}

\section{Design comparison with related corrections}
\label{app:design}

\begin{table}[htbp]
\caption{Design comparison of attention-level corrections. Weight: the quantity attached to a representative in attention. Bound: the upper bound on that weight. Head: whether the correction differs across heads. Position: treatment of representative positions. Inherit: whether weights are carried from one reduction into the next. Content: treatment of representative features. Entries summarize the cited papers, and ``n.r.'' marks aspects those papers do not report.}
\label{tab:design}\centering\footnotesize
\setlength{\tabcolsep}{2pt}\renewcommand{\arraystretch}{1.1}
\begin{tabular}{@{}>{\raggedright\arraybackslash}p{1.5cm}>{\raggedright\arraybackslash}p{2.5cm}>{\raggedright\arraybackslash}p{1.0cm}>{\raggedright\arraybackslash}p{1.8cm}>{\raggedright\arraybackslash}p{2.4cm}>{\raggedright\arraybackslash}p{1.3cm}>{\raggedright\arraybackslash}p{2.6cm}@{}}\toprule
Method & Weight & Bound & Head & Position & Inherit & Content\\\midrule
ToMe & patch count & none & shared & n.a.\ (ViT) & within ViT & average\\
RESTORE & merge count & none & shared & original, distance factor & n.r. & base method\\
ERA & saliency-weighted count & none & shared & n.r. & no & recycled into anchors\\
CaRe & none & -- & -- & n.r. & no & confidence-gated\\
HiDrop & none & -- & -- & persistent IDs & no & none (trained)\\
\covistrow \method & evidence-conserved transfer & $m_{\max}$ & closed-form gains & original, distance factor & yes & confidence-gated, norm-preserving\\
\bottomrule\end{tabular}
\end{table}

Table~\ref{tab:design} summarizes how ToMe \citep{bolya2023tome}, RESTORE \citep{cho2026restore}, ERA \citep{wang2026era}, CaRe \citep{li2026care}, HiDrop \citep{hidrop2026}, and \method\ treat the quantities that a composable state records. The entries are taken from the cited papers and cover only the methods that modify attention or representative features.

\section{Runtime measurement protocol}
\label{app:runtime}

\begin{table}[t]
\caption{Inference efficiency on LLaVA-NeXT-7B with POPE and natural EOS stopping, measured on an A800 GPU with FP16 and SDPA. $K$ counts content tokens, exactly for \fixed\ and as a layer average for \pro. Mean visual tokens additionally include all 48 newline tokens. Latencies are means over 100 prompts with three repetitions, and peak allocated memory is the maximum over requests. Generation speedup is relative to Vanilla. FLOPs estimate decoder prefill matrix multiplications only.}
\label{tab:efficiency-next}
\centering
\begingroup
\fontsize{9}{10.5}\selectfont
\setlength{\tabcolsep}{3.5pt}
\renewcommand{\arraystretch}{1.08}
\begin{tabular*}{\linewidth}{@{\extracolsep{\fill}}lccccccc@{}}
\toprule
\multicolumn{8}{c}{\textbf{(a) Measured latency and output rate}}\\[2pt]
Method & $K$ & \shortstack[c]{Mean visual\\tokens} & \shortstack[c]{TTFT\\(ms)} & \shortstack[c]{LLM prefill\\(ms)} & \shortstack[c]{Generation\\(ms)} & \shortstack[c]{Generation\\speedup} & \shortstack[c]{Output\\(token/s)}\\
\midrule
Vanilla & All & 2928 & 263.94 & 245.66 & 289.64 & $1.00\times$ & 6.91\\
\midrule
\fixed & 640 & 688 & 178.96 & 150.34 & 207.66 & $1.39\times$ & 9.63\\
\fixed & 320 & 368 & 125.85 & 97.71 & 153.32 & $1.89\times$ & 13.04\\
\fixed & 160 & 208 & 102.72 & 74.41 & 130.25 & $2.22\times$ & 15.36\\
\midrule
\pro & 640 & 688 & 225.41 & 197.16 & 253.89 & $1.14\times$ & 7.88\\
\pro & 320 & 368 & 153.56 & 124.79 & 183.57 & $1.58\times$ & 10.90\\
\pro & 160 & 208 & 122.12 & 93.42 & 150.84 & $1.92\times$ & 13.26\\
\bottomrule
\end{tabular*}

\vspace{7pt}
\begin{tabular*}{\linewidth}{@{\extracolsep{\fill}}lcccc@{}}
\toprule
\multicolumn{5}{c}{\textbf{(b) Estimated computation and measured memory}}\\[2pt]
Method & $K$ & \shortstack[c]{Decoder prefill\\FLOPs (T)} & \shortstack[c]{Peak allocated\\memory (GiB)} & \shortstack[c]{Prefill KV\\cache (MiB)}\\
\midrule
Vanilla & All & 43.408 & 14.908 & 1494.82\\
\midrule
\fixed & 640 & 10.004 & 14.140 & 374.81\\
\fixed & 320 & 5.661 & 13.940 & 214.81\\
\fixed & 160 & 3.530 & 13.861 & 134.81\\
\midrule
\pro & 640 & 10.058 & 14.149 & 374.81\\
\pro & 320 & 5.675 & 13.950 & 214.81\\
\pro & 160 & 3.534 & 13.872 & 134.81\\
\bottomrule
\end{tabular*}
\endgroup
\label{tab:efficiency-next-end}
\end{table}

\paragraph{Workload and budget accounting.}
Table~\ref{tab:efficiency-next} reports the complete measurements of Section~\ref{sec:efficiency}. The uncompressed, fixed, and progressive configurations share the same inputs, software environment, attention implementation, and generation settings. The input resolution is $672\times672$, which yields 2,880 content tokens and 48 newline tokens before compression, and every newline token is retained. The reported visual count is averaged over decoder layers and requests and equals $K+48$ for the fixed counts and the matched progressive layer averages. For \pro, it is the layer average rather than the final-stage count.

\paragraph{Hardware and repetitions.}
All configurations run on one NVIDIA A800-SXM4-80GB GPU with FP16 parameters and SDPA. We first sample 100 POPE prompts with seed 20260916 and use 10 disjoint prompts for warm-up. Then, each configuration processes the same 100 evaluation prompts three times, which gives 300 timed requests over the same 100 examples. Generation is greedy with one beam, KV caching enabled, natural EOS stopping, and a maximum of 128 new tokens. Each repetition starts from the same running statistics and random seed, and the three repetitions of every prompt produce identical outputs.

\paragraph{Timing boundaries.}
The inputs are on the GPU before timing begins. The time to first token (TTFT) runs from this point until the first output token is produced. The LLM prefill runs from the entry into the language decoder until the first model forward pass completes, including the LM head, and the initial compression of LLaVA-NeXT takes place inside this interval. Both durations are measured with CUDA events. The total generation time is the synchronized wall-clock duration of the generation call, including the visual encoding, auxiliary instruction encoding, selection, state composition, attention modulation, and autoregressive generation, and excluding the model loading, image preprocessing, host-to-device transfer, text detokenization, and task scoring.

\paragraph{Aggregation, memory, and FLOPs.}
The latency values are means over the measured requests, and the generation speedup is the uncompressed mean generation time divided by the compressed mean. The output rate (token/s) divides the total number of generated tokens by the total generation time including prefill. The peak allocated memory is the maximum accelerator allocation recorded over all requests rather than the reserved allocator memory, and the prefill KV size counts the keys and values stored across decoder layers including the newline tokens. The prefill FLOPs are estimated from the actual sequence length of each layer, including the text and newline tokens, and count the query, key, value, and output projections, the three SwiGLU projections, and the attention matrix multiplications with two operations per multiply--add. They exclude the visual encoder, the multimodal projector, the LM head, softmax, normalization, and the additional compression operations.

\end{document}